\newtheorem{theorem}{Theorem}[section]
\newtheorem{remark}[theorem]{Remark}
\newtheorem{lemma}[theorem]{Lemma}
\newtheorem{proposition}[theorem]{Proposition}
\theoremstyle{definition}
\DeclareMathOperator*{\argmax}{arg\,max}
\DeclareMathOperator*{\argmin}{arg\,min}
\newcommand{\BR}{\mathrm{BR}}
\newcommand{\SR}{\mathrm{SR}}
\newcommand{\R}{\mathbb{R}}
\newcommand{\N}{\mathbb{N}}
\newcommand{\E}{\mathbb{E}}
\renewcommand{\P}{\mathbb{P}}
\newcommand{\cB}{\mathcal{B}}
\newcommand{\cS}{\mathcal{S}}
\newcommand{\cM}{\mathcal{M}}
\newcommand{\cP}{\mathcal{P}}
\newcommand{\defeq}{\stackrel{\mathrm{def}}{=}}
\newcommand{\mubr}{\mu_{\mathrm{BR}}}
\newcommand{\thetabr}{\theta_{\mathrm{BR}}}
\newcommand{\thetaSE}{\theta_{\mathrm{SE}}}
\newcommand{\hthetaSE}{\hat{\theta}_{\mathrm{SE}}}
\newcommand{\muSE}{{\mu_{\mathrm{SE}}}}
\newcommand{\SRdm}{\SR_L}
\newcommand{\SRag}{\SR_R}
\newcommand{\hSRdm}{\widehat{\SR}_L}
\newcommand{\dist}{\mathrm{dist}}
\title{Who Leads and Who Follows in Strategic Classification?}
\author{\\ Tijana Zrnic\thanks{Equal contribution.}~~~ Eric Mazumdar$^*$~~~  S. Shankar Sastry~~~ Michael I. Jordan\\ \\ University of California, Berkeley}
\date{}
\begin{document}

\maketitle

\begin{abstract}
As predictive models are deployed into the real world, they must increasingly contend with strategic behavior. A growing body of work on strategic classification treats this problem as a \emph{Stackelberg game}: the decision-maker ``leads'' in the game by deploying a model, and the strategic agents ``follow'' by playing their best response to the deployed model.
Importantly, in this framing, the burden of learning is placed solely on the decision-maker, while the agents' best responses are implicitly treated as instantaneous. In this work, we argue that the order of play in strategic classification is fundamentally determined by the relative frequencies at which the decision-maker and the agents adapt to each other's actions. In particular, by generalizing the standard model to allow \emph{both} players to learn over time, we show that a decision-maker that makes updates faster than the agents can reverse the order of play, meaning that the agents lead and the decision-maker follows. We observe in standard learning settings that such a role reversal can be desirable for both the decision-maker and the strategic agents. Finally, we show that a decision-maker with the freedom to choose their update frequency can induce learning dynamics that converge to Stackelberg equilibria with either order of play.
\end{abstract}

\section{Introduction}

Individuals interacting with a decision-making algorithm often adapt strategically to the decision rule in order to achieve a desirable outcome. While such strategic adaptation might increase the individuals' utility, it also breaks the statistical patterns that justify the decision rule's deployment. This widespread phenomenon, often known as Goodhart’s law, can be summarized as: ``When a measure becomes a target, it ceases to be a good measure'' \cite{strathern1997improving}.

A growing body of work known as \emph{strategic classification} \cite{dalvi2004adversarial,bruckner2012static,hardt2016strategic} models this phenomenon as a two-player game in which a decision-maker ``leads'' and strategic agents subsequently ``follow.'' Specifically, the decision-maker first deploys a decision rule, and the agents then take a strategic action so as to optimize their outcome according to the deployed rule, subject to natural manipulation costs. For example, a bank might make lending decisions using applicants' credit scores. Knowing this mechanism, loan applicants might sign up for a large number of credit cards in an effort to strategically increase their credit score at little effort.

One of the main goals in the literature is to develop strategy-robust decision rules; that is, rules that remain meaningful even after the agents have adapted to them. Recent work has studied strategies for finding such rules through \emph{repeated interactions} between the decision-maker and the agents~\cite{dong2018strategic, perdomo2020performative, bechavod2020causal}. In particular, the decision-maker sequentially deploys different rules, and for each they observe the population's response. Under certain regularity conditions, over time the decision-maker can find the optimal solution, defined as the rule that minimizes the decision-maker's loss \emph{after} the agents have responded to the rule.

With the emergence of online platforms such as social media and e-commerce sites, repeated interactions between decision-makers and the population have become ever more prevalent. Online platforms continuously monitor user behavior and update pricing algorithms, recommendation systems, and popularity rankings accordingly. Users, on the other hand, take actions to ensure favorable outcomes in the face of these updates.

A distinctive feature of online platforms is the decision-maker's dominant computational power and abundant data resources, allowing the platform to react to any change in the agents' behavior virtually instantaneously. For example, if fake news content changes over time, automated algorithms can quickly detect this and retrain the classifier to incorporate the shift. It has been observed \citep[see, e.g.,][]{mohlmann2017hands, cotter2019playing, burrell2019when} that, when faced with such ``reactive'' algorithms, strategic agents tend to take actions that \emph{anticipate} the algorithm’s response. That is, through repeated interactions, agents aim to find actions that maximize the agents' utility \emph{after} the decision-maker has responded to these actions. This suggests that the order of play in strategic interactions can in fact be \emph{reversed}, such that the agents ``lead'' while the decision-maker ``follows.''

To give an example of such a reversed strategic interaction, consider ride-sharing platforms that deploy algorithms for determining travel fare as a function of trip length and relevant traffic conditions. These pricing mechanisms are frequently updated based on the current supply and demand, and in particular a dip in the supply of drivers triggers a surge pricing algorithm. \citet{mohlmann2017hands} observed that drivers occasionally coordinate a massive deactivation of drivers from the system, artificially lowering driver supply, only to get back on the platform after some time has passed and the prices have surged. In this interaction, \emph{the drivers essentially make the first move}, while the platform's pricing algorithm reacts to their action. Other examples of users aiming to exert control over algorithms can be found in the context of social network analyses \cite{cotter2019playing,burrell2019when}.

In this work, we argue that the order of play in strategic classification is fundamentally tied to the relative \emph{update frequencies} at which the decision-maker and the strategic agents adapt to each other's actions. In particular,  we show that, by tuning their update frequency appropriately, the decision-maker can select the order of play in the underlying game. Furthermore, in natural settings we show that allowing the strategic agents to play first in the game can actually be preferable for \emph{both} the decision-maker and the agents. This is contrary to the order of play previously studied in the literature, whereby the decision-maker is always assumed to make the first move.

\subsection{Our contribution}

To give an overview of our results, we recall some relevant game-theoretic concepts. In the existing literature strategic classification is modeled as a \emph{Stackelberg game}. A Stackelberg game is a two-person game where one player, called the \emph{leader}, moves first, and the other player, called the \emph{follower}, moves second, with the possibility of adapting to the move of the leader. Previous work assumes that the decision-maker acts as the leader and the agents act as the follower. This means that the decision-maker first deploys a model, and the agents then modify their features at some cost in order to obtain a favorable outcome according to the model. The decision-maker's goal is to find the \emph{Stackelberg equilibrium}---the model that minimizes the decision-maker's loss after the agents have optimally adapted to the model. This optimal reaction by the agents is called the \emph{best response} to the model.

An important parameter that has been largely overlooked in existing work is the rate at which the agents re-evaluate and potentially modify their features. Most works studying the interaction between a decision-maker and strategic agents implicitly assume that, as soon as the model is updated, the data collected from strategic agents follows the best response to the currently deployed model. In the current work we do \emph{not} assume that the agents react instantaneously to model updates. Instead, we assume that there is a natural timescale according to which the agents adapt their features to models.

Allowing agents to adapt gradually to deployed models gives the decision-maker a new dimension upon which to act strategically. Faced with agents that adapt gradually, the decision-maker can \emph{choose} the timescale at which they update the deployed model. In particular, they can choose a rate of updates that is \emph{faster} than the agents' rate, or they can choose a rate that is \emph{slower} than the agents' rate. We call decision-makers that follow a faster clock than the agents \emph{reactive}, and if they follow a slower clock we call them \emph{proactive}. Given that existing work on strategic classification relies on instantaneous agent responses, the previously studied decision-makers are all implicitly proactive.

Our first main result states that the decision-maker's choice of whether to be proactive or reactive fundamentally determines the order of play in strategic classification. Perhaps counterintuitively, by choosing to be \emph{reactive} it is possible for the decision-maker to let the \emph{agents} become the leader in the underlying Stackelberg game. Since changing the order of play changes the game's natural equilibrium concept, this choice can have a potentially important impact on the solution that the decision-maker and agents find. Throughout, we refer to the Stackelberg equilibrium when the decision-maker leads as the \emph{decision-maker's equilibrium} and the Stackelberg equilibrium when the agents lead as the \emph{strategic agents' equilibrium}.

\begin{theorem}[Informal]
\label{thm:informal1}
	If the decision-maker is proactive, the natural dynamics of strategic classification converge to the decision-maker's equilibrium. If the decision-maker is reactive, the natural dynamics of strategic classification converge to the strategic agents' equilibrium.
\end{theorem}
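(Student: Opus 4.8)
The plan is to first make the ``natural dynamics'' precise as a pair of coupled (possibly stochastic) update rules: at each tick of its own clock the decision-maker takes a gradient step on its loss evaluated against the \emph{currently observed} population, and at each tick of their clock the agents take a step toward best-responding to the \emph{currently deployed} model $\theta$. Writing the state of the slow player and the fast player as two sequences updated with effective step sizes whose ratio is governed by the relative update frequency, one sees that ``proactive'' means the decision-maker's effective step size is asymptotically negligible relative to the agents', and ``reactive'' means the reverse. The central structural observation is that this is exactly a \emph{two-timescale} system, and I would prove the theorem via the ODE method for two-timescale stochastic approximation (Borkar-style), together with an identification of the limiting fixed points with the two Stackelberg equilibria.

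Next I would state the regularity assumptions under which the argument closes. These are: smoothness of the decision-maker's loss $\ell(\theta,\mu)$ and of the agents' utility in $(\theta,\mu)$; a condition ensuring that, for \emph{every} fixed value of the slow player's variable, the fast player's update rule admits a unique globally asymptotically stable equilibrium---namely the best response---the natural such condition being (strong) convexity of the fast player's objective in its own variable plus Lipschitzness of the induced best-response map, so that the inner ODE is effectively a contraction; and a stability condition on the \emph{reduced} (slow) dynamics, i.e.\ the usual performative-prediction-type assumption that the composed map $\theta\mapsto\nabla_\theta\ell(\theta,\BR(\theta))$ (respectively its analogue with the agents leading) is well-behaved enough to have a unique stable zero. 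Under these assumptions the standard two-timescale lemma gives: (i) on the fast timescale the slow variable is essentially frozen, so the fast iterate tracks $\BR(\cdot)$ of the frozen slow state; (ii) plugging in this quasi-static approximation, the slow iterate follows the reduced ODE; (iii) the reduced ODE converges to its stable fixed point.

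Step (iii) is where the two cases split, and identifying the fixed points is the conceptual heart of the proof. When the decision-maker is proactive it is the slow player, the agents equilibrate to $\BR(\theta)$ between its updates, and the reduced ODE is $\dot\theta=-\nabla_\theta\ell(\theta,\BR(\theta))$, whose stationary point is \emph{by definition} the decision-maker's equilibrium. When the decision-maker is reactive the roles swap: the agents are the slow player, the decision-maker equilibrates to its own best response to the agents' action, and the slow ODE's fixed point is \emph{by definition} the strategic agents' equilibrium. Everything else---martingale-difference noise from sampling, boundedness of the iterates, passing from the discrete recursion to the ODE---is bookkeeping once the assumptions above are granted.

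The main obstacle I anticipate is \textbf{guaranteeing that the fast player genuinely equilibrates to the relevant best response}. When the decision-maker is the fast player, its loss depends on $\theta$ both directly and through $\mu(\theta)$, so one must be careful about which gradient it follows (the naive gradient holding $\mu$ fixed versus the total/performative gradient) and whether its inner dynamics converge to the intended best response rather than to a spurious stationary point; this is what forces a convexity-in-$\theta$ assumption on the loss. Symmetrically, when the agents are the fast player one needs their best response to each model to be unique and continuous in $\theta$, which is delicate if manipulation costs are non-convex or the best response is set-valued, and so requires the standard strictly-convex-cost regularity. A secondary, more technical, difficulty is that the update-frequency ratio is an \emph{integer} rather than a continuous parameter tending to a limit: one either takes a sequence of increasingly separated timescales and argues in the limit, or shows directly that a sufficiently large but finite separation already drives the dynamics into a shrinking neighborhood of the appropriate equilibrium.
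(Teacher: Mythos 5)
There is a genuine gap, and it sits exactly at the step you call the conceptual heart of the proof: the identification of the slow ODE's fixed point with the leader's Stackelberg equilibrium. Your natural dynamics have the slow player take gradient steps on its loss ``evaluated against the currently observed population,'' i.e.\ the \emph{partial} gradient $\partial_\theta L(\mu,\theta)$ with $\mu$ frozen at its observed value. After the two-timescale reduction this yields the ODE $\dot\theta=-\partial_\theta L(\mubr(\theta),\theta)$, whose stationary points satisfy $\partial_\theta L(\mubr(\theta),\theta)=0$. That is a \emph{performatively stable} point (a fixed point of repeated risk minimization), not the Stackelberg equilibrium $\thetaSE=\argmin_\theta L(\mubr(\theta),\theta)$; the latter requires the \emph{total} derivative, which carries the extra term $(D\mubr(\theta))^\top\partial_\mu L$, and these two notions genuinely differ outside special cases. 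To make the fixed point come out as $\thetaSE$ you would need the slow player to differentiate through the opponent's best-response map, which contradicts both your own definition of the dynamics and the paper's information model (neither player has access to the other's loss function, let alone the Jacobian of $\mubr$ or $\thetabr$). The same objection applies symmetrically to the reactive case, where the slow agents would need $D\thetabr(\mu)$ to descend $\SRag$. You flag a gradient ambiguity, but only for the \emph{fast} player, where it is harmless (the fast player faces a frozen opponent and ordinary gradient descent on its own convex loss is exactly right); the real difficulty is for the slow player.

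The paper closes this gap by changing the slow player's algorithm, not by a two-timescale ODE argument. The slow decision-maker runs a derivative-free (bandit) method \`a la Flaxman et al.\ on the composed objective $\SRdm$, whose single-point gradient estimator is an unbiased gradient of a smoothed version of $\SRdm$ and hence needs no knowledge of $\mubr$; convexity of $\SRdm$ is assumed, and a finite-time regret analysis gives $O(\sqrt{d}T^{3/4})$ plus an error term $\sum_t\E\|\bar\mu_t-\mubr(\theta_t)\|_2$ that vanishes as $\tau\to\infty$ by the agents' no-regret property (Theorems \ref{thm:flaxman_general} and \ref{thm:flaxman_PL}). In the reactive case the slow agents are simply \emph{assumed} to be no-regret against the induced loss $R(\mu,\theta_{\mathrm{R}}(\mu))$ (achievable by bandit strategies), and the argument is a continuity-plus-induction comparison of the realized trajectory with the idealized one where the decision-maker plays exact best responses (Theorem \ref{thm:agents_lead}). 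Your secondary concerns (integer timescale separation, uniqueness and Lipschitzness of best responses) are real but are handled in the paper by taking $\tau\to\infty$ limits, or by the PL-based Theorem \ref{thm:flaxman_PL} for constant $\tau$; the ODE machinery you propose appears only in the appendix, and even there for the zeroth-order update, not for naive partial-gradient dynamics.
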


To provide some intuition for Theorem \ref{thm:informal1}, imagine that one player makes updates with far greater frequency than the other player. This allows the faster player to essentially converge to their best response between any two updates of the slower player. The slower player is then faced with a Stackelberg problem: they have to choose an action, expecting that the faster player will react optimally after their update. As a result, the optimal choice for the slower player is to drive the dynamics toward the Stackelberg equilibrium where they act as the leader.

It is well known (see, e.g., Section 4.5 in \cite{bacsar1998dynamic}) that under general losses, either player can prefer to lead or follow in a Stackelberg game, meaning that a player achieves lower loss at the corresponding equilibrium. Our second main takeaway is that in classic learning problems it can be preferable for \emph{both} the decision-maker and the agents if the agents lead in the game and the decision-maker follows. One setting where this phenomenon arises is logistic regression with static labels and manipulable features.

\begin{theorem}[Informal]
\label{thm:informal2}
Suppose that the decision-maker implements a logistic regression model and the strategic agents aim to maximize their predicted outcome. Then, both the decision-maker and the strategic agents prefer the strategic agents' equilibrium to the decision-maker's equilibrium.
\end{theorem}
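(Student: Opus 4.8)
The plan is to exploit a strong asymmetry between the two equilibria: the strategic agents' equilibrium turns out to be essentially trivial to describe, so that the whole comparison reduces to two inequalities about the decision-maker's equilibrium. Throughout I would work with logistic regression including an intercept term. When the decision-maker deploys parameters $\theta$ it incurs population logistic loss $L(\theta)=\E_{(x,y)\sim\D}[\ell(\theta^\top \tilde x,y)]$, where $\ell$ is the logistic loss and $\tilde x$ is the feature vector after the agents adapt; an agent with true features $x$ facing $\theta$ moves to $\argmax_{z}\{\sigma(\theta^\top z)-\frac{1}{2\gamma}\|z-x\|^2\}$, i.e.\ it maximizes its predicted probability net of a quadratic manipulation cost. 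The first step is to write down the first-order characterization of this best response and to observe that it shifts every agent's score $\theta^\top z$ strictly upward, by a data-dependent amount. I write $(\thetaSE,\muSE)$ for the Stackelberg equilibrium in which the decision-maker leads, and I model the strategic agents' equilibrium as the equilibrium in which the population --- treated as a single player with aggregate utility --- commits to a feature modification, after which the decision-maker deploys the logistic-loss minimizer for the resulting feature distribution.

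\textbf{Collapsing the strategic agents' equilibrium.} The key structural fact is that a logistic-loss minimizer that includes an intercept is mean-calibrated: for any feature distribution, the minimizer $\hat\theta$ satisfies $\E[\sigma(\hat\theta^\top\tilde x)]=\E[y]=\P(y=1)$, which is exactly the intercept coordinate of the stationarity condition $\E[(\sigma(\hat\theta^\top\tilde x)-y)\tilde x]=0$. Hence, whatever modification the leading agents commit to, the decision-maker's response produces aggregate predicted probability exactly $\P(y=1)$, so the agents' aggregate utility is $\P(y=1)$ minus a nonnegative manipulation cost. The leading agents therefore optimally commit to the identity map (no manipulation): the strategic agents' equilibrium has the decision-maker play the plain logistic MLE $\theta_0$ on $\D$, the agents' utility there equals $\P(y=1)$, and the decision-maker's loss there equals $L^\star:=\min_\theta\E_\D[\ell(\theta^\top x,y)]$, the non-strategic optimum.

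\textbf{The two inequalities.} It then remains to prove (a) $L(\thetaSE)\ge L^\star$ and (b) that the agents' utility from best-responding to $\thetaSE$ is at most $\P(y=1)$; combined with the previous paragraph, (a) says the decision-maker weakly prefers the strategic agents' equilibrium and (b) says the agents do. For (a) I would use a rotational reduction --- all adaptation lies along the direction of $\thetaSE$, so the problem becomes one-dimensional in the induced score --- together with a monotonicity lemma stating that the best attainable logistic loss is nondecreasing in the magnitude of a score-inflating perturbation, which keeps the decision-maker's equilibrium loss at or above $L^\star$. For (b) it suffices to show $\E[\sigma(\thetaSE^\top\tilde x)]\le\P(y=1)$ on the manipulated population; here the first-order condition defining $\thetaSE$ --- which, unlike the plain MLE, also corrects for how changing $\theta$ changes the adaptation --- makes $\thetaSE$ ``conservative'': its intercept coordinate forces the average residual $\E[\sigma(\thetaSE^\top\tilde x)-y]$ to equal $-\E[(\sigma(\thetaSE^\top\tilde x)-y)\,c(\thetaSE^\top\tilde x)]$ for an explicit factor $c$ whose sign is that of $-\thetaSE^\top\tilde x$, and this right-hand side is nonnegative whenever $\thetaSE$ does better than chance. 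Subtracting the nonnegative manipulation cost then yields (b).

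\textbf{Main obstacle.} The hard part is exactly this decision-maker's-equilibrium side, inequalities (a) and (b): there is no one-line envelope argument, because $\thetaSE$ solves a coupled stationarity condition in which the deployed model feeds back into the adaptation, whereas the strategic agents' equilibrium collapsed for free via calibration. I expect to need the one-dimensional reduction above, together with either a genuine monotonicity lemma for the best logistic loss under score-inflating perturbations or a mild regularity assumption on $\D$ (for instance, symmetry of the two class-conditional distributions) under which $\thetaSE$ and the induced $\muSE$ become explicitly computable; the ``better than chance'' hypothesis appearing in (b) would be discharged by the same reduction.
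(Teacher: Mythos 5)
There is a genuine gap, and it starts with the model. In the paper's logistic setting the agents' utility is the linear score shift $-R(\mu,\theta)=\mu^\top\theta$ earned by the $y=0$ subpopulation under a norm constraint $\|\mu\|_2\le B$; it is not the population-average predicted probability under a quadratic cost. Your central ``calibration collapse'' --- that the intercept stationarity condition forces $\E[\sigma(\hat\theta^\top\tilde x)]=\P(y=1)$ for any manipulation, so the leading agents optimally do nothing and the decision-maker lands at the non-strategic optimum --- therefore does not apply, and its conclusion is actually false here: with $R(\mu,\theta)=-\mu^\top\theta$ the agents get $0$ from not manipulating, whereas the paper shows $\SRag(\muSE)\le R(\mubr(\thetaSE),\thetaSE)=-B\|\thetaSE\|_2<0$, so the agents' equilibrium involves strictly nontrivial manipulation. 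Even granting your model, the collapse would trivialize the agents' side of the claim while leaving the two inequalities you label (a) and (b) --- which you yourself identify as the hard part --- resting on an unproven ``monotonicity lemma'' for score-inflating perturbations and a possible extra symmetry assumption on $\D$. As written, the proposal does not close the argument.

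For contrast, the paper resolves both directions with short exact arguments that need neither an intercept nor distributional assumptions. For the decision-maker: $L(\mu,\theta)$ is increasing in $\mu^\top\theta$, so $\SRdm(\theta)=\max_{\mu\in\cM}L(\mu,\theta)$, and hence $L(\muSE,\thetabr(\muSE))\le L(\muSE,\thetaSE)\le\max_\mu L(\mu,\thetaSE)=\SRdm(\thetaSE)$. For the agents: differentiating $\SRdm(\theta)=L(\theta B/\|\theta\|_2,\theta)$ shows that $\nabla\SRdm(\thetaSE)=0$ coincides with $\nabla_\theta L(\mubr(\thetaSE),\thetaSE)=0$, so by convexity of $L$ in $\theta$ the point $\thetaSE$ is the decision-maker's best response to $\mubr(\thetaSE)$; the leading agents can therefore play $\mubr(\thetaSE)$ and reproduce exactly the outcome of the decision-maker's equilibrium, whence $\SRag(\muSE)\le R(\mubr(\thetaSE),\thetaSE)$. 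If you want to salvage your approach, you would need to restate the agents' objective to match the paper's and replace the calibration step with an argument of this fixed-point type.
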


Theorem \ref{thm:informal2} suggests that there are other meaningful equilibria than those previously studied in the literature. Moreover, Theorem \ref{thm:informal1} proves that such equilibria can naturally be achieved if the decision-maker is \emph{reactive} and agents are no-regret. Seeing that the decision-maker's equilibrium has also been shown to imply a cost to social welfare~\cite{hu2019disparate, milli2019social}, our results pave the way for studying new, potentially more desirable solutions in strategic settings.

\subsection{Related work}

Our work builds on the growing literature on strategic classification \citep[see, e.g.,][and the references therein]{dalvi2004adversarial, bruckner2012static,hardt2016strategic, dong2018strategic, hu2019disparate, milli2019social, kleinberg2019classifiers, haghtalab2020maximizing, miller2019strategic,braverman_strategic,strat_class_dark,levanon2021strategic,ahmadi2021strategic,PACLearning}. In these works, a decision-maker seeks to deploy a predictive model in an environment where strategic agents attempt to respond in a \emph{post hoc} manner to maximize their utility given the model. Given this framework, a number of recent works have studied natural learning dynamics for learning models that are robust to strategic manipulation of the data  \cite{dong2018strategic, hu2019disparate, chen2020learning, shavit2020learning, bechavod2020causal,levanon2021strategic}. Such problems have also been studied in the more general setting of performative prediction \cite{perdomo2020performative, mendler2020stochastic, brown2020performative, miller2021outside, izzo2021learn}. Notably, all of these works model the interaction between the decision-maker and the agents as a repeated Stackelberg game~\cite{Stackelberg} in which the decision-maker leads and the agents follow, and these roles are immutable. This follows in a long and continuing line of work in game theory on playing in games with hierarchies of play~\cite{bacsar1998dynamic,ball2019scoring,Infinite_Stack}.

Learning in Stackelberg games is itself a growing area in game-theoretic machine learning. Recent work has analyzed the asymptotic convergence of gradient-based learning algorithms to local notions of Stackelberg equilibria \cite{fiez2020implicit,FiezFinite,dong2018strategic} assuming a fixed order of play. The emphasis has largely been on zero-sum Stackelberg games, due to their structure and relevance for min-max optimization and adversarial learning \cite{FiezFinite,jinminmax}. Such results often build upon work on two-timescale stochastic approximations \cite{borkar_SA,TTS_SA} in which tools from dynamical systems theory are used to analyze the limiting behavior of coupled stochastically perturbed dynamical systems evolving on different timescales.

In this paper we depart from this prior work in both our problem formulation and our analysis of learning algorithms. To begin, the problem we consider is asymmetric: one player, namely the strategic agents, makes updates at a fixed frequency, while the opposing player, the decision-maker, can strategically choose their update frequency as a function of the equilibrium to which they wish to converge. Thus, unlike prior work, the choice of timescale becomes a strategic choice on the part of the decision-maker and consequently the order of play in the Stackelberg game is not predetermined. 

Our analysis of learning algorithms is also more involved than previous works since \emph{both} the leader and follower make use of learning algorithms. Indeed, throughout our paper we assume that the population of agents is using \emph{no-regret learning} algorithms, common in economics and decision theory \cite{hannan1957approximation, littlestone1994weighted, foster1997calibrated, hart2000simple, blum2006routing, blum2008regret, roughgarden2010algorithmic}. This captures the reality that agents \emph{gradually} adapt to the decision-maker's actions on some natural timescale. In contrast, existing literature on strategic classification and learning in Stackelberg games assume that the strategic agents or followers are always playing their best response to the leader's action. One exception is recent work \cite{jagadeesan2021alternative} that replaces agents' best responses with \emph{noisy} responses, which are best responses to a perturbed version of the deployed model. While this model does capture imperfect agent behavior, it does not address the effect of learning dynamics and relative update frequencies on the agents' actions. 

Given this assumption on the agents' learning rules, we then show how decision-makers who reason strategically about their relative update frequency can use simple learning algorithms and still be guaranteed to converge to \emph{game-theoretically meaningful} equilibria. Our analysis complements a line of recent work on understanding gradient-based learning in continuous games, but crucially does not assume that the two players play simultaneously~\citep[see, e.g.,][]{BravoBandits,Mazumdar_games}). Instead, we analyze cases where \emph{both} the agents and decision-maker learn over time, and play asynchronously.

This represents a departure from work in economics on understanding equilibrium strategies in games and the benefits of different orders of play \citep[see, e.g.,][]{ball2019scoring,Infinite_Stack} in Stackelberg games. In our problem formulation, players do not have fixed strategies but must learn a strategy by learning about both their loss and their opponent's through repeated interaction. 

Some of our analyses touch on ideas from online convex optimization \cite{zinkevich2003online}, specifically derivative-free optimization \cite{flaxman2005, agarwal2013stochastic, bubeck2017kernel, lattimore2021improved}. Several works \cite{dong2018strategic,miller2021outside} within strategic classification and performative prediction apply similar zeroth-order tools to find the decision-maker's equilibrium, but once again assuming immediate best responses to deployed models. We show that such algorithms are versatile enough to be used without such strong assumptions while still having strong convergence guarantees.

\subsection{Organization}
This paper is organized as follows. 
In Section~\ref{sec:model} we introduce our model for studying the interaction between a decision-maker and strategic agents that adapt gradually to deployed decision rules, and formalize the concept of \emph{reactive} and \emph{proactive} decision-makers. In Section~\ref{sec:dynamics} we show that under natural assumptions, a proactive or reactive decision-maker can efficiently drive the game towards the decision-maker's or agents' equilibrium, respectively, by using simple learning rules. We follow this in Section~\ref{sec:order_of_play} by showing how in simple learning problems inducing a certain order of play can benefit \emph{both} the decision-maker and the agents.
In Section~\ref{sec:experiments} we present empirical results that corroborate our theory and emphasize how having the agents lead is more desired even in previously studied models of strategic classification. 
We conclude in Section~\ref{sec:discussion} with a brief discussion of the questions our proposed model raises and some directions for future work.

\section{Model}
\label{sec:model}
We start with an overview of the basic concepts and notation, and then discuss the main conceptual novelty of our work---implications of the decision-maker's and strategic agents' update frequencies.

\subsection{Basic concepts and notation}

Throughout we denote by $z = (x,y)$ the (feature, label) pairs corresponding to the strategic agents' data. We assume that the decision-maker chooses a model parameterized by $\theta\in\Theta\subseteq \R^d$, where $\Theta$ is convex and closed, and that their loss is measured via a convex loss function $\ell(z;\theta)$. The strategic agents measure loss via a function $r(z;\theta)$ and, collectively, they form a distribution in the family $\{\cP(\mu) : \mu\in \cM\subseteq \R^m\}$, where $\cM$ is convex and closed. Here, $\mu$ denotes the aggregate summary of all agents' actions. The data observed by the decision-maker is $\cP(\mu)$ and as such varies depending on the agents' aggregate action $\mu$.

We denote $L(\mu,\theta) = \E_{z\sim\cP(\mu)}\ell(z;\theta)$, and $R(\mu,\theta) = \E_{z\sim\cP(\mu)}r(z;\theta)$. With this, the agents' best response is given by $\mubr(\theta) = \argmin_\mu R(\mu,\theta)$ and the decision-maker's best response is given by $\thetabr(\mu) = \argmin_\theta L(\mu,\theta)$. We assume that the best responses for both players are always unique.

If the decision-maker acts as the leader in the game, their incurred \emph{Stackelberg risk} is equal to $\SRdm(\theta) = L(\mubr(\theta),\theta)$. Similarly, we let $\SRag(\mu) = R(\mu,\thetabr(\mu))$ denote the Stackelberg risk of the agents when they lead in the game. We let $\thetaSE$ and $\muSE$ denote the decision-maker's and strategic agents' equilibrium, respectively: $\thetaSE = \argmin_{\theta}\SRdm(\theta)$ and $\muSE = \argmin_\mu \SRag(\mu)$. We assume that each equilibrium is unique. Note that the two players cannot compute their respective equilibrium ``offline'', as we do not assume they have access to the other player's loss function.

As discussed earlier, we assume that there is an underlying timescale according to which the agents re-evaluate their features. Specifically, after each time interval of fixed length, the agents observe the currently deployed model, as well as their loss according to that model, and possibly modify their features accordingly. The decision-maker, aware of the agents' timescale, can choose to be \emph{proactive}, meaning they choose an update frequency slower than that of the agents, or \emph{reactive}, meaning they choose a higher update frequency. This power asymmetry that allows the decision-maker to choose a timescale is characteristic of online platforms with abundant resources. Implicit in our setup in an assumption that the decision-maker can evaluate the agents' update frequency and adapt to it. Big IT companies generally employ monitoring systems that detect distribution shift; at a high level, the rate of distribution shift can be thought of as the rate of agents’ adaptation.

We use the term \emph{epoch} to refer to a period between two updates of the \emph{slower} player (which player is the slower one is up to the decision-maker). In particular, the $t$-th epoch starts with a single update of the slower player, followed by $\tau\in\N$ updates of the faster player. The rate $\tau$ is fixed.

We use $\theta_t$ and $\mu_t$ to denote the iterate of the decision-maker and the strategic agents, respectively, \emph{at the end} of epoch $t$. Furthermore, for the faster player, we use double-indexing to denote the within-epoch iterates. For example, if the decision-maker is the faster player, we use $\{\theta_{t,j}\}_{j=1}^{\tau}$ to denote their iterates within epoch $t$. Note that $\theta_{t,\tau} \equiv \theta_t$. We also let $\bar \theta_t = \frac{1}{\tau}\sum_{j=1}^{\tau} \theta_{t,j}$. We adopt similar notation when the agents have a higher update frequency.

\subsection{Rational agents in the face of varying update frequencies}

Adopting the distinction between reactive and proactive decision-makers, it is crucial to re-evaluate what it means for the strategic agents to behave rationally. We argue that rational behavior must depend on the relative update frequencies of the decision-maker and the agents.

As a running toy example, consider a decision-maker building a model with the goal of distinguishing between spam and legitimate emails. The population of strategic agents aims to craft emails that bypass the decision-maker's spam filter. Here, $\mu$ could determine the number of words in an email, types of words used, etc. The loss $R(\mu,\theta)$ could be some decreasing function of the number of daily clicks on email content, given spam filter $\theta$ and emails crafted according to $\mu$. 
In the following discussion assume that the timescales of the decision-maker and the agents have a significant separation: the decision-maker is either ``significantly faster'' or ``significantly slower.'' As we will make more formal later on, our results will generally assume a sufficiently large separation between the timescales. In the following paragraphs we informally describe rational agent behavior in the context of update frequencies.

\paragraph{Proactive decision-maker.} First, assume that the decision-maker is proactive, and suppose they deploy model $\theta$. By definition, this model remains in place for a relatively long time, as observed by the agents. Then, by choosing features $\mu$, the agents experience loss $R(\mu,\theta)$ during that period, and as a result the most rational decision is to choose features $\mubr(\theta)$. In the running example, if $\theta$ is a spam filter that is in place for many months, it is rational for spammers to craft emails that are most likely to bypass filter $\theta$. This is just the usual best response---as we alluded to earlier, when the decision-maker is proactive, our setup is similar to that of previous work.

\paragraph{Reactive decision-maker.} Now assume that the decision-maker is reactive, and suppose the agents observe $\theta$ as the current model. Then, by setting $\mu$, the agents do \emph{not} experience loss $R(\mu,\theta)$. Rather, their loss is $R(\mu,\theta_{\mathrm{R}}(\mu))$, where $\theta_{\mathrm{R}}(\mu)$ denotes the decision-maker's \emph{reaction} to the agents' choice $\mu$. 
In the spam example, suppose that the decision-maker can aggregate and process data quickly, and retrains the spam filter every couple of hours. Moreover, suppose that the spammers adapt their emails only once per week. Then, the agents' loss after choosing $\mu$ (evaluated weekly) is determined by the number of clicks allowed by the updated filter $\theta_{\mathrm{R}}(\mu)$, \emph{not} the old filter $\theta$. Therefore, if the agents could predict $\theta_{\mathrm{R}}(\mu)$, the agents' optimal decision would be to choose $\argmin_\mu R(\mu,\theta_{\mathrm{R}}(\mu))$. In other words, rather than choose the best response to $\theta$, rational agents interacting with a reactive decision-maker would choose $\mu$ so that it triggers the \emph{best possible reaction} from $\theta$.

We formalize this intuitive behavior by assuming that the agents are \emph{no-regret} learners \cite{roughgarden2010algorithmic}. This essentially means that their average regret vanishes as the number of actions grows. More formally, we assume the following behavior depending on the relative update frequencies:
\begin{itemize}[leftmargin=*]
    \item If the decision-maker is proactive, then for any $\theta_t$, the agents' strategy ensures:
\begin{equation}
\label{eq:rational_dmleads}
\frac{1}{\tau}\sum_{j=1}^{\tau} \E R(\mu_{t,j},\theta_t) - \min_\mu  R(\mu, \theta_t) \rightarrow 0 \text{ as } \tau\rightarrow \infty.
\tag{A1}
\end{equation}
    \item If the decision-maker is reactive, then for any response function $\theta_{\mathrm{R}}(\mu)$, the agents' strategy ensures:
\begin{equation}
\label{eq:rational_agentslead}
\frac{1}{T}\sum_{t=1}^T \E R(\mu_t,\theta_{\mathrm{R}}(\mu_t)) - \min_\mu  R(\mu, \theta_{\mathrm{R}}(\mu)) \rightarrow 0 \text{ as } T\rightarrow \infty,
\tag{A2}
\end{equation}
\end{itemize}
whenever such a strategy exists. If the agents' loss is convex, the first condition can be satisfied by simple gradient descent. In fact, gradient descent would typically imply an even stronger guarantee, namely the convergence of the iterates, $\mu_{t,\tau}\rightarrow \mubr(\theta_t)$. The second condition can be satisfied by various bandit strategies if $R(\mu,\theta_{\mathrm{R}}(\mu))$ is Lipschitz and $\cM$ is bounded (and we will impose these conditions explicitly in the following section). That said, it seems hardly suitable to assume that the agents run a well-specified optimization procedure. For this reason, we will for the most part avoid making explicit algorithmic assumptions on the agents' strategy and our main takeaways will only rely on rational agent behavior in the limit, as in equations $\eqref{eq:rational_dmleads}$ and $\eqref{eq:rational_agentslead}$.

\section{Learning dynamics}
\label{sec:dynamics}

In this section, we study the limiting behavior of the interaction between the decision-maker and the strategic agents. We show that, by running classical optimization algorithms, the decision-maker can drive the interaction to a Stackelberg equilibrium with either player acting as the leader.

\subsection{Convergence to decision-maker's equilibrium}

In general, we do not expect the decision-maker to be able to compute derivatives of the function $\SRdm$. For this reason, to achieve convergence to the decision-maker's equilibrium, we consider running a derivative-free method. One such solution is the ``gradient descent without a gradient'' algorithm of \citet{flaxman2005}. Past work \cite{dong2018strategic, miller2021outside} also considers this algorithm with the goal of optimizing $\SRdm$, but it assumes instantaneous agent responses. In other words, it assumes query access to $\SRdm$ directly, while we consider perturbations due to imperfect agent responses. It is worth noting that, under further assumptions, one could apply more efficient two-stage approaches \cite{miller2021outside, izzo2021learn} that approximate the gradients of $\SRdm$ by first estimating the best-response map $\mubr$.

Specifically, we let the decision-maker run the following update:
\begin{equation}
\label{eq:flaxman}
\phi_{t+1} = \Pi_{\Theta} (\phi_t - \eta_t \frac{d}{\delta} L(\bar \mu_t, \phi_t + \delta u_t)u_t), \text{ where } u_t\sim \text{Unif}\left(\cS^{d-1}\right).
\end{equation}
Here, $\Pi_{\Theta}$ denotes the Euclidean projection, $\text{Unif}\left(\cS^{d-1}\right)$ denotes the uniform distribution on the unit sphere in $\R^d$, $\eta_t$ is a non-increasing step size sequence, and $\delta>0$ is a fixed hyperparameter. The deployed model in the $t$-th epoch is set as $\theta_t = \phi_t + \delta u_t$.\footnote{Technically, this assumes that we can deploy a model in a $\delta$-ball around $\Theta$. Another solution would be to use a projection onto a small contraction of $\Theta$ in equation \eqref{eq:flaxman}. This is a minor technical hurdle common in the literature. The rate in Theorem \ref{thm:flaxman_general} is unaffected by the choice of solution to this technical point.}

We provide convergence guarantees assuming that the decision-maker's Stackelberg risk $\SRdm$ is convex. While this condition doesn't follow from convexity of the loss $\ell(z;\theta)$ alone, previous work has established conditions for convexity of this objective for different learning problems and agent utilities \cite{dong2018strategic, miller2021outside}. For example, in the linear and logistic regression examples discussed in the following section, the decision-maker's Stackelberg risk will be convex.

\begin{theorem}
\label{thm:flaxman_general}
Denote by $D_\Theta$ the diameter of $\Theta$, and suppose that $|L(\mu,\theta)|\leq B$ for all $\mu,\theta$. Furthermore, suppose that $\SRdm$ is convex and $\beta$-Lipschitz and $L(\mu,\theta)$ is $\beta_\mu$-Lipschitz in the first entry for all $\theta$. Then, if the decision-maker runs update \eqref{eq:flaxman} with $\eta_t = \eta_0 d^{-\frac{1}{2}} t^{-\frac{3}{4}}$ and $\delta = \delta_0 d^{\frac{1}{2}}T^{-1/4}$, it holds that
\[\sum_{t=1}^T (\E[\SRdm(\theta_t)] - \SRdm(\thetaSE))  \leq \left(\frac{D_\Theta^2}{2\eta_0} + \frac{2 B^2}{\delta_0^2}\right) \sqrt{d}T^{3/4} + \beta_\mu D_\Theta \sum_{t=1}^T \mathbb{E}\|\bar\mu_t - \mubr(\theta_t)\|_2.\]
Moreover, assuming that the agents are rational \eqref{eq:rational_dmleads} and $\cM$ is compact, we have
\begin{align}
\label{eq:flaxman_rate}
\lim_{\tau\rightarrow\infty }\sum_{t=1}^T (\E[\SRdm(\theta_t)] - \SRdm(\thetaSE))  \leq \left(\frac{D_\Theta^2}{2\eta_0} + \frac{2 B^2}{\delta_0^2}\right) \sqrt{d}T^{3/4}.
\end{align}
\end{theorem}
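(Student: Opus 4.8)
\quad The plan is to view \eqref{eq:flaxman} as the ``gradient descent without a gradient'' scheme of \citet{flaxman2005} run against a \emph{noisy} zeroth-order oracle, where the noise is exactly the discrepancy between the agents' aggregate response $\bar\mu_t$ and the best response $\mubr(\theta_t)$. Introduce the $\delta$-smoothed Stackelberg risk $\widehat{\SR}_{L,\delta}(\phi) \defeq \E_{v\sim\mathrm{Unif}(\cB^{d})}[\SRdm(\phi+\delta v)]$, with $\cB^{d}$ the unit ball, for which three standard facts hold: it is convex because $\SRdm$ is; $|\widehat{\SR}_{L,\delta}(\phi)-\SRdm(\phi)|\le\beta\delta$ because $\SRdm$ is $\beta$-Lipschitz; and, by Stokes' theorem, $\nabla\widehat{\SR}_{L,\delta}(\phi)=\tfrac{d}{\delta}\,\E_{u\sim\mathrm{Unif}(\cS^{d-1})}[\SRdm(\phi+\delta u)\,u]$. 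Since the deployed model in epoch $t$ is $\theta_t=\phi_t+\delta u_t$, the direction $\tfrac{d}{\delta}\SRdm(\theta_t)u_t$ is, conditionally on the history through epoch $t$, an unbiased estimate of $\nabla\widehat{\SR}_{L,\delta}(\phi_t)$. The oracle actually returns $L(\bar\mu_t,\theta_t)$ rather than $\SRdm(\theta_t)=L(\mubr(\theta_t),\theta_t)$, and $\beta_\mu$-Lipschitzness of $L(\cdot,\theta)$ gives $|L(\bar\mu_t,\theta_t)-\SRdm(\theta_t)|\le\beta_\mu\|\bar\mu_t-\mubr(\theta_t)\|_2$; hence the update direction decomposes as $g_t=\tfrac{d}{\delta}\SRdm(\theta_t)u_t+\xi_t$ with $\|\xi_t\|_2\le\tfrac{d}{\delta}\beta_\mu\|\bar\mu_t-\mubr(\theta_t)\|_2$, and $\|g_t\|_2\le dB/\delta$ from $|L(\mu,\theta)|\le B$.

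Next I would run the textbook projected-OGD argument on $\{\phi_t\}$: from $\|\phi_{t+1}-\thetaSE\|_2^2\le\|\phi_t-\thetaSE\|_2^2-2\eta_t\langle g_t,\phi_t-\thetaSE\rangle+\eta_t^2\|g_t\|_2^2$, isolate $\langle g_t,\phi_t-\thetaSE\rangle$ and split it into the unbiased piece and $\langle\xi_t,\phi_t-\thetaSE\rangle$. For the first, $\E_t\langle\tfrac{d}{\delta}\SRdm(\theta_t)u_t,\phi_t-\thetaSE\rangle=\langle\nabla\widehat{\SR}_{L,\delta}(\phi_t),\phi_t-\thetaSE\rangle\ge\widehat{\SR}_{L,\delta}(\phi_t)-\widehat{\SR}_{L,\delta}(\thetaSE)$ by convexity of $\widehat{\SR}_{L,\delta}$; for the second, $|\langle\xi_t,\phi_t-\thetaSE\rangle|\le\|\xi_t\|_2 D_\Theta$ by Cauchy--Schwarz. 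Summing over $t$, taking expectations, and telescoping the $\tfrac1{2\eta_t}$-weighted differences using that $\eta_t$ is non-increasing and $\|\phi_t-\thetaSE\|_2\le D_\Theta$ gives $\sum_t(\E\widehat{\SR}_{L,\delta}(\phi_t)-\widehat{\SR}_{L,\delta}(\thetaSE))\le\tfrac{D_\Theta^2}{2\eta_T}+\tfrac{d^2B^2}{2\delta^2}\sum_t\eta_t+\tfrac{d}{\delta}\beta_\mu D_\Theta\sum_t\E\|\bar\mu_t-\mubr(\theta_t)\|_2$. I then pass to the true risk via $|\SRdm(\theta_t)-\widehat{\SR}_{L,\delta}(\phi_t)|\le|\SRdm(\theta_t)-\SRdm(\phi_t)|+|\SRdm(\phi_t)-\widehat{\SR}_{L,\delta}(\phi_t)|\le2\beta\delta$ and $|\widehat{\SR}_{L,\delta}(\thetaSE)-\SRdm(\thetaSE)|\le\beta\delta$, at a cost of $3\beta\delta T$, and substitute $\eta_t=\eta_0 d^{-1/2}t^{-3/4}$ (so $1/\eta_T=\eta_0^{-1}\sqrt d\,T^{3/4}$ and $\sum_{t\le T}\eta_t\le 4\eta_0 d^{-1/2}T^{1/4}$) and $\delta=\delta_0\sqrt d\,T^{-1/4}$. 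Collecting the $\sqrt d\,T^{3/4}$ terms yields the first displayed inequality; tracking the precise dependence of the error coefficient on $\eta_t,\delta$ is routine bookkeeping, and for the asymptotic statement below all that matters is that this coefficient is finite.

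For \eqref{eq:flaxman_rate}, fix $T$ together with the resulting $\tau$-independent choices of $\{\eta_t\}$ and $\delta$; the only $\tau$-dependence remaining in the bound lives in the finitely many terms $\E\|\bar\mu_t-\mubr(\theta_t)\|_2$, $t=1,\dots,T$, so it suffices to show each tends to $0$ as $\tau\to\infty$. Condition on $\theta_t$: the no-regret hypothesis \eqref{eq:rational_dmleads} gives $\tfrac1\tau\sum_{j=1}^\tau\E[R(\mu_{t,j},\theta_t)\mid\theta_t]\to\min_\mu R(\mu,\theta_t)=R(\mubr(\theta_t),\theta_t)$, and since $\cM$ is compact and $\mubr(\theta_t)$ is the unique minimizer of the continuous map $R(\cdot,\theta_t)$, for every $\epsilon>0$ the value $\inf_{\|\mu-\mubr(\theta_t)\|_2\ge\epsilon}R(\mu,\theta_t)$ strictly exceeds the minimum, which forces the fraction of within-epoch iterates at distance $\ge\epsilon$ from $\mubr(\theta_t)$ to vanish in probability; combining this with $\|\bar\mu_t-\mubr(\theta_t)\|_2\le\tfrac1\tau\sum_j\|\mu_{t,j}-\mubr(\theta_t)\|_2$ and compactness of $\cM$ gives $\bar\mu_t\to\mubr(\theta_t)$, hence $\E[\|\bar\mu_t-\mubr(\theta_t)\|_2\mid\theta_t]\to0$ by bounded convergence and then $\E\|\bar\mu_t-\mubr(\theta_t)\|_2\to0$ after integrating over $\theta_t$. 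Summing the $T$ terms sends the residual to $0$ and leaves exactly \eqref{eq:flaxman_rate}. (Alternatively one may invoke directly the iterate-convergence strengthening of \eqref{eq:rational_dmleads} noted in the text, which makes $\bar\mu_t\to\mubr(\theta_t)$ immediate as a Ces\`aro limit.)

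I expect the main obstacle to be not a single hard inequality but two places where a naive argument breaks. First, $\bar\mu_t$ genuinely depends on the same randomness $u_t$ that drives the estimator, so $\xi_t$ is \emph{not} independent of $u_t$; the analysis nonetheless goes through because it only ever uses the pathwise norm bound on $\xi_t$, while $\tfrac{d}{\delta}\SRdm(\theta_t)u_t$ stays conditionally unbiased regardless of how $\bar\mu_t$ is produced. Second, and more substantively, one must upgrade the average-regret statement \eqref{eq:rational_dmleads} to iterate convergence $\bar\mu_t\to\mubr(\theta_t)$: this is exactly where compactness of $\cM$ and uniqueness of the best response are essential, and it is the step most likely to need care (e.g., ruling out a small but non-vanishing mass of far-away within-epoch iterates dragging the epoch average). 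The remainder is a faithful instantiation of the Flaxman analysis with a perturbed gradient oracle, together with the standard projection-onto-a-contraction fix for deploying models in a $\delta$-neighborhood of $\Theta$ mentioned in the paper's footnote.
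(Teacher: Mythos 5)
Your proposal follows essentially the same route as the paper's proof: the same $\delta$-smoothed risk $\hSRdm$, the same decomposition of the zeroth-order update into a conditionally unbiased estimate of $\nabla\hSRdm(\phi_t)$ plus a perturbation controlled by $\beta_\mu\|\bar\mu_t-\mubr(\theta_t)\|_2$, the same $3\beta\delta T$ correction for passing between $\hSRdm$ and $\SRdm$, and the same compactness-plus-uniqueness argument (the content of the paper's Lemma~\ref{lemma:average_dist_0}) to send $\E\|\bar\mu_t-\mubr(\theta_t)\|_2\to 0$ for the limit statement. The one substantive point is a piece of bookkeeping you flag but do not resolve: because the perturbation enters the update scaled by $d/\delta$, your derivation yields the agent-error term with coefficient $\tfrac{d}{\delta}\beta_\mu D_\Theta$ rather than the $\beta_\mu D_\Theta$ displayed in the theorem, and with $\delta=\delta_0\sqrt{d}\,T^{-1/4}$ this factor grows with $T$, so it is not ``routine''; the paper's own proof silently drops the same $d/\delta$ factor when rearranging the inner-product bound, so your accounting is if anything the more careful one. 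The discrepancy is immaterial for the asymptotic claim \eqref{eq:flaxman_rate}, since for fixed $T$ the entire residual vanishes as $\tau\to\infty$ either way.
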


\begin{remark}
For Theorem \ref{thm:flaxman_general}, we assume that the agents are rational in a relatively weak sense, by assuming no-regret behavior. Often, however, we expect the agents' strategy to achieve \emph{iterate convergence}, and not just vanishing regret. More precisely, it makes sense to expect $\mu_{t,\tau}\rightarrow \mubr(\theta_t)$ as $\tau\rightarrow\infty$. For example, this guarantee is achieved by gradient descent in a variety of settings. In that case, the decision-maker can simply use the \emph{last} iterate instead of the average one:
\begin{equation}
\label{eq:last_iterate_flaxman}
\phi_{t+1} = \Pi_{\Theta} (\phi_t - \eta_t \frac{d}{\delta} L(\mu_t, \phi_t + \delta u_t)u_t), \text{ where } u_t\sim \mathrm{Unif}\left(\cS^{d-1}\right).
\end{equation}
Similarly, $\mathbb{E}\|\bar\mu_t - \mubr(\theta_t)\|_2$ would be replaced by $\mathbb{E}\| \mu_t - \mubr(\theta_t)\|_2$ in the bound of Theorem~\ref{thm:flaxman_general}.
\end{remark}

\begin{remark}
The rate $O(\sqrt{d}T^{3/4})$ is characteristic of the \citet{flaxman2005} zeroth-order algorithm. Subsequent work on bandit convex optimization has improved upon this rate in terms of $T$ at the cost of worse dependence on $d$ \cite{agarwal2013stochastic, bubeck2017kernel, lattimore2021improved}. In this work we opt to analyze the Flaxman et al. method given its simplicity and the fact that its rate has not been uniformly improved upon. That said, we do not anticipate any difficulties in proving a result analogous to Theorem \ref{thm:flaxman_general} in the context of a different bandit convex optimization algorithm. Specifically, we only require that the error $\|\bar \mu_t - \mubr(\theta_t)\|_2$ propagates ``smoothly'' to the decision-maker's regret.
\end{remark}

In some cases, the additional regret due to imperfect agent responses does not alter the asymptotic rate at which the decision-maker accumulates regret even if the epoch length $\tau$ is constant and does not grow with $T$. To illustrate this point, we consider strategic agents that follow the gradient-descent direction on a possibly nonconvex objective with enough curvature. More precisely, we assume that for all $\theta$, $R(\mu,\theta)$ satisfies the Polyak-{\L}ojasiewicz (PL) condition:
\[ \gamma(R(\mu,\theta) -\min_{\mu \in \mathcal{M}} R(\mu,\theta) )\le \frac{1}{2}\| \nabla_\mu R(\mu,\theta) \|_2^2,\]
for some parameter $\gamma>0$. Suppose that the agents' update is computed as:
\begin{equation}
\label{eq:mu_gradient_descent}
    \mu_{t,j+1} = \mu_{t,j} - \eta_\mu \nabla_\mu R(\mu_{t,j},\theta_t),
\end{equation}
where $\eta_\mu>0$ is a constant step size and $\mu_{t,0} = \mu_{t-1,\tau}$.
In this case, gradient descent achieves last-iterate convergence and hence we assume that the decision-maker uses the update in equation \eqref{eq:last_iterate_flaxman}.

\begin{theorem}
\label{thm:flaxman_PL}
Assume the conditions of Theorem \ref{thm:flaxman_general}. In addition, suppose that $R(\mu,\theta)$ is $\beta^R_\mu$-smooth in $\mu$ for all $\theta$ and satisfies the PL condition with parameter $\gamma$, and $\mubr(\theta)$ is $\beta_\BR$-Lipschitz in $\theta$. Assume that the strategic agents run update \eqref{eq:mu_gradient_descent} with $\eta_\mu <\frac{1}{\beta_\mu^R}$. Further, suppose the epoch length is chosen so that $\tau >\log(\beta^R_\mu/\gamma)/\log\left(1/(1-\gamma\eta_\mu)\right)$. Then, for some constant $\alpha(\tau)\in(0,1)$, we have
\[ \sum_{t=1}^T \E \|\mu_t - \mubr(\theta_t)\|_2 \leq \frac{\|\mu_0 - \mubr(\theta_0)\|_2 + \frac{4\beta_\BR B\eta_0}{\delta_0}\sqrt{T}}{1-\alpha(\tau)}.\]
\end{theorem}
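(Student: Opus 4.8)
Write $e_t := \|\mu_t - \mubr(\theta_t)\|_2$ for the agents' tracking error at the end of epoch $t$. The plan is to (a) show that a single epoch of $\tau$ gradient steps contracts $e_t$ relative to the \emph{previous} tracking error plus a term measuring how far the deployed model has drifted between epochs, (b) unroll the resulting scalar recursion into a geometric sum, and (c) bound the total drift of the decision-maker's iterates using the step-size and smoothing schedules inherited from Theorem~\ref{thm:flaxman_general}.

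\textbf{One-epoch contraction.} Within epoch $t$ the model is frozen at $\theta_t$, and by \eqref{eq:mu_gradient_descent} the agents run $\tau$ (unprojected) gradient steps on the fixed function $R(\cdot,\theta_t)$ starting from $\mu_{t,0}=\mu_{t-1}$. I would chain three facts about this fixed function: (i) $\beta^R_\mu$-smoothness expanded around $\mubr(\theta_t)$, where $\nabla_\mu R$ vanishes, gives $R(\mu_{t,0},\theta_t)-\min_\mu R(\mu,\theta_t)\le \tfrac{\beta^R_\mu}{2}\|\mu_{t-1}-\mubr(\theta_t)\|_2^2$; (ii) since $\eta_\mu<1/\beta^R_\mu$, the standard smooth-plus-PL descent argument shows each step multiplies the optimality gap by $(1-\gamma\eta_\mu)$, so after $\tau$ steps the gap is at most $(1-\gamma\eta_\mu)^\tau$ times its initial value; (iii) the PL condition implies the quadratic-growth bound $\tfrac{\gamma}{2}\|\mu_t-\mubr(\theta_t)\|_2^2\le R(\mu_t,\theta_t)-\min_\mu R(\mu,\theta_t)$, using uniqueness of $\mubr(\theta_t)$. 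Composing (i)--(iii) yields
\[ \|\mu_t-\mubr(\theta_t)\|_2 \;\le\; \alpha(\tau)\,\|\mu_{t-1}-\mubr(\theta_t)\|_2, \qquad \alpha(\tau):=\sqrt{\beta^R_\mu/\gamma}\,(1-\gamma\eta_\mu)^{\tau/2}, \]
and taking logarithms shows $\alpha(\tau)<1$ is \emph{exactly} the assumed condition $\tau>\log(\beta^R_\mu/\gamma)/\log(1/(1-\gamma\eta_\mu))$. To handle the drifting target I would insert $\mubr(\theta_{t-1})$ and use that $\mubr$ is $\beta_\BR$-Lipschitz: $\|\mu_{t-1}-\mubr(\theta_t)\|_2\le e_{t-1}+\beta_\BR\|\theta_t-\theta_{t-1}\|_2$, so that $e_t\le \alpha(\tau)\,e_{t-1}+\beta_\BR\|\theta_t-\theta_{t-1}\|_2$, pointwise in the randomness.

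\textbf{Unrolling and the decision-maker's drift.} Unrolling this linear recursion and summing the two geometric series that appear (a weight $\sum_{t\ge1}\alpha(\tau)^t\le 1/(1-\alpha(\tau))$ on the initial error, and a weight $\sum_{t\ge s}\alpha(\tau)^{t-s}\le 1/(1-\alpha(\tau))$ on each model increment) gives
\[ \sum_{t=1}^T \E\, e_t \;\le\; \frac{1}{1-\alpha(\tau)}\Big(\|\mu_0-\mubr(\theta_0)\|_2 + \beta_\BR\sum_{t=1}^T\E\|\theta_t-\theta_{t-1}\|_2\Big). \]
It then remains to bound $\sum_{t}\E\|\theta_t-\theta_{t-1}\|_2$. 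The one-point estimator in \eqref{eq:last_iterate_flaxman} has norm at most $\eta_{t-1}\tfrac{d}{\delta}|L(\mu_{t-1},\theta_{t-1})|\le \eta_{t-1}\tfrac{d}{\delta}B$, and the Euclidean projection is nonexpansive, so $\|\phi_t-\phi_{t-1}\|_2\le \eta_{t-1}\tfrac{d}{\delta}B$; since $\theta_t=\phi_t+\delta u_t$ one also picks up an extra $\delta\|u_t-u_{t-1}\|_2\le 2\delta$. Substituting $\eta_t=\eta_0 d^{-1/2}t^{-3/4}$ and $\delta=\delta_0 d^{1/2}T^{-1/4}$ and using $\sum_{t=1}^T t^{-3/4}\le 4T^{1/4}$, the first part sums to exactly $\tfrac{4\eta_0 B}{\delta_0}\sqrt T$; the $2\delta$ terms sum to $O(\delta_0 d^{1/2}T^{3/4})$, which is dominated by (hence absorbable into) the $O(\sqrt d\,T^{3/4})$ term it ultimately feeds in Theorem~\ref{thm:flaxman_general}. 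This produces the stated bound.

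\textbf{Main obstacle.} The crux is steps (i)--(iii): gradient descent can only be controlled through the optimality gap in $R$, whereas the quantity of interest is the iterate distance $\|\mu_t-\mubr(\theta_t)\|_2$, and passing between the two costs a factor $\sqrt{\beta^R_\mu/\gamma}>1$ per epoch (smoothness on the way in, quadratic growth on the way out). The per-step geometric decay $(1-\gamma\eta_\mu)^{\tau/2}$ accumulated over the $\tau$ inner iterations must overcome this blow-up — which is precisely why $\tau$ must exceed the logarithmic threshold — and checking that the threshold is exactly the right one (not merely sufficient) is the delicate part; everything downstream is routine bookkeeping.
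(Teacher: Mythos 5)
Your proof follows essentially the same route as the paper's: the per-epoch contraction $\epsilon_t \le \alpha(\tau)\,\epsilon_{t-1} + \beta_\BR\|\theta_t-\theta_{t-1}\|_2$ with $\alpha(\tau)=\sqrt{\beta^R_\mu/\gamma}\,(1-\gamma\eta_\mu)^{\tau/2}$ (the paper simply cites a standard PL convergence result for the gradient-descent step you rederive from smoothness, per-step PL descent, and quadratic growth), followed by the same geometric unrolling and the same bound $\|\phi_t-\phi_{t-1}\|_2\le \eta_t d B/\delta$ on the decision-maker's drift via the bounded one-point estimator. The only divergence is that you also track the $\delta\|u_t-u_{t-1}\|_2\le 2\delta$ contribution to $\|\theta_t-\theta_{t-1}\|_2$, which the paper's proof silently drops; as you observe, this adds an $O(\beta_\BR\delta_0\sqrt{d}\,T^{3/4})$ term to $\sum_t \E\,\epsilon_t$ that leaves the downstream $O(\sqrt{d}\,T^{3/4})$ regret conclusion intact, though strictly speaking the displayed bound then holds only up to that additive term.
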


Therefore, the decision-maker's regret is $O(\sqrt{d}T^{3/4})$ even with a constant epoch length. This result crucially depends on the fact that the optimization problems that the agents solve in neighboring epochs are coupled through $\mu_{t,0} = \mu_{t-1,\tau}$. If $\mu_{t,0}$ were reinitialized arbitrarily in each epoch, the extra regret would be linear in $T$ given constant epoch length.

By using standard tools from the stochastic approximation literature \cite{borkar_SA}, in the Appendix we additionally provide convergence to local optima for the update \eqref{eq:last_iterate_flaxman} when $\SRdm$ is possibly nonconvex.

\subsection{Convergence to strategic agents' equilibrium}

Now we analyze the case when the decision-maker is reactive. Given a large enough gap in update frequencies---that is, a large enough epoch length $\tau$---the decision-maker can converge to their best response to the current iterate $\mu_t$ between any two actions of the agents. The most natural choice for achieving this is to run standard gradient descent,
$\theta_{t,k+1} = \theta_{t,k} - \eta_k \nabla_\theta L(\mu_t,\theta_{t,k})$.
In what follows we provide asymptotic guarantees assuming that the decision-maker runs any algorithm that achieves iterate convergence. This condition can be satisfied by gradient descent in a variety of settings. Formally, we assume that for any fixed $\mu_t$, the decision-maker's strategy ensures
\begin{equation}
\label{eq:dm_iterate_conv}
\|\theta_{t,\tau} - \thetabr(\mu_t)\|_2 \rightarrow_p 0,
\end{equation}
as $\tau\rightarrow\infty$. Here, $\rightarrow_p$ denotes convergence in probability.

We first observe that, in the limit as $\tau$ grows, the agents' accumulated risk is equal to their accumulated \emph{Stackelberg risk} at all the actions played so far. This simply follows by continuity.

\begin{lemma}
Suppose that the decision-maker achieves iterate convergence \eqref{eq:dm_iterate_conv} and $R$ is continuous in the second argument. Then, for all $T\in\N$, $\lim_{\tau\rightarrow \infty} \sum_{t=1}^T \E R(\mu_t,\theta_t) = \sum_{t=1}^T \E \SRag(\mu_t).$
\end{lemma}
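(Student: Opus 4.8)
The plan is to pass to the limit $\tau \to \infty$ termwise in the finite sum $\sum_{t=1}^T \E R(\mu_t, \theta_t)$, using the iterate-convergence assumption \eqref{eq:dm_iterate_conv} together with continuity of $R$ in its second argument. The key point is that, by definition, $\theta_t \equiv \theta_{t,\tau}$ is the decision-maker's last within-epoch iterate, so \eqref{eq:dm_iterate_conv} says exactly that $\theta_t \to_p \thetabr(\mu_t)$ as $\tau \to \infty$, for each fixed $t$. Since $R$ is continuous in the second argument and $\SRag(\mu_t) = R(\mu_t, \thetabr(\mu_t))$, the continuous mapping theorem gives $R(\mu_t, \theta_t) \to_p R(\mu_t, \thetabr(\mu_t)) = \SRag(\mu_t)$.

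The main obstacle is moving from convergence in probability to convergence of expectations: $X_n \to_p X$ does not in general imply $\E X_n \to \E X$ without a uniform integrability or boundedness hypothesis. Fortunately the excerpt already assumes $|L(\mu,\theta)| \le B$ for all $\mu,\theta$ in the ambient setup, and the analogous boundedness of $R$ on the compact domains $\cM$ (and the relevant $\theta$-range) is the natural companion assumption; I would invoke boundedness of $R(\mu_t, \theta_t)$ uniformly in $\tau$ and then apply the bounded convergence theorem (or dominated convergence) to conclude $\E R(\mu_t, \theta_t) \to \E \SRag(\mu_t)$. If one wants to avoid an explicit boundedness assumption on $R$, one can instead note that $\mu_t$ ranges over compact $\cM$ and $\theta_t$ eventually concentrates near $\thetabr(\mu_t)$, which lies in a bounded set, so $R$ restricted to a compact neighborhood is bounded; uniform integrability then follows. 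Either route closes this gap cleanly.

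Having established $\lim_{\tau\to\infty} \E R(\mu_t,\theta_t) = \E \SRag(\mu_t)$ for each individual $t \in \{1, \dots, T\}$, the final step is trivial: the sum over $t$ has finitely many terms ($T$ is fixed and does not grow with $\tau$), so the limit of the sum equals the sum of the limits, yielding
\[
\lim_{\tau\to\infty} \sum_{t=1}^T \E R(\mu_t,\theta_t) \;=\; \sum_{t=1}^T \lim_{\tau\to\infty}\E R(\mu_t,\theta_t) \;=\; \sum_{t=1}^T \E \SRag(\mu_t),
\]
which is the claimed identity. One subtlety worth a remark: the within-epoch iterates $\{\theta_{t,k}\}$ depend on the history through $\mu_t$ (and, via the decision-maker's algorithm, possibly on earlier randomness), so one should read \eqref{eq:dm_iterate_conv} as holding conditionally on $\mu_t$ and then average; since the convergence in \eqref{eq:dm_iterate_conv} is uniform enough (it holds for any fixed $\mu_t$), the conditional statement integrates against the law of $\mu_t$ without difficulty. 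I expect the entire argument to be short — the content is really just "continuity plus bounded convergence plus a finite sum."
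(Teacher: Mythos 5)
The paper gives no written proof of this lemma---it simply asserts in the surrounding text that the identity ``follows by continuity''---and your argument is exactly the natural formalization of that one-liner: termwise application of \eqref{eq:dm_iterate_conv} and the continuous mapping theorem, bounded (or dominated) convergence to pass to expectations, and interchange of limit and finite sum. The extra care you take about boundedness of $R$ (which the paper never states explicitly) and about the dependence of the within-epoch iterates on $\mu_t$ goes beyond what the paper records and correctly identifies the only places where the ``simply by continuity'' claim needs an actual hypothesis.
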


In other words, in every epoch the agents essentially play a Stackelberg game in which they lead and the decision-maker follows. This holds regardless of whether the agents behave rationally. If they do behave rationally (condition \eqref{eq:rational_agentslead}), we show that both the agents' and the decision-maker's average regret with respect to $(\muSE,\thetabr(\muSE))$ vanishes if the agents' updates are continuous. To formalize this, suppose that for all $t\in\N$, the agents set $\mu_{t+1} = D_{t+1}(\mu_1,\theta_1,\dots,\mu_{t},\theta_{t},\xi_{t+1})$, where $D_{t+1}$ is some fixed map and $\xi_{t+1}$ is a random variable independent of $\{(\mu_i,\theta_i)\}_{i\leq t}$. We include $\xi_{t+1}$ as an input to allow randomized strategies. Then, we will say that the agents' updates are \emph{continuous} if $D_{t+1}$ is continuous in the first $2t$ coordinates for all $t\in\N$.

\begin{theorem}
\label{thm:agents_lead}
Suppose that the agents' updates are continuous and rational \eqref{eq:rational_agentslead}, and that $\cM$ is compact. Further, suppose that the decision-maker achieves iterate convergence \eqref{eq:dm_iterate_conv} and $\SRag$ and $\SRdm$ are Lipschitz. Then, it holds that
\small
\[\lim_{T\rightarrow\infty}\lim_{\tau\rightarrow\infty}\frac{1}{T} \sum_{t=1}^T \E \SRag(\mu_t) - \SRag(\muSE) = 0,~ \lim_{T\rightarrow\infty}\lim_{\tau\rightarrow\infty}\frac{1}{T} \sum_{t=1}^T \E L(\mu_{t},\theta_t) - L(\muSE,\thetabr(\muSE)) = 0.\]
\normalsize
\end{theorem}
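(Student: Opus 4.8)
The plan is to derive both limits from the preceding Lemma together with the rationality assumption \eqref{eq:rational_agentslead}, using continuity to pass the inner limit $\tau\to\infty$ through the (finitely many) terms of the sum, and then a sandwiching argument for the outer limit $T\to\infty$.

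First I would handle the agents' regret. Fix $T$. By the preceding Lemma, $\lim_{\tau\to\infty}\frac1T\sum_{t=1}^T \E R(\mu_t,\theta_t) = \frac1T\sum_{t=1}^T \E\SRag(\mu_t)$. So the quantity in question, after taking $\tau\to\infty$, is $\frac1T\sum_{t=1}^T \E\SRag(\mu_t) - \SRag(\muSE)$. Now apply assumption \eqref{eq:rational_agentslead} with the response map $\theta_\mathrm{R}(\mu) = \thetabr(\mu)$: this is precisely the statement that $\frac1T\sum_{t=1}^T \E R(\mu_t,\thetabr(\mu_t)) - \min_\mu R(\mu,\thetabr(\mu)) \to 0$, i.e. $\frac1T\sum_{t=1}^T \E\SRag(\mu_t) - \SRag(\muSE)\to 0$ as $T\to\infty$, since $R(\mu,\thetabr(\mu)) = \SRag(\mu)$ and $\muSE$ is its minimizer. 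Combining, $\lim_{T\to\infty}\lim_{\tau\to\infty}\frac1T\sum_{t=1}^T \E\SRag(\mu_t) - \SRag(\muSE) = 0$. (One subtlety: \eqref{eq:rational_agentslead} as stated describes the agents' actual iterates $\mu_t$ when the decision-maker is genuinely reactive; here one must make sure the iterates $\mu_t$ arising under iterate-convergence \eqref{eq:dm_iterate_conv} indeed satisfy \eqref{eq:rational_agentslead} for the map $\theta_\mathrm{R}=\thetabr$ — this is the point of assuming the agents are rational in the reactive regime, so I would invoke it directly.)

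Next the decision-maker's loss. The target $L(\muSE,\thetabr(\muSE))$ equals $\SRdm(\thetaSE')$? No — rather, $L(\muSE,\thetabr(\muSE)) = \min_\theta L(\muSE,\theta)$, which is the decision-maker's \emph{best-response loss at $\muSE$}. Writing $g(\mu)\defeq L(\mu,\thetabr(\mu))$, I want $\frac1T\sum_t \E L(\mu_t,\theta_t) \to g(\muSE)$. First, by iterate convergence \eqref{eq:dm_iterate_conv} and continuity of $L$ in $\theta$ (Lipschitzness of $\SRdm$ and $\SRag$, plus boundedness of $\cM$, give enough regularity; if needed I would also use joint continuity of $L$), $\lim_{\tau\to\infty}\frac1T\sum_t \E L(\mu_t,\theta_t) = \frac1T\sum_t \E L(\mu_t,\thetabr(\mu_t)) = \frac1T\sum_t \E\, g(\mu_t)$, exactly as in the Lemma but with $L$ in place of $R$ — I would note the Lemma's proof applies verbatim. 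It then remains to show $\frac1T\sum_t \E\, g(\mu_t) \to g(\muSE)$. This does \emph{not} follow from $g$ being minimized at $\muSE$ (it need not be), so here I use the agents' vanishing Stackelberg regret established in the first part: $\frac1T\sum_t \E\SRag(\mu_t) \to \SRag(\muSE) = \min_\mu \SRag(\mu)$. Since $\SRag$ is Lipschitz (hence continuous) and $\muSE$ is its \emph{unique} minimizer on the compact set $\cM$, standard arguments give that the empirical distribution of the $\mu_t$ concentrates near $\muSE$ — more precisely, for any neighborhood $U$ of $\muSE$, the fraction of $t\le T$ (in expectation) with $\mu_t\notin U$ tends to $0$. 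Combined with Lipschitzness of $g$ (which follows from Lipschitzness of $\SRdm$ composed appropriately — or I would add this as a mild assumption) and boundedness of $g$ on compact $\cM$, this yields $\frac1T\sum_t \E\, g(\mu_t)\to g(\muSE)$.

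\textbf{The main obstacle} is the last step: converting ``$\frac1T\sum_t\E\SRag(\mu_t)\to\min\SRag$'' into ``$\mu_t$ concentrates near $\muSE$'' and then into the convergence of the $g$-averages. This requires a quantitative link — e.g. a growth condition $\SRag(\mu)-\SRag(\muSE)\ge c\,\mathrm{dist}(\mu,\muSE)^k$ near the minimizer, or else just compactness plus uniqueness plus continuity run through a subsequence/Portmanteau argument on the empirical measures $\frac1T\sum_t\delta_{\mu_t}$ (any weak limit must be supported on $\argmin\SRag = \{\muSE\}$, hence equals $\delta_{\muSE}$, hence $\frac1T\sum_t g(\mu_t)\to g(\muSE)$ by bounded continuous testing). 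I would use the measure-theoretic route since it needs only the stated hypotheses (compact $\cM$, continuous/Lipschitz $\SRag$, unique $\muSE$), being careful that expectations and the weak-limit argument interchange — which they do because everything lives on the compact set $\cM$ and $g,\SRag$ are bounded there. The two inner limits ($\tau\to\infty$) are routine given the Lemma and \eqref{eq:dm_iterate_conv}, since each involves only finitely many ($T$) summands.
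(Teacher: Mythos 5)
Your overall architecture matches the paper's: take the inner limit $\tau\to\infty$ termwise over the finitely many summands using iterate convergence and continuity, obtain the first display from the rationality assumption, and obtain the second display by showing that the $\mu_t$ concentrate around $\muSE$ (via compactness of $\cM$, uniqueness of the minimizer, and Lipschitzness) so that $\frac{1}{T}\sum_t \E L(\mu_t,\thetabr(\mu_t))\to L(\muSE,\thetabr(\muSE))$. Your second half is essentially the paper's argument: the concentration step you describe is exactly the paper's Lemma \ref{lemma:average_dist_0}, which converts vanishing average Stackelberg regret into $\frac{1}{T}\sum_t\E\|\mu_t-\muSE\|_2\to 0$ by a contradiction argument; your empirical-measure/Portmanteau route is a workable alternative resting on the same hypotheses, and the Lipschitz transfer to $L(\mu,\thetabr(\mu))$ is the same step the paper takes.

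The genuine gap is the step you flag parenthetically and then wave off. Assumption \eqref{eq:rational_agentslead} is a guarantee about the trajectory the agents would generate when each action is met with the \emph{exact} response $\theta_{\mathrm{R}}(\mu_t)=\thetabr(\mu_t)$. In the actual dynamics the agents' update rule $D_{t+1}$ is fed the finite-$\tau$ iterates $\theta_{t,\tau}$, which only converge to $\thetabr(\mu_t)$ in probability as $\tau\to\infty$; the realized history is therefore not of the form assumed in \eqref{eq:rational_agentslead}, and one cannot ``invoke it directly'' on the realized $\mu_t$. The paper closes this gap by introducing an idealized trajectory $\mu_t^*$ generated by the same maps $D_t$ but driven by exact best responses $\thetabr(\cdot)$, applying \eqref{eq:rational_agentslead} to \emph{that} trajectory, and then proving by induction that $\mu_t\rightarrow_p\mu_t^*$ as $\tau\to\infty$; that induction is precisely where the hypothesis that the updates are \emph{continuous} is used, together with \eqref{eq:dm_iterate_conv} and the continuous mapping theorem applied to $\thetabr$. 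Your proof never uses the continuity-of-updates assumption, which is the tell: without this coupling argument the first limit, and hence everything downstream of it, is unproved.
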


\section{Preferred order of play} \label{sec:order_of_play}
While we have shown that the decision-maker can tune their update frequency to achieve either order of play in the Stackelberg game, it remains to understand which order of play is preferable for the decision-maker and the strategic agents. In the following examples, we illustrate that in classic learning settings both players can prefer the order when the \emph{agents lead}. This suggests that the natural and overall more desirable order of play is sometimes reversed compared to the order usually studied.

At first, it might seem counterintuitive that the decision-maker could prefer to follow. To get some intuition for why following might be preferred to leading, recall that in zero-sum games \emph{following is never worse}. In particular, suppose $R(\mu,\theta) = -L(\mu,\theta)$. Then, the basic min-max inequality says
\[L(\muSE,\thetabr(\muSE)) = \max_\mu \min_\theta L(\mu,\theta) \leq \min_\theta \max_\mu L(\mu,\theta) = L(\mubr(\thetaSE),\thetaSE),\]
with equality if and only if a \emph{Nash} equilibrium exists. Therefore, if a Nash equilibrium does not exist, following is strictly preferred.

Since strategic classification is typically not a zero-sum game, we look at two common learning problems and analyze the preferred order of play.

\subsection{Linear regression}
\label{sec:linear_reg}
Suppose that the agents' non-strategic data, $(x_0,y)$, where $x_0$ is a feature vector and $y$ the outcome, is generated according to
\[x_0\sim \cP_{0},~ y = x_0^\top \beta + \xi,\]
where $\cP_{0}$ is a zero-mean distribution such that $\E_{x_0\sim\cP_{0}} x_0 x_0^\top = I$, $\beta\in\R^d$ is an arbitrary fixed vector, and $\xi$ has mean zero and finite variance $\sigma^2$. We denote the joint distribution of $(x_0,y)$ by $\cP(0)$.

Recall that we use $z$ to denote the pair $(x,y)$. Suppose that the decision-maker runs standard linear regression with the squared loss:
\[\ell(z;\theta) = \frac{1}{2}(y - x^\top \theta)^2.\]
The agents aim to maximize their predicted outcome, $r(z;\theta) = -\theta^\top x$, subject to a fixed budget on feature manipulation---they can move to any $x$ at distance at most $B$ from their original features $x_0$: $\|x-x_0\|_2\leq B$. A similar model is considered by  \citet{kleinberg2019classifiers} and \citet{chen2020learning}. More precisely, we let $\cM = \{\mu\in\R^d:\|\mu\|_2\leq B\}$ and define $\cP(\mu)$ to be the distribution of $(x,y)$, where $(x_0,y)\sim\cP(0)$ and $x=x_0 + \mu$. Then, $R(\mu,\theta) = \E_{z\sim \cP(\mu)} r(z;\theta) = -\mu^\top \theta$ and $L(\mu,\theta) =\E_{z\sim \cP(\mu)} \ell(z;\theta)$.

We prove that both the decision-maker and the agents prefer the agents' equilibrium.

\begin{proposition}
\label{prop:linear_regression}
Assume the linear regression setup described above. Then, we have
\begin{align*}
\frac{\sigma^2}{2} + \frac{\|\beta\|_2^2 \min(1,B)^2}{2(1+\min(1,B)^2)} = L(\muSE,\thetabr(\muSE)) &\leq \SRdm(\thetaSE) = \frac{\sigma^2}{2} + \frac{\|\beta\|_2^2 B^2}{2(1+B^2)},\\
-\frac{\|\beta\|_2\min(1,B)}{1 + \min(1,B)^2} = \SRag(\muSE) &\leq R(\mubr(\thetaSE),\thetaSE) = -\frac{\|\beta\|_2B}{1 + B^2}.
\end{align*}
\end{proposition}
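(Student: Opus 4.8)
The plan is a direct computation: evaluate $L$, the four best-response maps, the two Stackelberg risks, locate the two equilibria, and then reduce each of the two claimed inequalities to an elementary monotonicity statement. First I would compute $L(\mu,\theta)$ in closed form. Writing $x = x_0 + \mu$ and $y = x_0^\top\beta + \xi$, we have $y - x^\top\theta = x_0^\top(\beta-\theta) + \xi - \mu^\top\theta$, and expanding the square and taking expectations kills all cross terms (using $\E x_0 = 0$, $\E\xi = 0$, $\xi \perp x_0$, and $\E x_0 x_0^\top = I$), leaving
\[L(\mu,\theta) = \tfrac12\|\beta-\theta\|_2^2 + \tfrac12\sigma^2 + \tfrac12(\mu^\top\theta)^2.\]
Since $R(\mu,\theta) = -\mu^\top\theta$ is already given, I would then read off the best responses: $\mubr(\theta) = B\,\theta/\|\theta\|_2$ (maximizing a linear functional over a ball), and, from $\nabla_\theta L = (I + \mu\mu^\top)\theta - \beta = 0$ together with Sherman--Morrison, $\thetabr(\mu) = \beta - \frac{(\mu^\top\beta)\mu}{1+\|\mu\|_2^2}$.

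Next I would substitute to obtain the two Stackelberg risks. Plugging $\mubr(\theta)$ into $L$ gives $\SRdm(\theta) = \tfrac12\|\beta-\theta\|_2^2 + \tfrac12\sigma^2 + \tfrac{B^2}{2}\|\theta\|_2^2$, which is strongly convex; setting its gradient to zero yields $\thetaSE = \beta/(1+B^2)$ and, after simplification, $\SRdm(\thetaSE) = \tfrac{\sigma^2}{2} + \tfrac{\|\beta\|_2^2 B^2}{2(1+B^2)}$. Plugging $\thetabr(\mu)$ into $R$ gives $\SRag(\mu) = -\frac{\mu^\top\beta}{1+\|\mu\|_2^2}$; decomposing $\mu$ into its components parallel and orthogonal to $\beta$ shows the minimizer must be of the form $s\,\beta/\|\beta\|_2$ with $s \geq 0$, reducing the problem to minimizing $s \mapsto -\frac{s\|\beta\|_2}{1+s^2}$ over $s \in [0,B]$. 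Since $\frac{d}{ds}\frac{s}{1+s^2} = \frac{1-s^2}{(1+s^2)^2}$, this function increases on $[0,1]$ and decreases afterward, so the optimum is $s = \min(1,B)$, giving $\muSE = \min(1,B)\,\beta/\|\beta\|_2$ and $\SRag(\muSE) = -\frac{\|\beta\|_2\min(1,B)}{1+\min(1,B)^2}$. A short computation with $\thetabr(\muSE) = \beta/(1+\min(1,B)^2)$ and $\mubr(\thetaSE) = B\,\beta/\|\beta\|_2$ then produces the remaining two quantities $L(\muSE,\thetabr(\muSE)) = \tfrac{\sigma^2}{2} + \tfrac{\|\beta\|_2^2\min(1,B)^2}{2(1+\min(1,B)^2)}$ and $R(\mubr(\thetaSE),\thetaSE) = -\frac{\|\beta\|_2 B}{1+B^2}$, matching the displayed expressions.

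Finally, the two inequalities follow by monotonicity, writing $m = \min(1,B)$. The first is equivalent to $\frac{m^2}{1+m^2} \le \frac{B^2}{1+B^2}$, which holds because $t \mapsto t/(1+t)$ is increasing and $m^2 \le B^2$. The second is equivalent to $\frac{m}{1+m^2} \ge \frac{B}{1+B^2}$, which holds because $t \mapsto t/(1+t^2)$ is maximized at $t = 1$: if $B \le 1$ both sides coincide, and if $B > 1$ the left side equals the maximal value $1/2$. The only mildly delicate step is the constrained minimization producing $\muSE$ --- specifically, justifying that the minimizer lies along $\beta$ and that the budget constraint becomes inactive precisely when $B \ge 1$; everything else is bookkeeping with rational functions, so I do not anticipate a genuine obstacle.
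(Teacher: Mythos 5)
Your proposal is correct and follows essentially the same route as the paper's proof: compute $L(\mu,\theta)$ in closed form, derive $\mubr(\theta)=B\theta/\|\theta\|_2$ and $\thetabr(\mu)=(I-\mu\mu^\top/(1+\|\mu\|_2^2))\beta$, locate $\thetaSE=\beta/(1+B^2)$ and $\muSE=\min(1,B)\,\beta/\|\beta\|_2$ via the one-dimensional reduction along $\beta$, and evaluate the four equilibrium quantities. The only (harmless) difference is that you explicitly spell out the final monotonicity comparisons of $t^2/(1+t^2)$ and $t/(1+t^2)$, which the paper leaves implicit in the closed-form expressions.
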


When $B\leq 1$, the losses implied by the two scenarios are the same, while when $B>1$, having the agents lead is strictly better for both players. Moreover, the strategic agents' manipulation cost is no higher when they lead: $\|\muSE\|_2 \leq \|\mubr(\thetaSE)\|_2$.

\subsection{Logistic regression}
\label{sec:logistic}

Next we consider a classification example. Suppose that the non-strategic data $(x_0,y)$ is sampled according to a base joint distribution $\cP(0)$ supported on $\R^d \times \{0,1\}$. Unlike in the linear regression example, we place no further constraint on $\cP(0)$.

We assume that the decision-maker trains a logistic regression classifier: 
\[\ell(z;\theta) = -yx^\top \theta + \log(1 + e^{x^\top \theta}).\]
The agents with $y=0$ can manipulate their features to increase the probability of being positively labeled. A similar setup is considered by \citet{dong2018strategic}. As in the previous example, the agents have a limited budget to change their features: if their non-strategic features are $x_0$, they can move to any $x$ which is at distance at most $B$ from $x_0$, $\|x-x_0\|_2\leq B$. Thus, we set $\cM = \{\mu\in\R^d:\|\mu\|_2\leq B\}$ and denote by $\cP(\mu)$ the joint distribution of $(x,y)$ where $(x_0,y)\sim\cP(0)$ and $x=x_0 + \mu\mathbf{1}\{y=0\}$. We let $R(\mu,\theta) = -\mu^\top \theta$ and $L(\mu,\theta) = \E_{z\sim\cP(\mu)}\ell(z;\theta)$.

\begin{figure}[t]
    \centering
    \includegraphics[width = 0.325\textwidth]{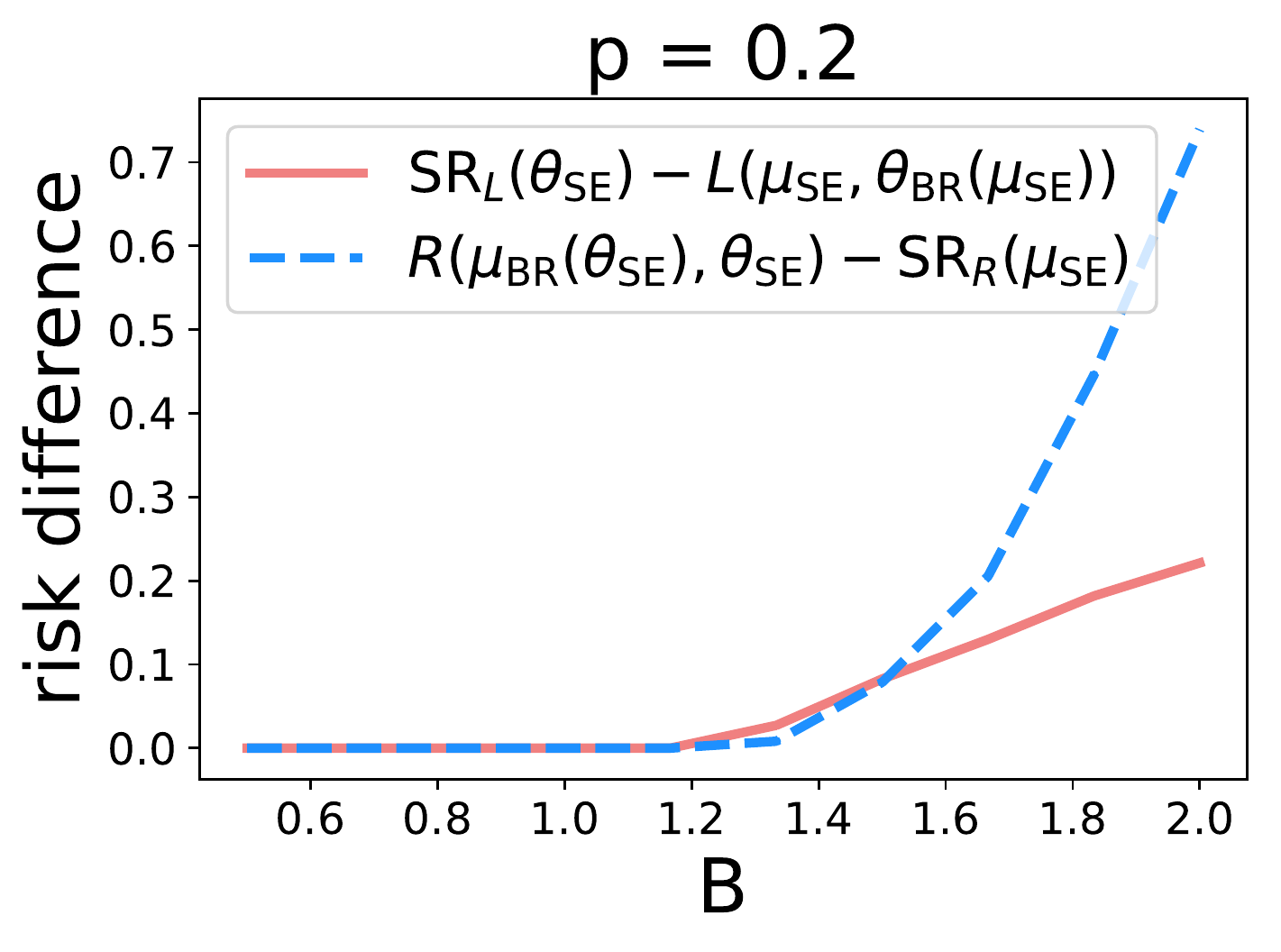}
    \includegraphics[width = 0.321\textwidth]{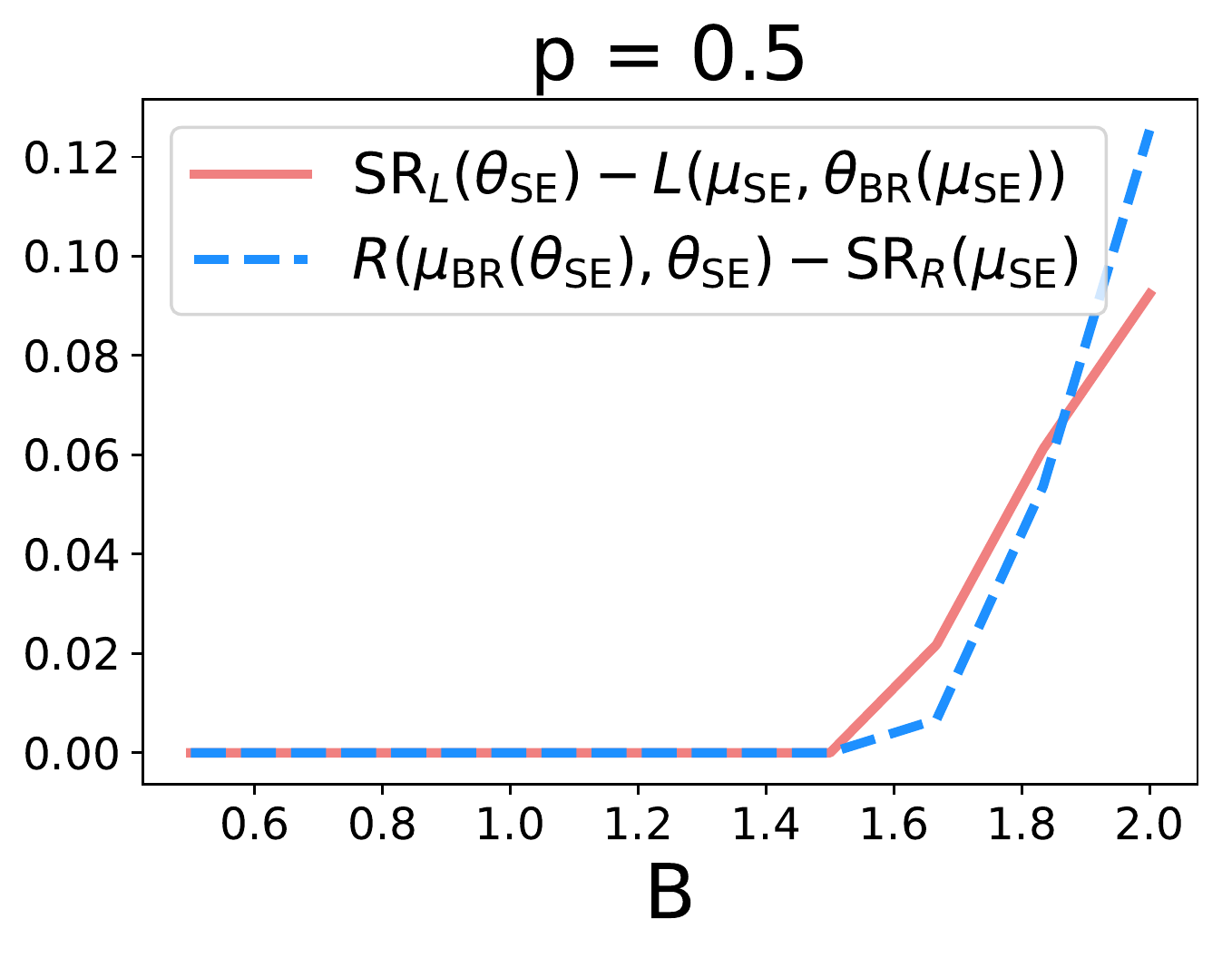}
    \includegraphics[width = 0.321\textwidth]{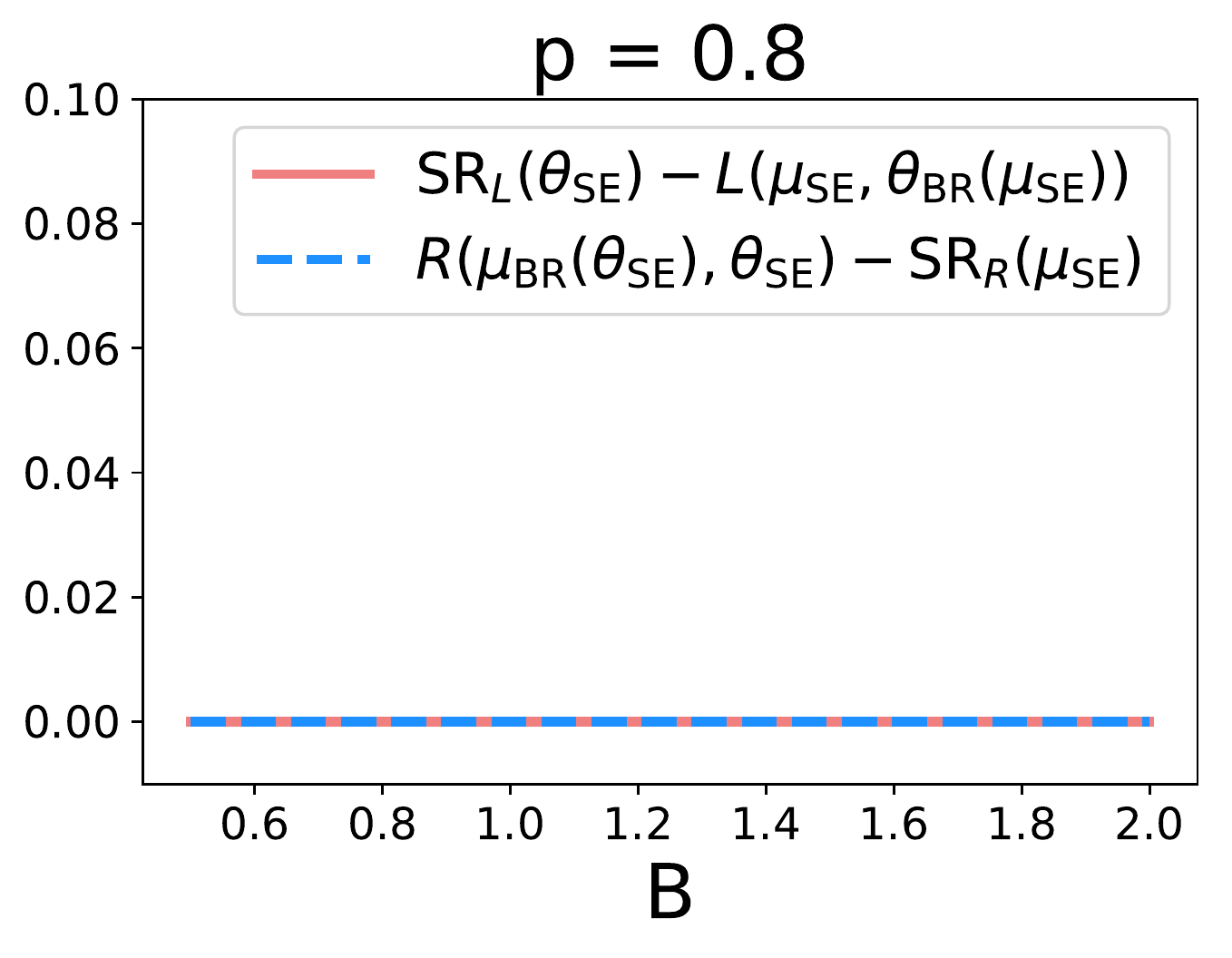}
    \caption{Difference in decision-maker's and agents' risk implied by the two Stackelberg equilibria, for different values of $B$ and $p$.}
    \label{fig:risk_diff_across_B&p}
\end{figure}

\begin{figure}[h]
    \centering
    \includegraphics[width = 0.325\textwidth]{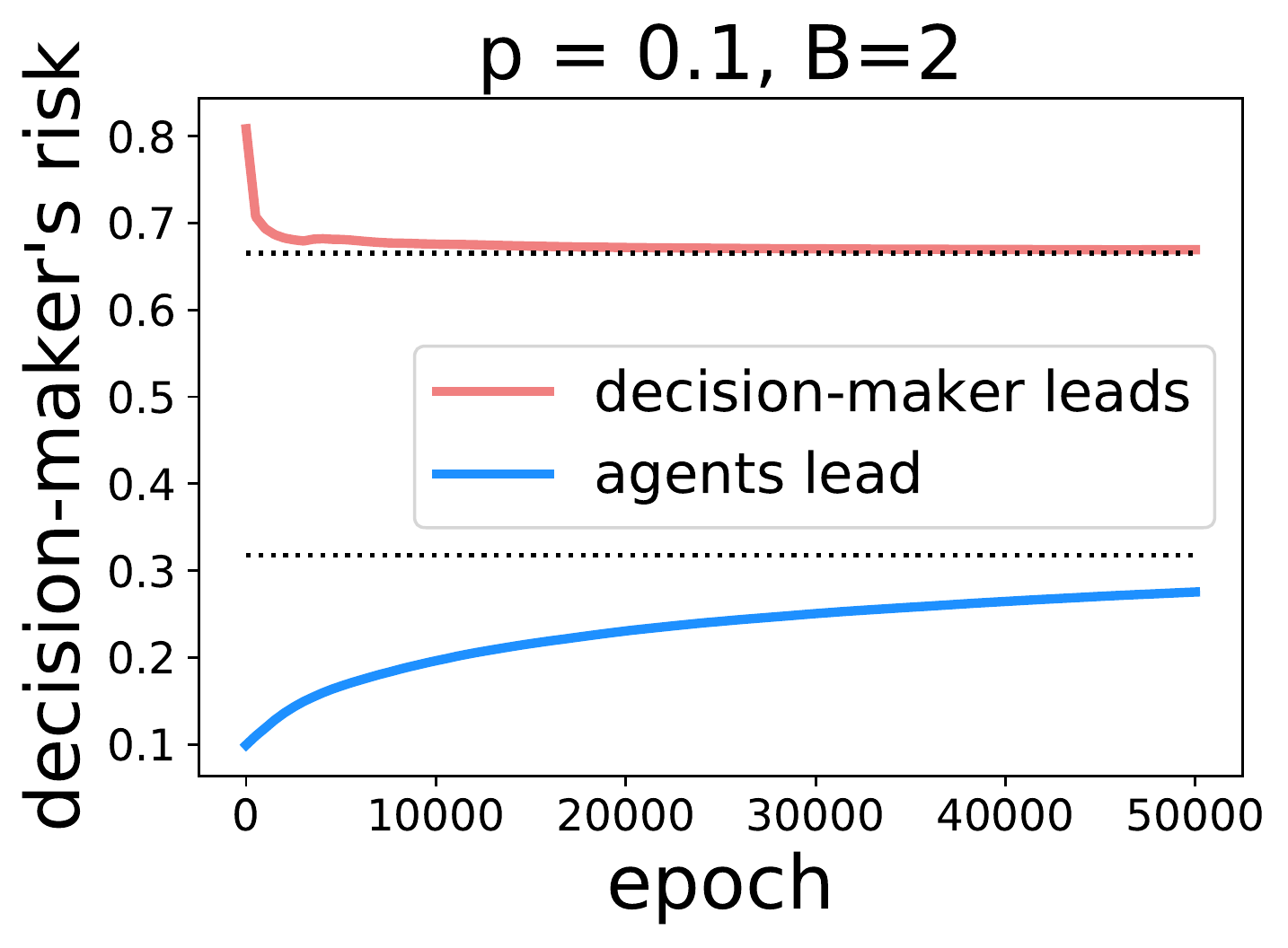}
    \includegraphics[width = 0.325\textwidth]{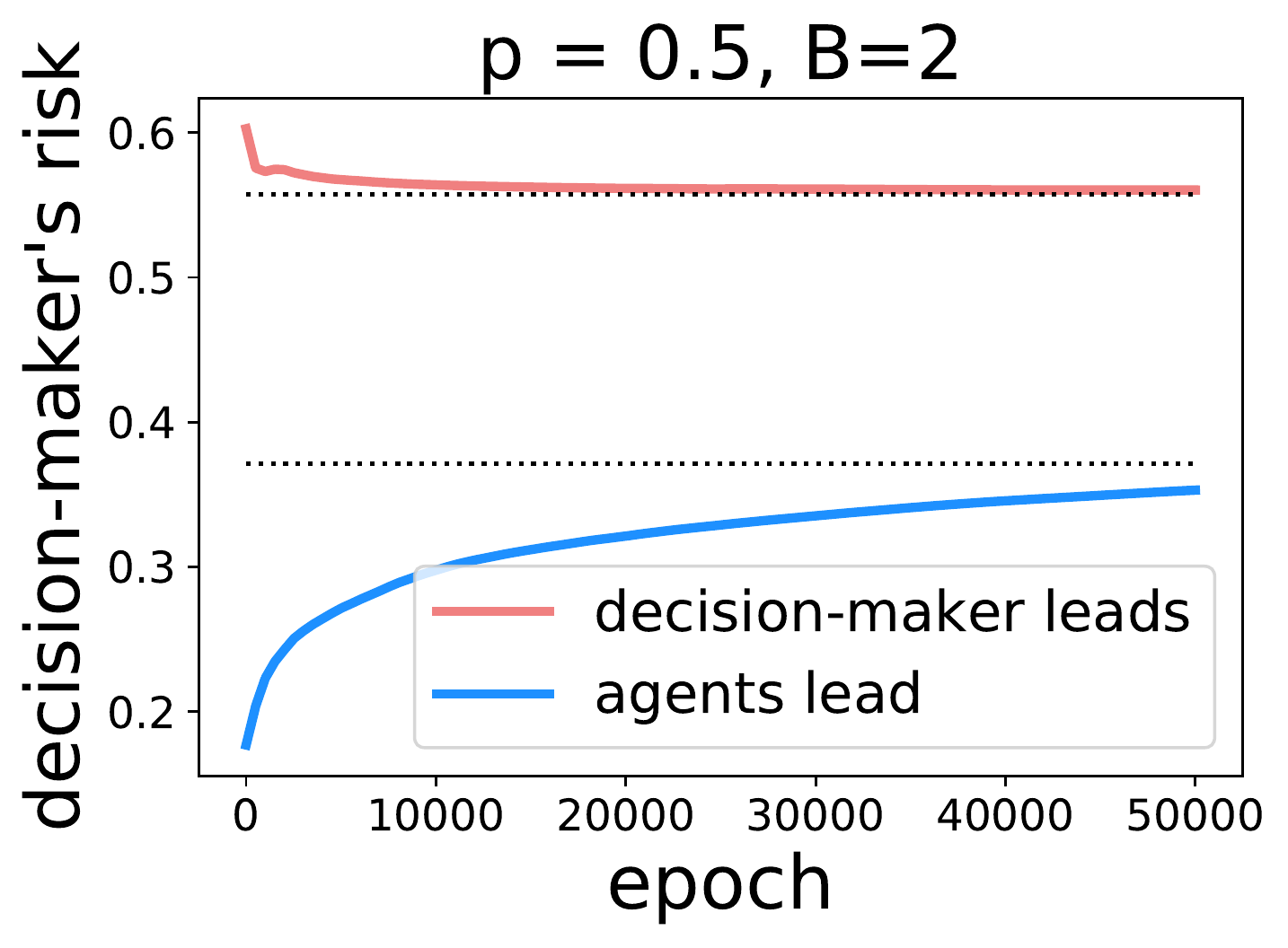}
    \includegraphics[width = 0.325\textwidth]{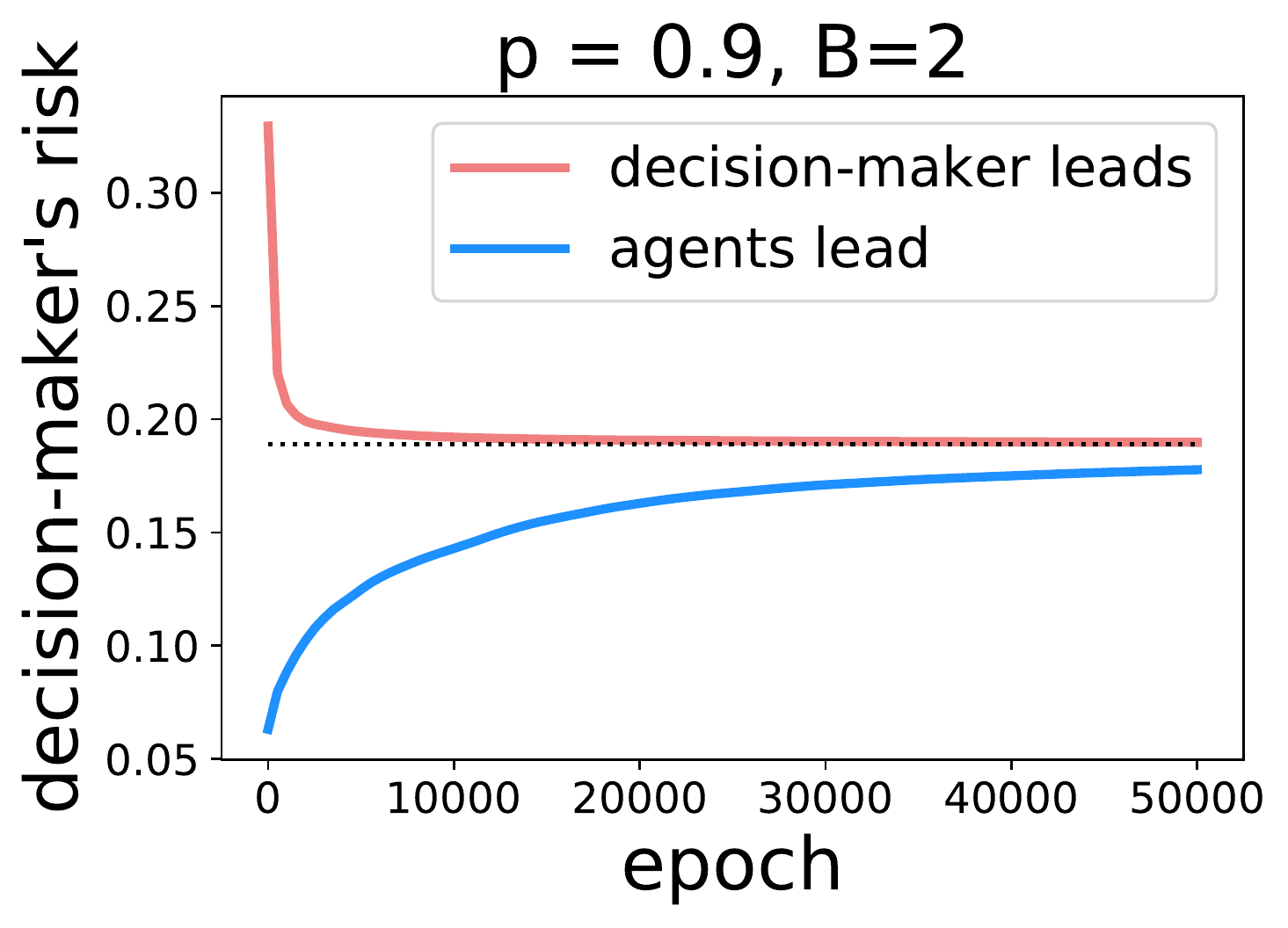}
    \includegraphics[width = 0.325\textwidth]{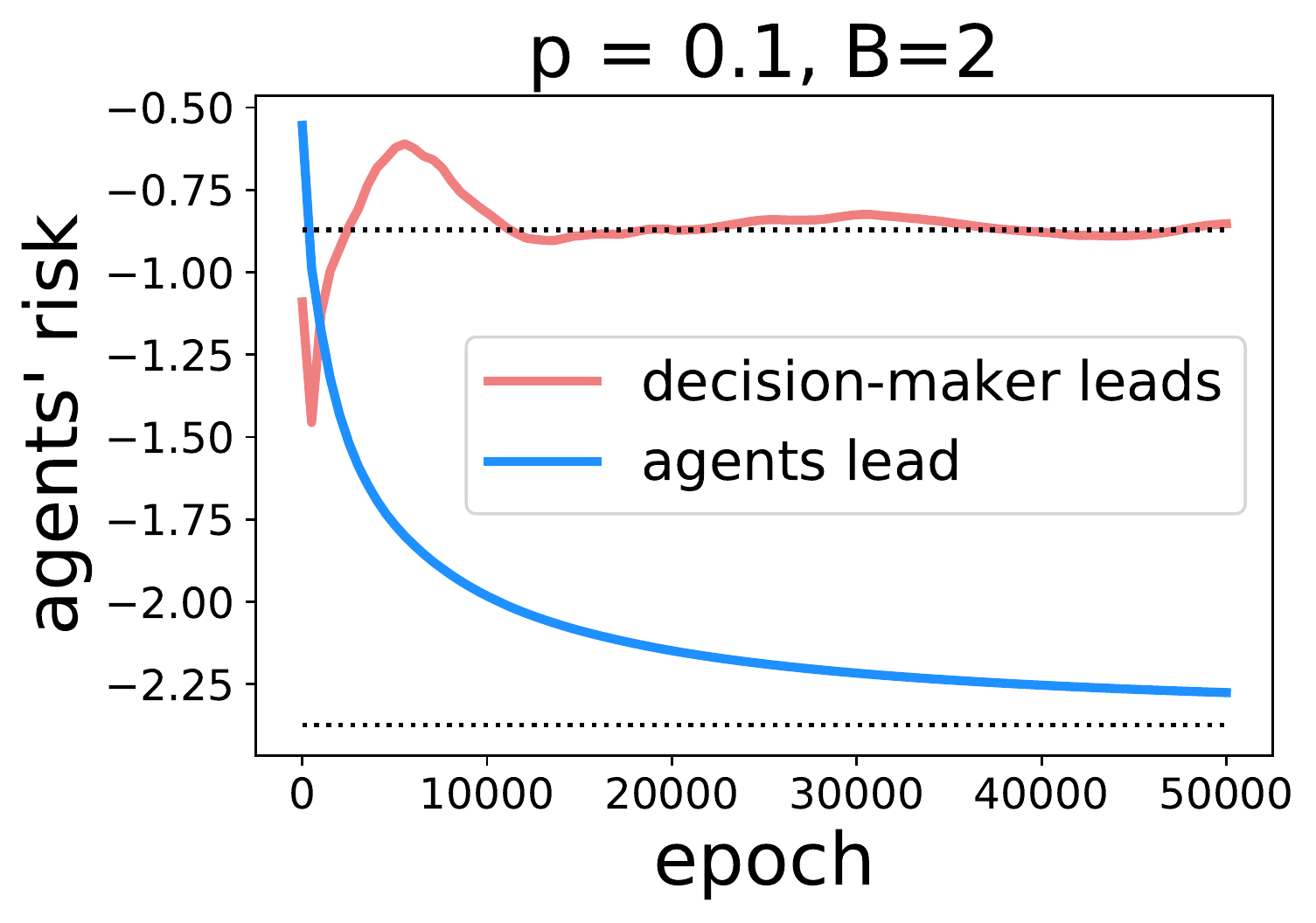}
    \includegraphics[width = 0.325\textwidth]{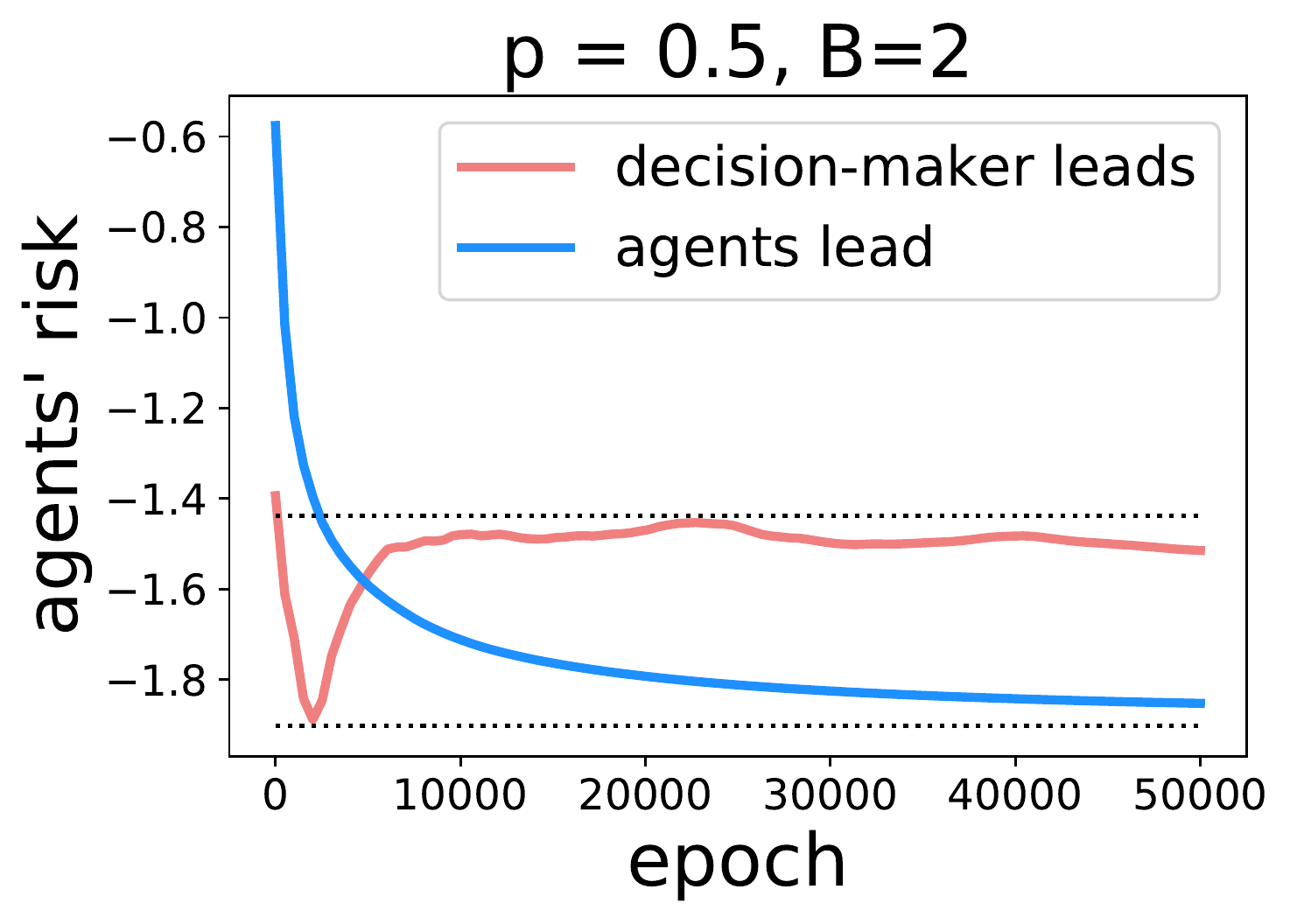}
    \includegraphics[width = 0.325\textwidth]{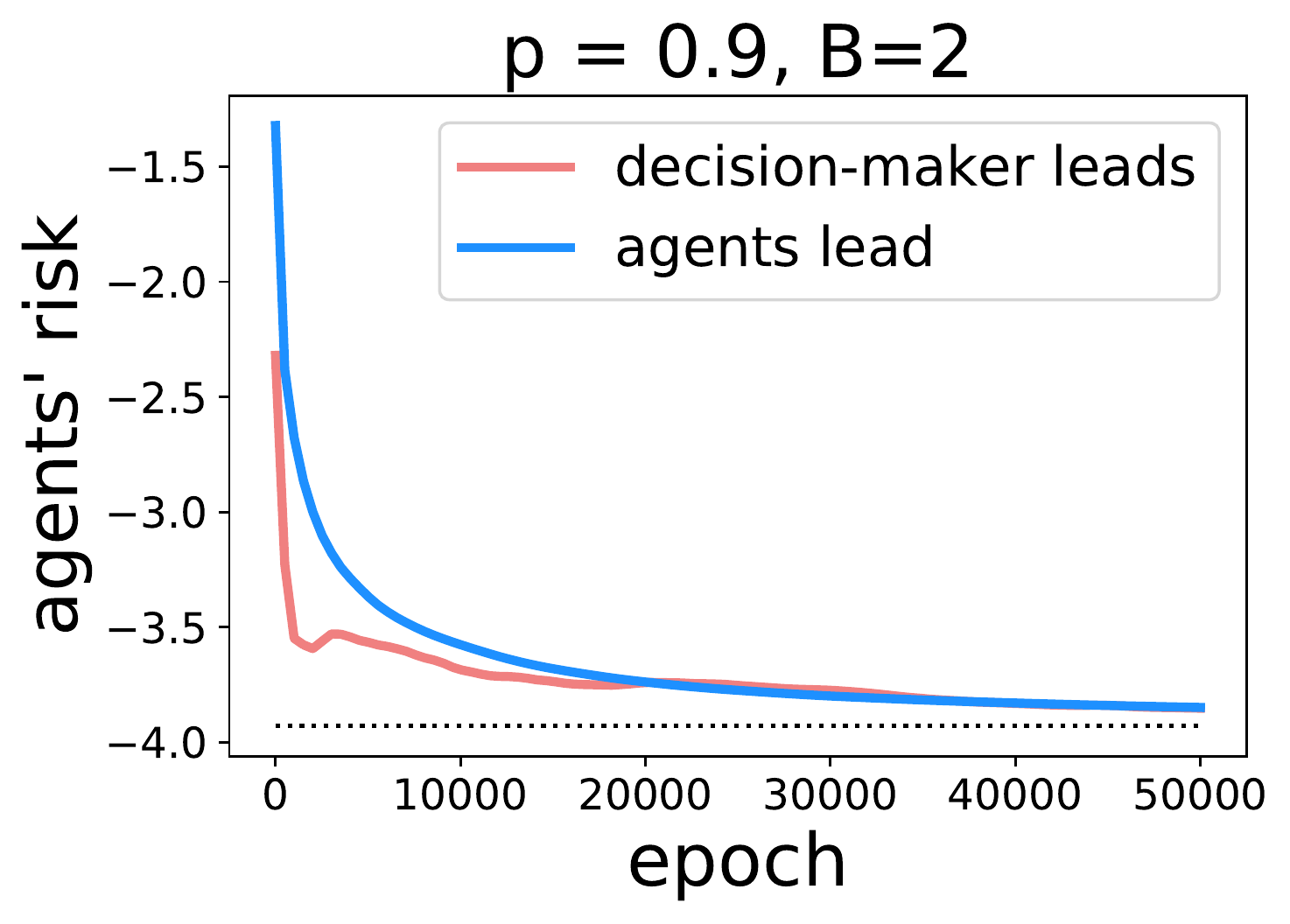}
    \caption{Decision-maker's and agents' average running risk for varying $p$ and $B=2$. The dotted lines denote the loss at the respective equilibria. For $p = 0.9$, the decision-maker's equilibrium and the agents' equilibrium coincide and hence both curves converge to the same value.}
\label{fig:B2}
\end{figure}

\begin{proposition}
\label{prop:logistic_regression}
Assume the logistic regression setup described above. Then, we have
\begin{align*}
L(\muSE,\thetabr(\muSE)) \leq \SRdm(\thetaSE) \text{ and } \SRag(\muSE) \leq R(\mubr(\thetaSE),\thetaSE).
\end{align*}
\end{proposition}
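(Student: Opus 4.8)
The plan is to reduce both inequalities in Proposition~\ref{prop:logistic_regression} to two structural facts special to this problem: (i) the agents' best response $\mubr(\theta)$ not only minimizes $R(\cdot,\theta)$ but also \emph{maximizes} $L(\cdot,\theta)$ over $\cM$, so that $\SRdm(\theta)=\max_{\|\mu\|_2\le B}L(\mu,\theta)$; and (ii) a ``consistency'' identity $\thetabr(\mubr(\thetaSE))=\thetaSE$, i.e.\ the committed model $\thetaSE$ is exactly the decision-maker's own best response to the manipulation it induces. Fact (i) gives the decision-maker's inequality essentially for free, and facts (i)--(ii) together give the agents'.

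First I would establish (i). Since $R(\mu,\theta)=-\mu^\top\theta$, we have $\mubr(\theta)=B\theta/\|\theta\|_2$ for $\theta\neq 0$ (the case $\thetaSE=0$ is degenerate---there $\mubr$ is not unique---and is excluded by the standing uniqueness assumption). Writing $L(\mu,\theta)$ out, its dependence on $\mu$ enters only through the $y=0$ term $\E[\mathbf{1}\{y=0\}\log(1+e^{x_0^\top\theta+\mu^\top\theta})]$, which is strictly increasing in $\mu^\top\theta$; hence $\argmax_{\|\mu\|_2\le B}L(\mu,\theta)=\argmax_{\|\mu\|_2\le B}\mu^\top\theta=\mubr(\theta)$, so $\SRdm(\theta)=\max_{\|\mu\|_2\le B}L(\mu,\theta)$. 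The decision-maker's inequality is then immediate, and in fact holds for \emph{every} feasible $\mu$: $L(\muSE,\thetabr(\muSE))=\min_\theta L(\muSE,\theta)\le L(\muSE,\thetaSE)\le \max_{\|\mu\|_2\le B}L(\mu,\thetaSE)=\SRdm(\thetaSE)$, using only that $\thetaSE$ is a feasible model and $\muSE$ a feasible action.

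Next, the consistency identity (ii). By (i), $\SRdm=\max_{\|\mu\|_2\le B}L(\mu,\cdot)$ is a pointwise maximum of functions convex in $\theta$ (the logistic loss is convex in $\theta$), so $\SRdm$ is convex; and since the maximizer $\mubr(\theta)$ is unique, Danskin's theorem yields that $\SRdm$ is differentiable with $\nabla\SRdm(\theta)=\nabla_\theta L(\mubr(\theta),\theta)$. (Equivalently, one can differentiate $L(\mubr(\theta),\theta)$ directly and note the envelope correction vanishes because $\nabla_\mu L(\mubr(\theta),\theta)$ is parallel to $\theta$, which is orthogonal to the tangent space of the sphere at $\mubr(\theta)$.) Optimality of $\thetaSE$ forces $\nabla_\theta L(\mubr(\thetaSE),\thetaSE)=0$, and convexity of $\theta\mapsto L(\mubr(\thetaSE),\theta)$ with uniqueness of the decision-maker's best response gives $\thetaSE=\thetabr(\mubr(\thetaSE))$. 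The agents' inequality then follows by using $\mu=\mubr(\thetaSE)$ (feasible, as $\|\mubr(\thetaSE)\|_2=B$) as a comparator in the definition of $\muSE$: $\SRag(\muSE)\le \SRag(\mubr(\thetaSE))=R(\mubr(\thetaSE),\thetabr(\mubr(\thetaSE)))=R(\mubr(\thetaSE),\thetaSE)$.

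I expect step (ii) to be the main obstacle, since it is where the specific structure is genuinely used and where the technical care lies: one must verify the hypotheses of Danskin's theorem (compactness of $\cM$, joint continuity and $\theta$-differentiability of $L$, which requires the routine justification for exchanging $\nabla_\theta$ with the expectation), confirm that $\SRdm$ is actually differentiable---not merely subdifferentiable---at $\thetaSE$ so that stationarity is an equality, and dispose of boundary cases ($\thetaSE$ interior to $\Theta$, natural here as the logistic example places no constraint on $\theta$; and ruling out $\thetaSE=0$). By contrast, step (i) and the two short inequality chains are routine once the monotone dependence of $L$ on $\mu^\top\theta$ is observed.
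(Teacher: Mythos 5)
Your proof is correct and follows essentially the same route as the paper's: the decision-maker's inequality via the observation that $\mubr(\theta)$ also maximizes $L(\cdot,\theta)$ (so $L(\muSE,\thetabr(\muSE))\le L(\muSE,\thetaSE)\le\SRdm(\thetaSE)$), and the agents' inequality via the fixed-point identity $\thetabr(\mubr(\thetaSE))=\thetaSE$ obtained from stationarity of $\SRdm$ at $\thetaSE$ plus convexity of $L(\mu,\cdot)$. The only difference is cosmetic: you justify $\nabla\SRdm(\thetaSE)=\nabla_\theta L(\mubr(\thetaSE),\thetaSE)$ via Danskin's theorem, whereas the paper verifies the same identity by writing out both gradients explicitly and observing they coincide at $\mu=\frac{\thetaSE}{\|\thetaSE\|_2}B$.
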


There exist configurations of parameters such that the inequalities in Proposition \ref{prop:logistic_regression} are strict, meaning that both players strictly prefer the agents to lead. We illustrate this empirically.
In Figure~\ref{fig:risk_diff_across_B&p} we generate non-strategic data according to $y\sim\text{Bern}\left(p\right) \text{ and } x_0|y\sim N(4y-2,1)$ and plot the difference in risk between the two equilibria for the decision-maker and the agents, for varying $B$ and $p$. For large $p$ and small $B$, we see no difference between the equilibria. However, as $p$ decreases and $B$ increases, it becomes suboptimal for both players if the decision-maker leads.

\section{Experiments}
\label{sec:experiments}

As proof of concept, we demonstrate our theoretical findings empirically in a simulated logistic regression setting. The non-strategic data is generated as
\[y\sim\text{Bern}\left(p\right) \text{ and } x_0|y\sim N((2y-1)\alpha,I).\]
In other words, $x_0|y=1\sim N(\alpha,I)$ and $x_0|y=0\sim N(-\alpha,I)$.

In the first set of experiments, we adopt the model from Section \ref{sec:logistic} where agents are constrained in how they modify their features. In the second set of experiments we adopt a model more akin to that of \citet{dong2018strategic} where the negatively classified agents are penalized from deviating from their true features. 
The details of all numerical results are deferred to the end of this section.

In both settings, first we let the decision-maker lead and the agents follow, and then we switch the roles. For both orders of play, the slower player runs the derivative-free update \eqref{eq:last_iterate_flaxman}, and the faster player runs standard (projected) gradient descent. To be able to analyze the long-run behavior, we also numerically approximate the Stackelberg risks of the decision-maker and the strategic agents and find the global minima which correspond to the decision-maker's and agents' equilibria respectively.

\begin{figure}[t]
    \centering
    \includegraphics[width = 0.325\textwidth]{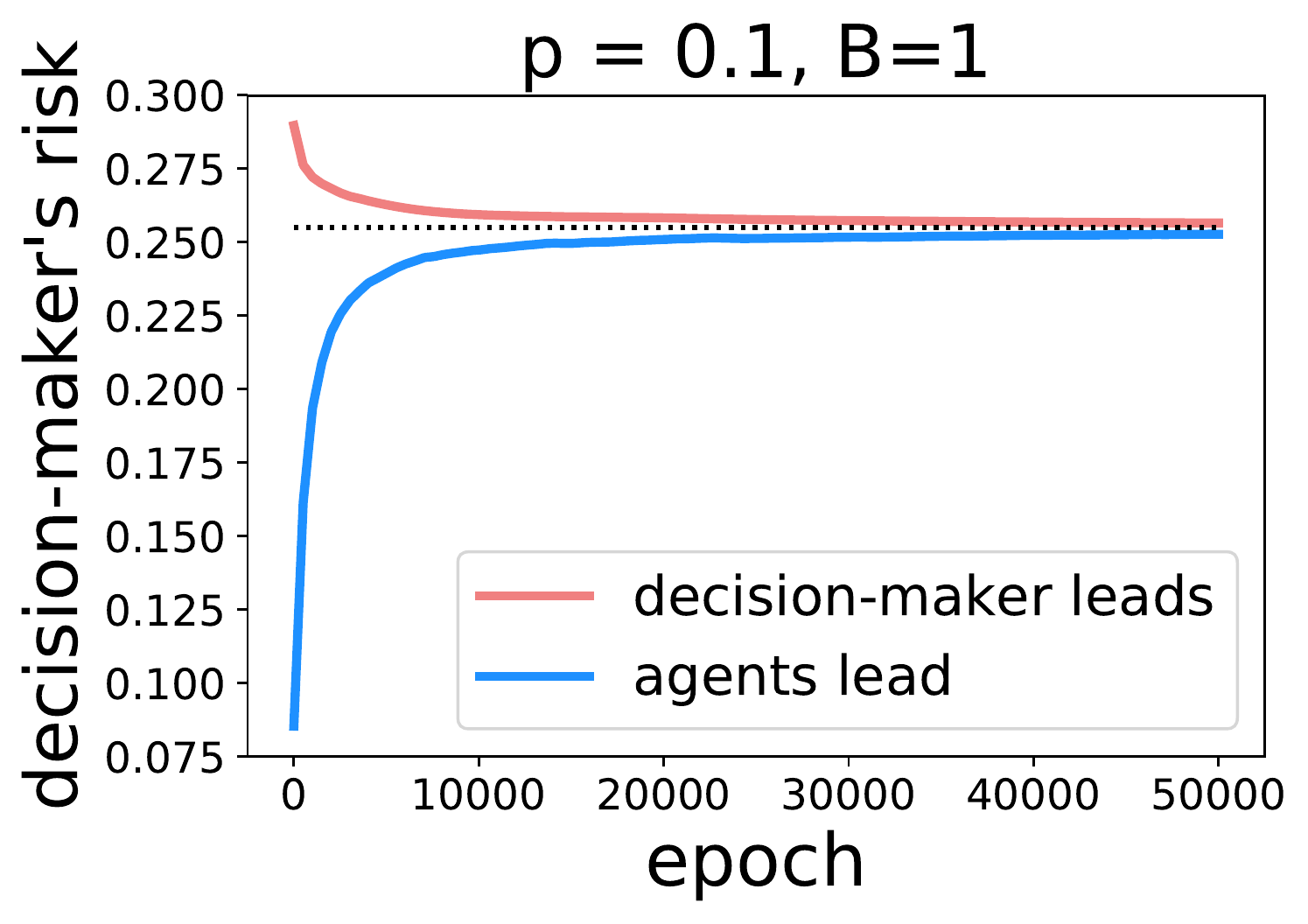}
    \includegraphics[width = 0.325\textwidth]{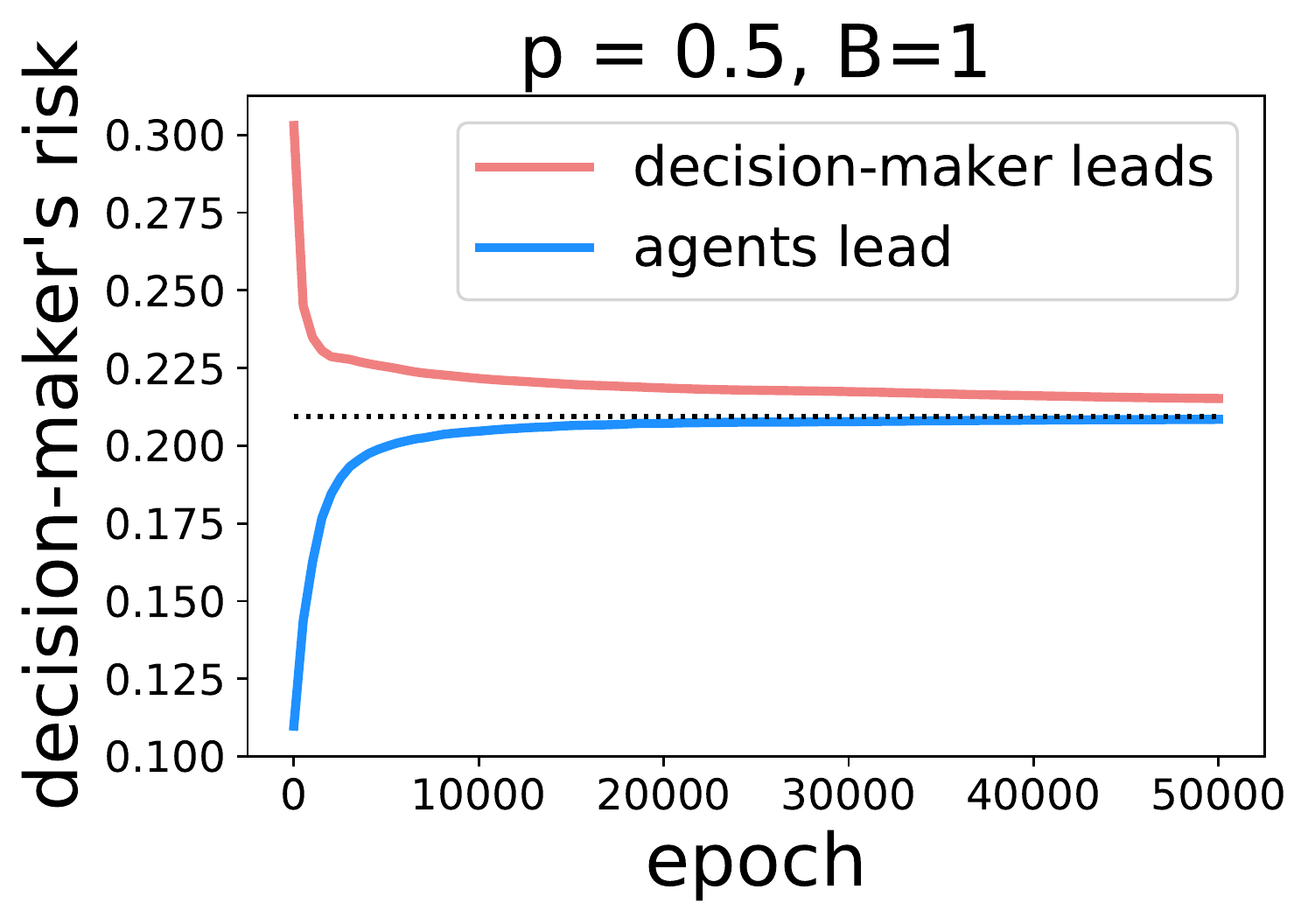}
    \includegraphics[width = 0.325\textwidth]{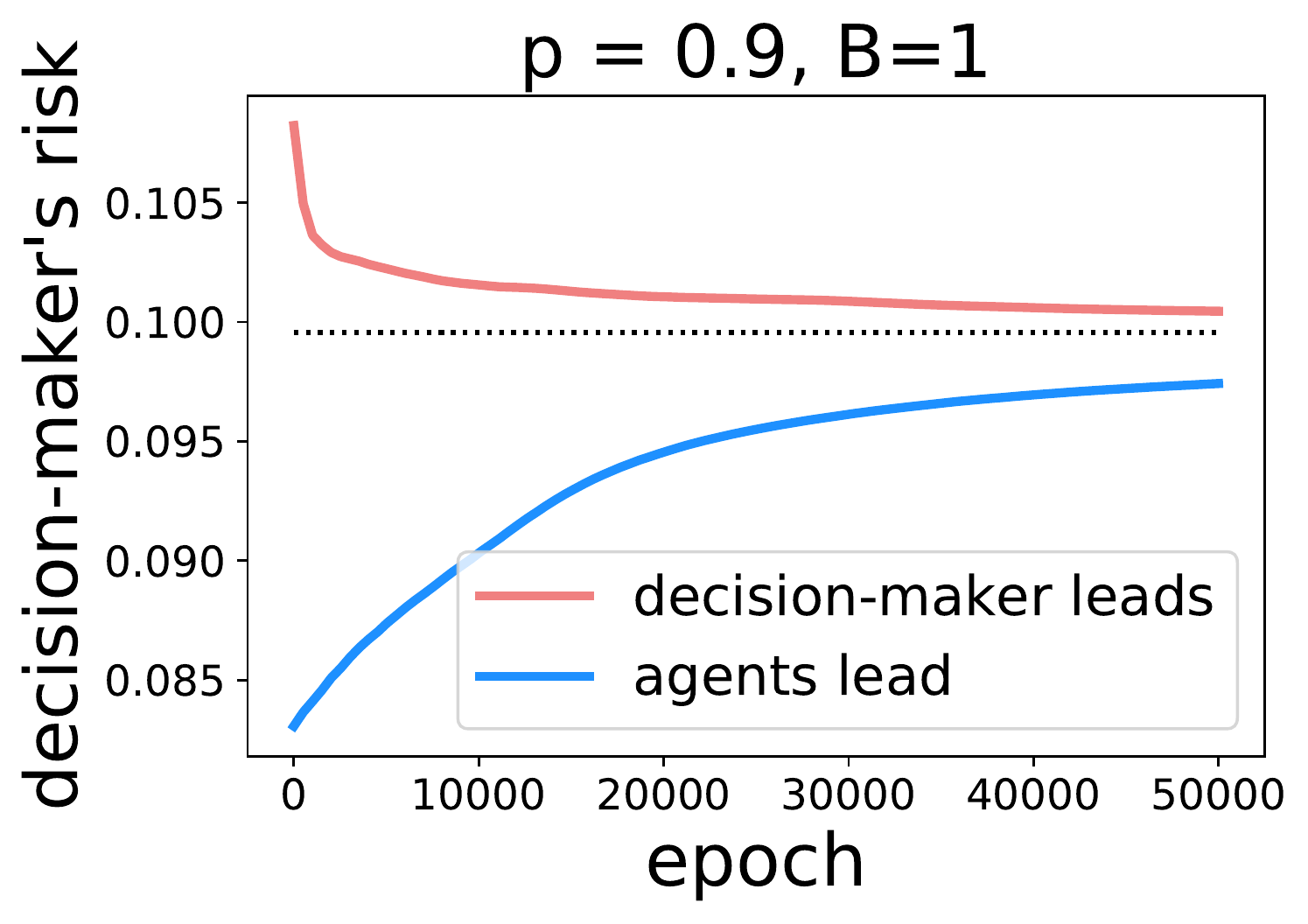}
    \includegraphics[width = 0.325\textwidth]{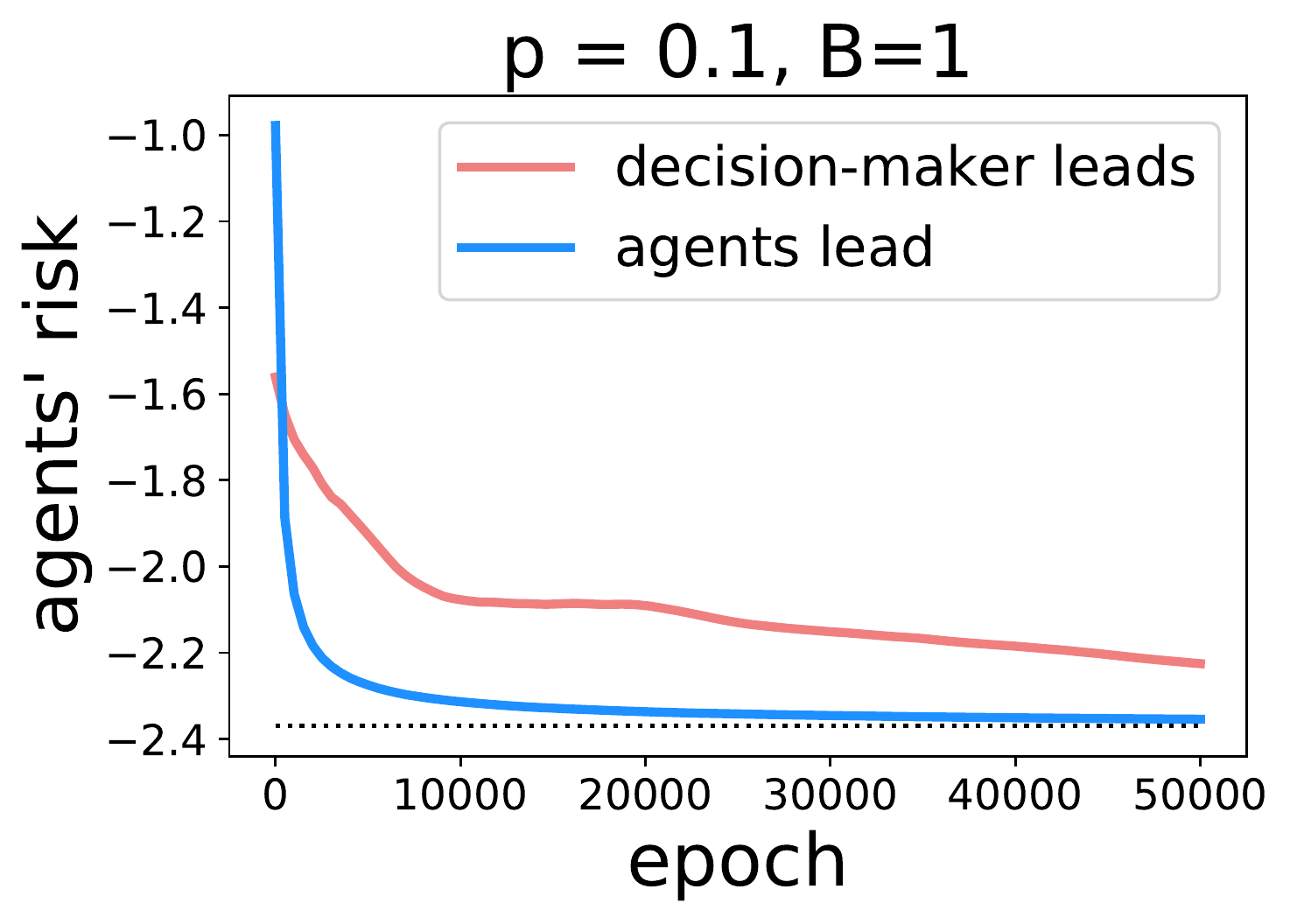}
    \includegraphics[width = 0.325\textwidth]{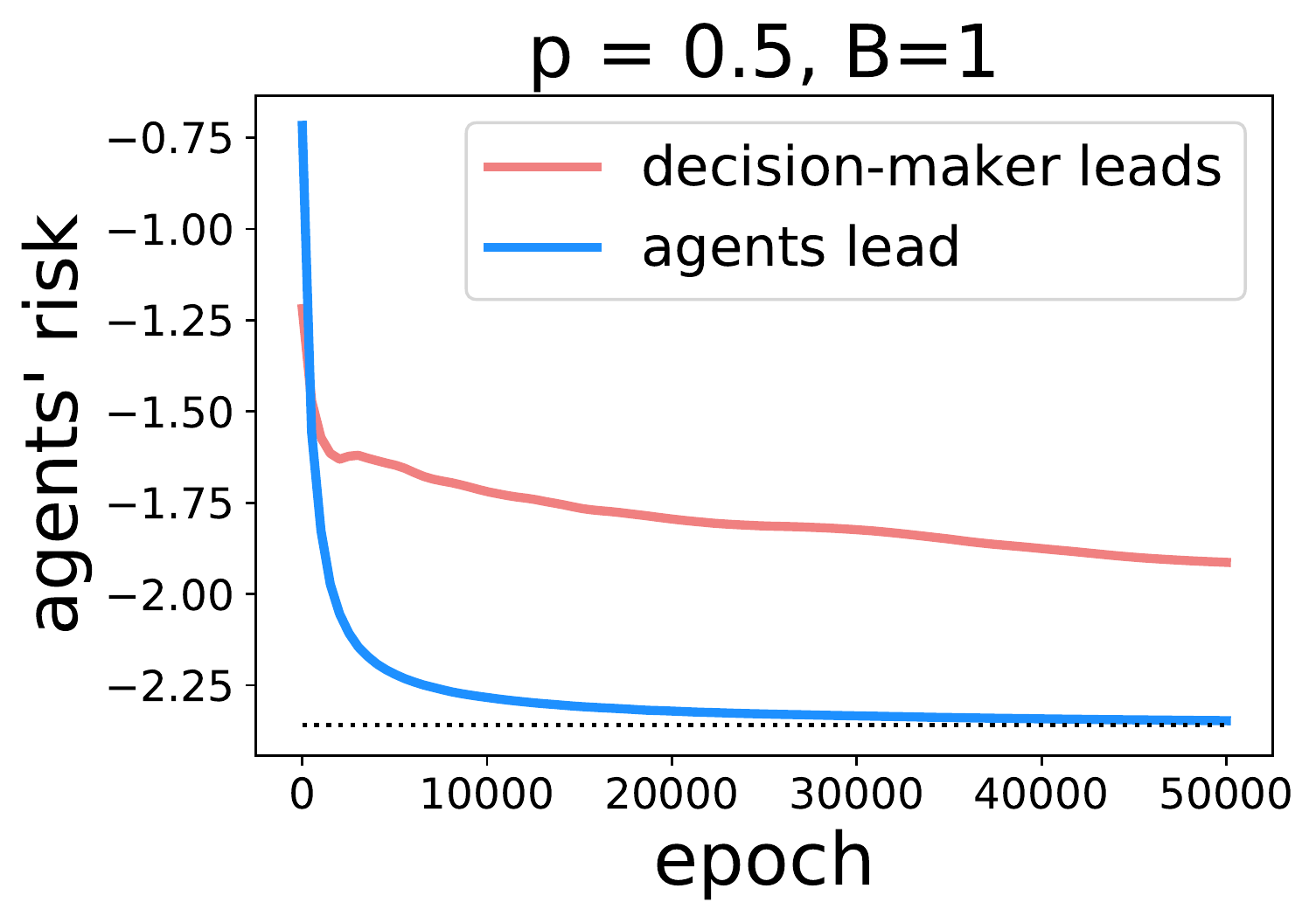}
    \includegraphics[width = 0.325\textwidth]{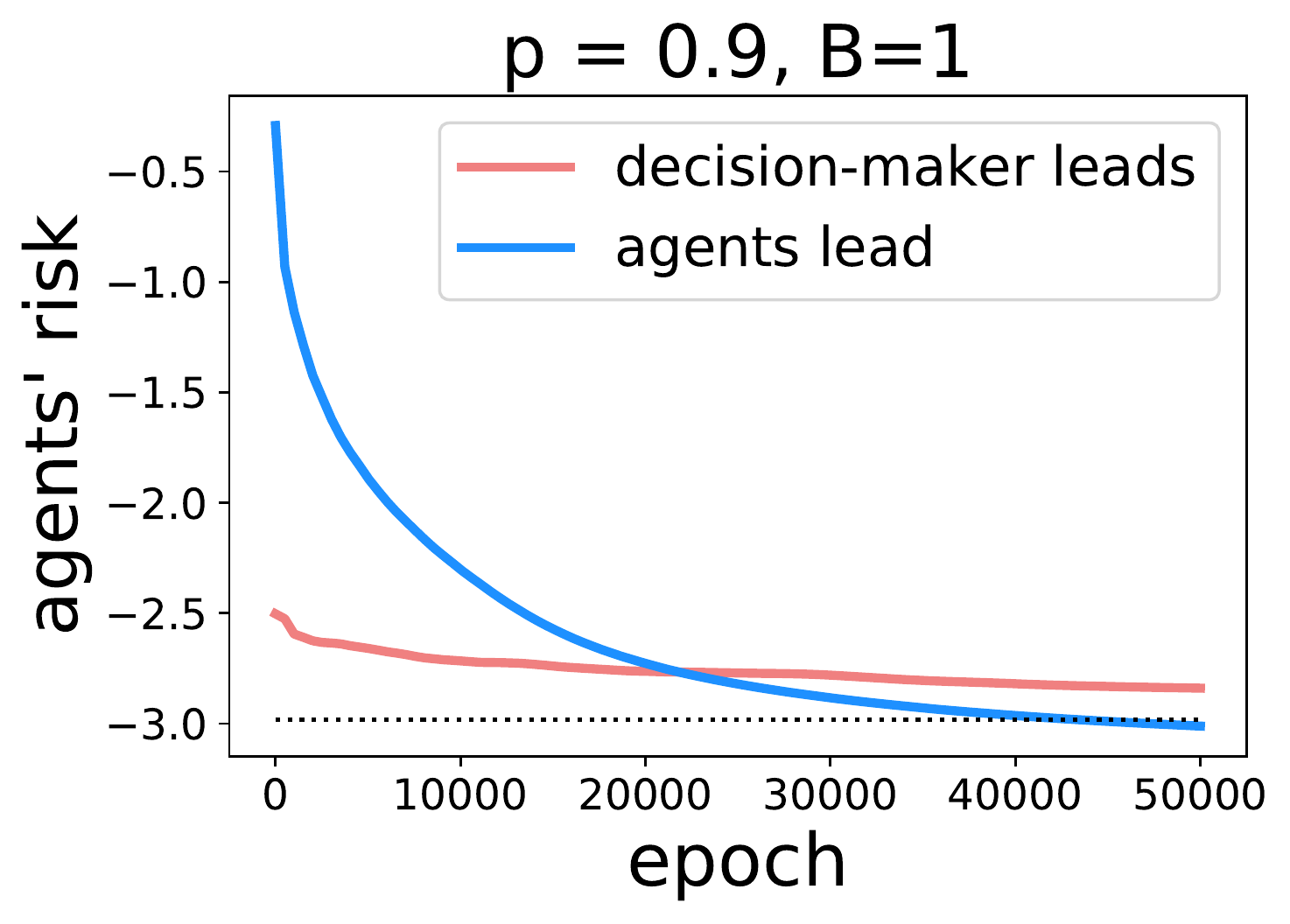}
    \caption{Decision-maker's and agents' average running risk for varying $p$ and $B=1$. The dotted lines denote the loss at the respective equilibria. For all values of $p$, the decision-maker's equilibrium and the agents' equilibrium coincide and hence both curves converge to the same value.}
\label{fig:B1}
\end{figure}

\subsection{Agents with constraints}
To begin, we verify our theoretical findings from Section \ref{sec:logistic}. We generate 100 samples, fix $\alpha = 2$, $d=1$, and vary $B$ and $p$. We run the interaction for a total of $T=50000$ epochs, with each epoch of length $\tau = 200$. 

In Figure \ref{fig:B2} and Figure \ref{fig:B1} we plot the decision-maker's and the agents' average running risk against the number of epochs, for the two different orders of play, for $B=2$ and $B=1$, respectively. For $p\in\{0.1,0.5\}$ and $B=2$, we observe a clear gap between leading and following, the agents leading being the preferred order for both players. For $p=0.9$ or $B=1$, the two equilibria coincide asymptotically; however, generally we find that the two players still prefer the agents to lead even after a finite number of epochs.

\subsection{Agents with costly deviations}
In this section, we verify our findings on a model where the decision-maker's problem is the same logistic regression problem posed in Section \ref{sec:logistic}, but the strategic agents are penalized for deviating from their true features. In particular, the agents' risk $R$ takes the form:
\begin{align*}
    R(\mu,\theta)=\frac{\lambda}{2}\| \mu\|^2-\mu^T\theta.
\end{align*}
We note that although this setup is conceptually very similar to that in Section \ref{sec:logistic} (increasing $\lambda$ can be seen as shrinking the constraint set), it allows us to highlight that the experimental results are not caused by interactions with the constraints. Further, this setup is more readily comparable to previous models studied in, e.g., \cite{dong2018strategic}.

We generate 100 samples in $\mathbb{R}^2$, fix $\alpha = 1.5 [1,1]^\top$, and vary $\lambda$ and $p$. We run the interaction for a total of $T=50000$ epochs, with each epoch of length $\tau = 100$. In Figure \ref{fig:lam1} and Figure \ref{fig:lam20} we plot the decision-maker's and the agents' average running risk against the number of epochs, for the two different orders of play and for $\lambda=1$ and $\lambda=20$ respectively.

\begin{figure}[t]
    \centering
    \includegraphics[width = 0.325\textwidth]{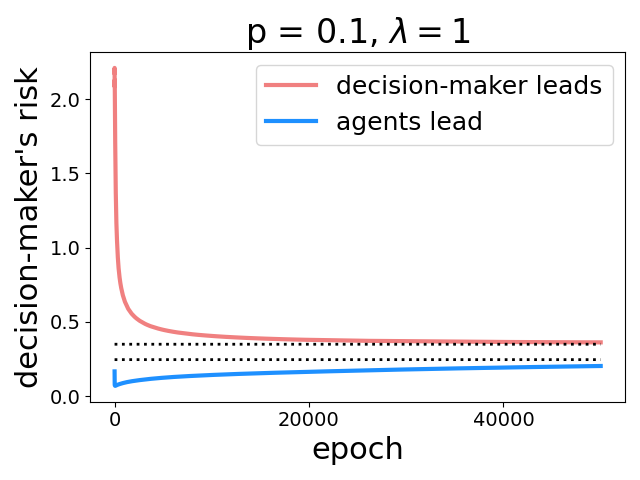}
    \includegraphics[width = 0.325\textwidth]{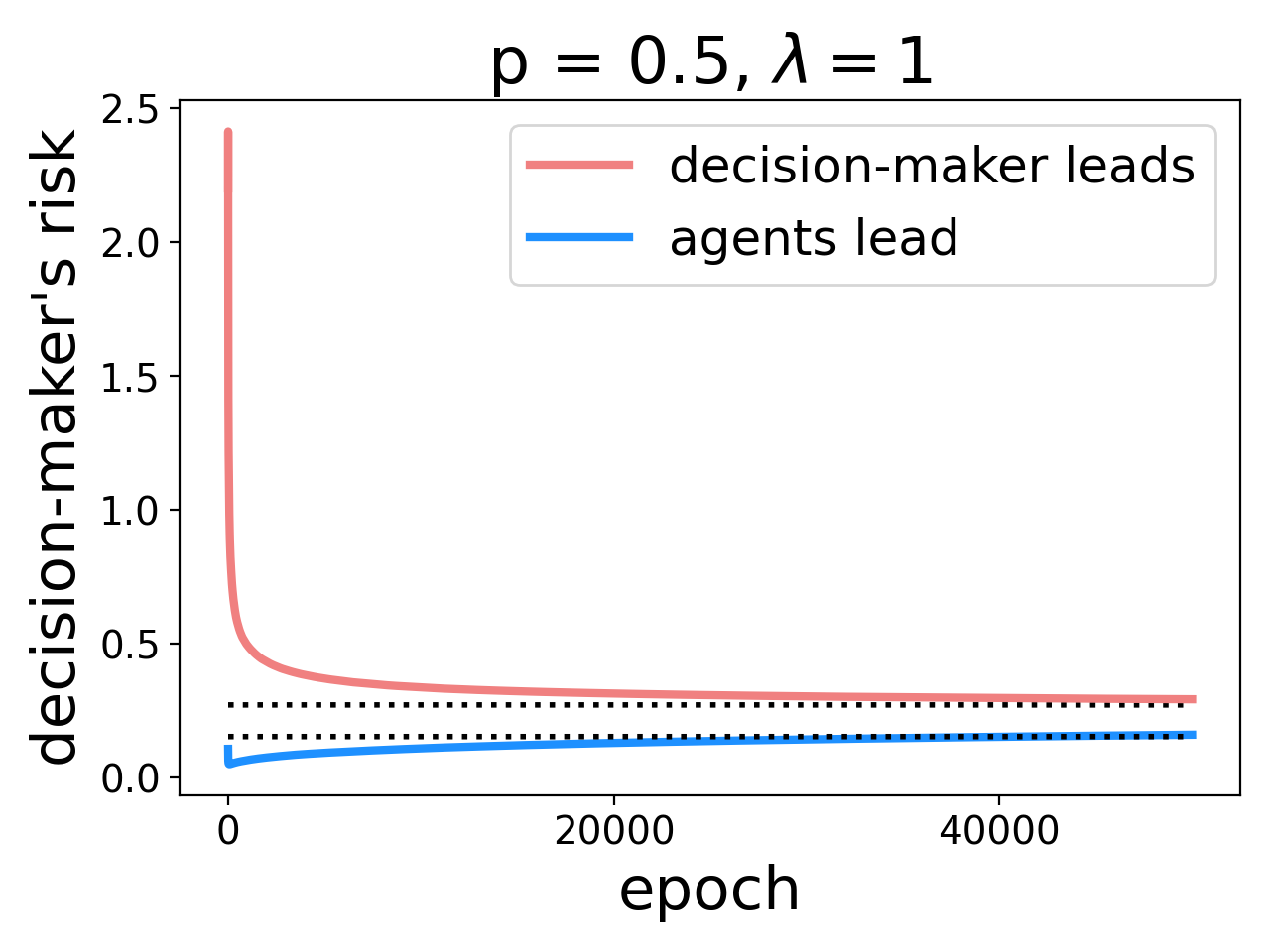}
    \includegraphics[width = 0.325\textwidth]{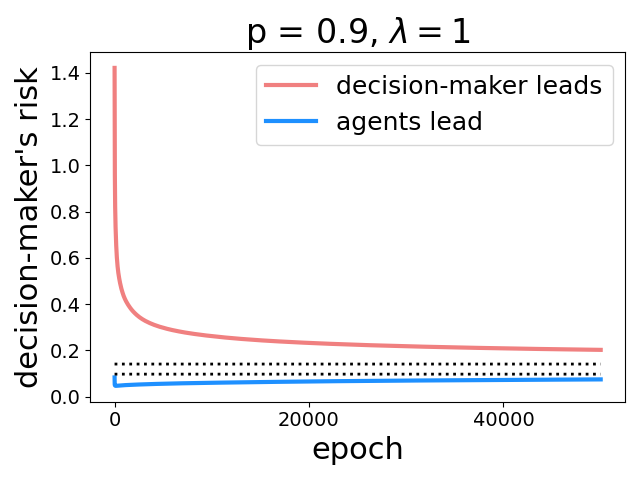}
    \includegraphics[width = 0.325\textwidth]{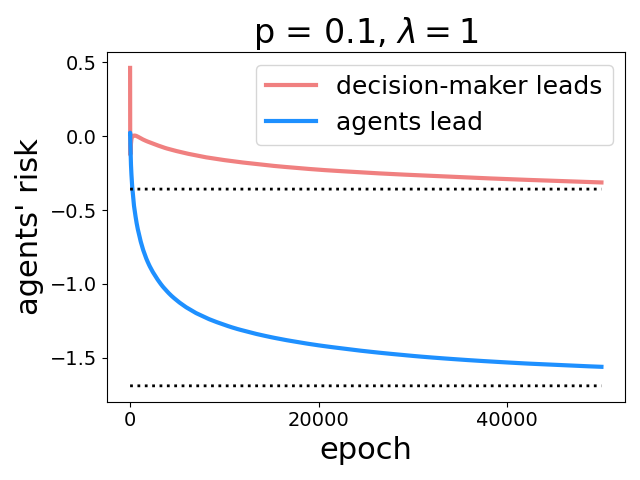}
    \includegraphics[width = 0.325\textwidth]{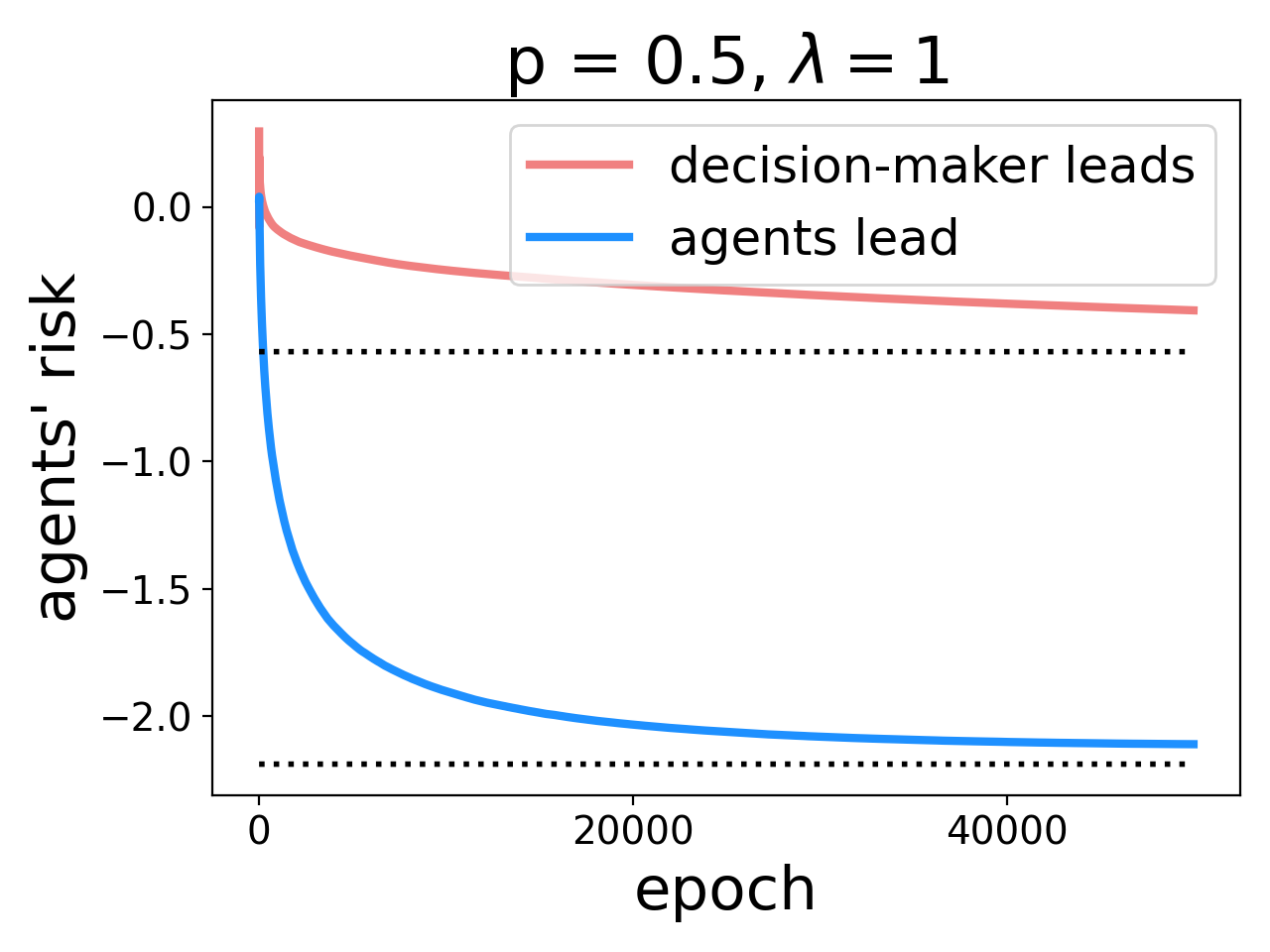}
    \includegraphics[width = 0.325\textwidth]{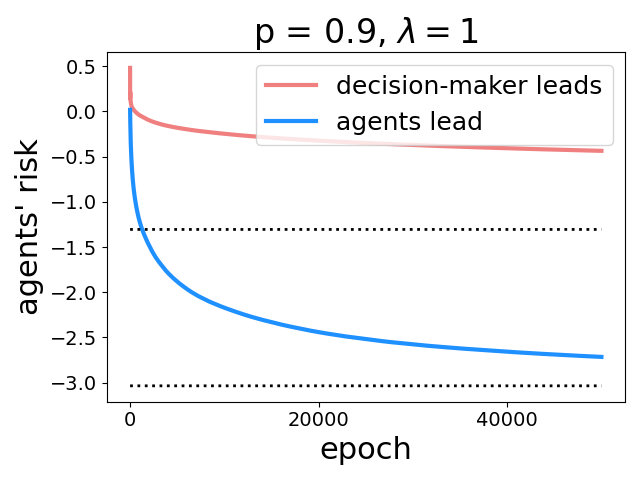}
    \caption{Decision-maker's and agents' average running risk for varying $p$ and $\lambda=1$. The dotted lines denote the loss at the respective equilibria.}
\label{fig:lam1}
\end{figure}

In Figure~\ref{fig:lam1} we observe a gap between the decision-maker's risk at their Stackelberg equilibrium and at the agents', and see that the decision-maker consistently achieves a lower risk when the agents lead. Further, we note that agents consistently prefer leading, meaning that \emph{both} the decision-maker and agents prefer if the order of play is flipped. 
In Figure~\ref{fig:lam20} we 
observe that as $\lambda$ and $p$ increase, the gap between the two equilibria shrinks and disappears entirely when $p=0.9$ and $\lambda=20$. This is similar to the behavior seen in the constrained agent problem where shrinking the constraint set can give rise to Nash equilibria where neither player strictly prefers leading or following.

\begin{figure}[t]
    \centering
    \includegraphics[width = 0.325\textwidth]{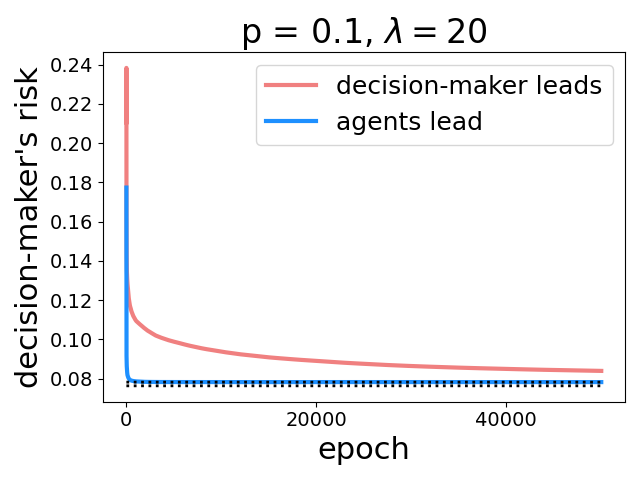}
    \includegraphics[width = 0.325\textwidth]{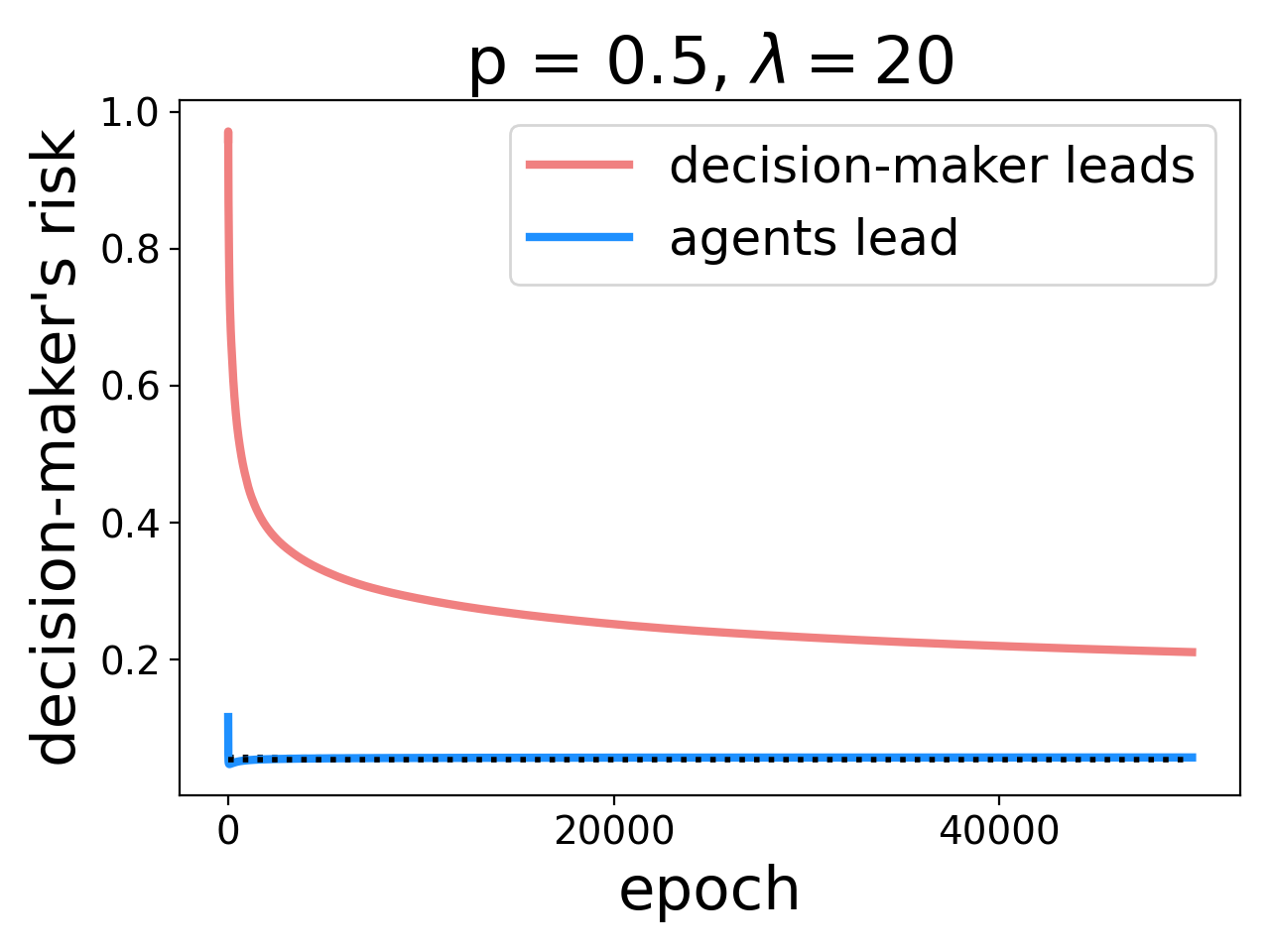}
    \includegraphics[width = 0.325\textwidth]{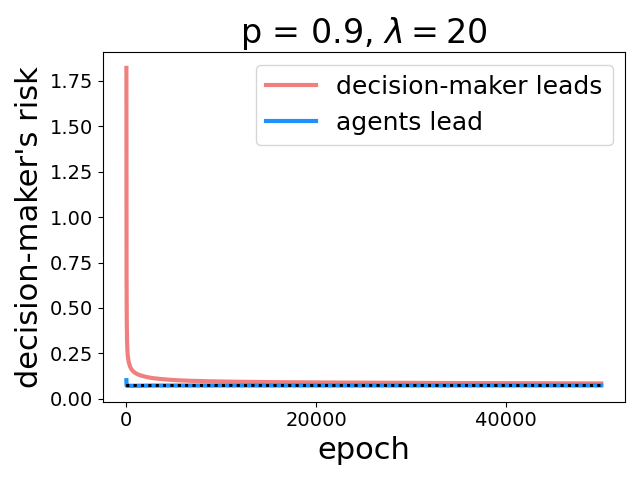}
    \includegraphics[width = 0.325\textwidth]{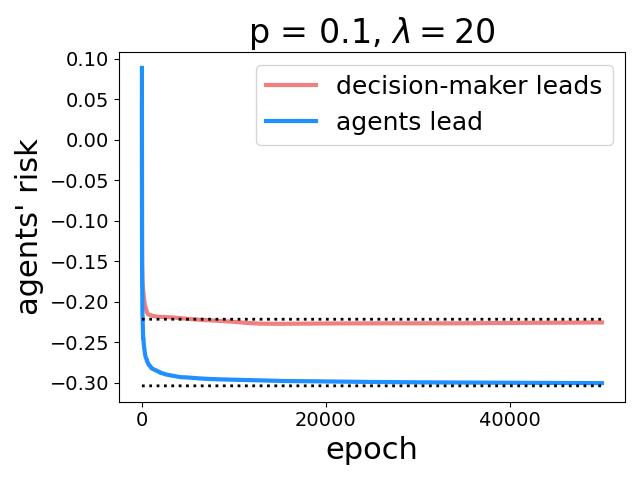}
    \includegraphics[width = 0.325\textwidth]{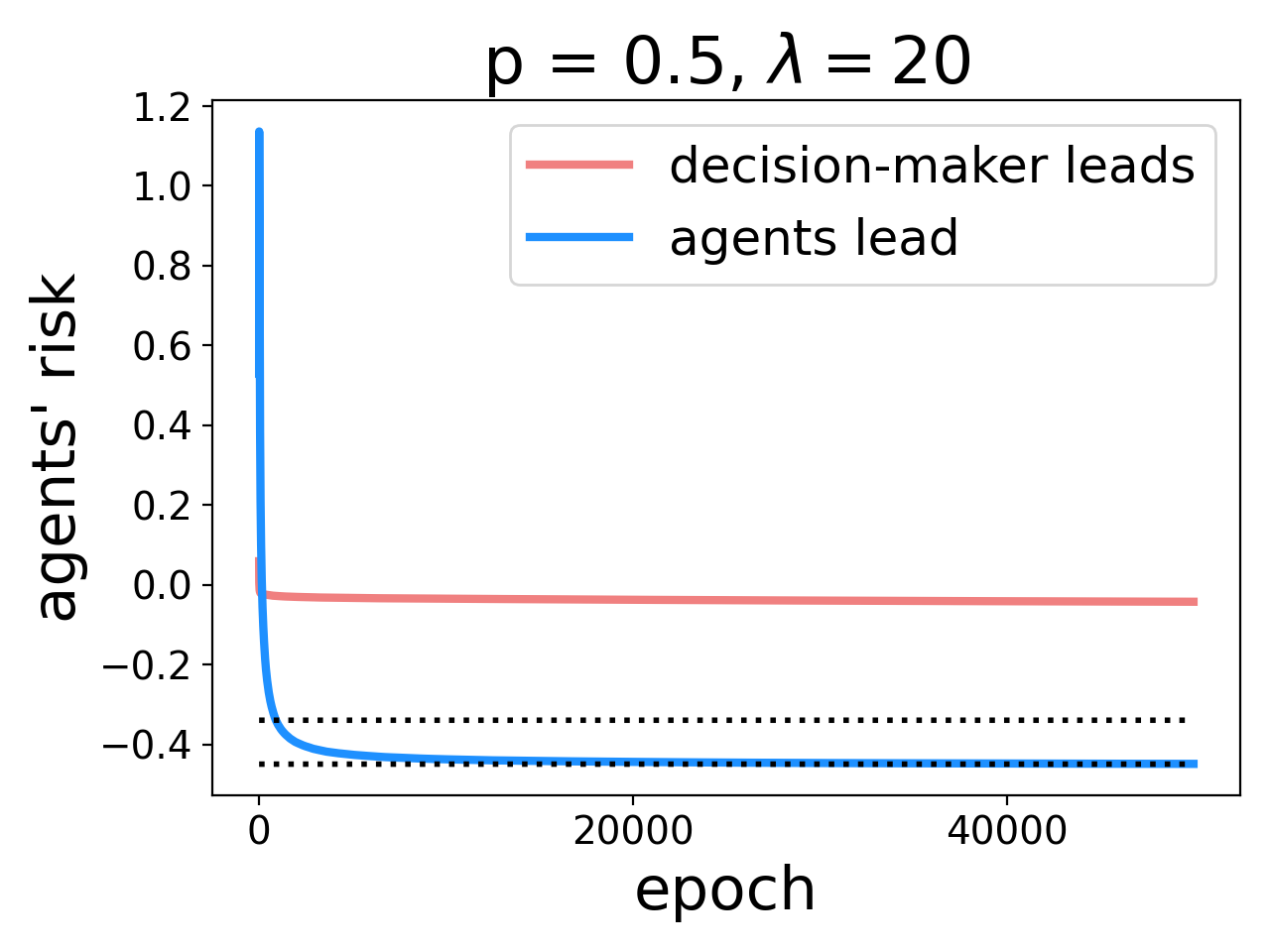}
    \includegraphics[width = 0.325\textwidth]{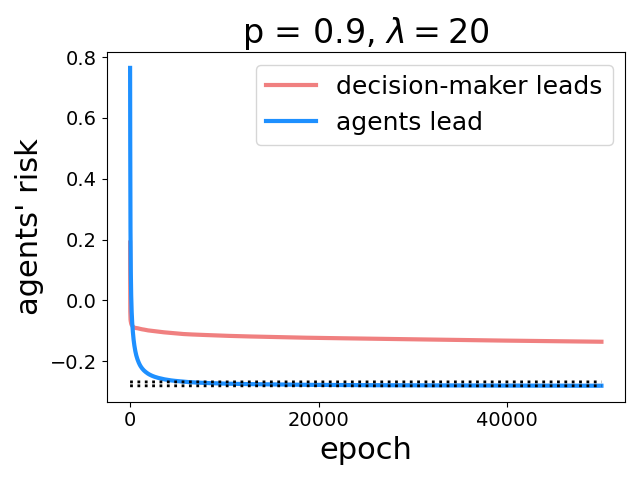}
    \caption{Decision-maker's and agents' average running risk for varying $p$ and $\lambda=20$. The dotted lines denote the loss at the respective equilibria. For $p=0.9$, the decision-maker's equilibrium and the agents' equilibrium coincide and hence both curves converge to the same value.}
\label{fig:lam20}
\end{figure}

\section{Discussion}
\label{sec:discussion}

We have shown how the consideration of update frequencies allows natural learning dynamics to converge to Stackelberg equilibria where either player can act as the leader. Moreover, we observed that the previously unexplored order of play in which the \emph{agents lead} can result in lower risk for both players. 
We have only begun to understand the implications of reversing the order of play in strategic classification, and update frequencies in general, and many questions remain open for future work.

In social settings, there are many considerations and concerns beyond minimizing risk. While our preliminary observations suggest that reversing the order of play might have benefits, we have yet to fully understand the impact of this reversed order on the population interacting with the model. That said, we do not propose a new type of interaction; real-world decision-making algorithms already possess, and employ, the power to be reactive. Our new framework is simply flexible enough to capture the difference between proactive and reactive decision-makers.

Furthermore, we assume that the agents act on a fixed timescale. Sometimes it is possible for the agents to choose their timescale \emph{strategically}. In that case, there is first a ``meta-game'' between the decision-maker and the agents, as they might compete for the leader/follower role. For example, if both prefer to lead, then both might aim to make slow updates to reach the leader position; perhaps surprisingly, this incentive might prevent any interaction at all.

Finally, we study order-of-play preferences only in linear/logistic regression with linear agent utilities. There are many other learning settings and classes of agents’ utilities and costs in the literature, and going forward it is important to obtain general conditions when leading (or following) is preferable.

\section*{Acknowledgements}

We thank Moritz Hardt for an inspiring discussion and helpful feedback on this project.  We wish to acknowledge support from the Office of Naval Research under the Vannevar Bush Fellowship program and support from HICON-LEARN (Design of HIgh CONfidence LEARNing-Enabled Systems), Defense Advanced Research Projects Agency award number FA8750-18-C-0101.


\bibliographystyle{plainnat}
\bibliography{refs}

\newpage

\appendix

\section{Proofs}

\begin{lemma}
\label{lemma:average_dist_0}
Suppose that $\cM$ is compact. If the decision-maker is proactive and the strategic agents' actions satisfy condition \eqref{eq:rational_dmleads}, then
\[\lim_{\tau\rightarrow\infty} \frac{1}{\tau} \sum_{j=1}^\tau \E \|\mu_{j,\tau} - \mubr(\theta_t)\|_2 = 0.\]
Similarly, if the decision-maker is reactive and
\[\lim_{T\rightarrow\infty}\lim_{\tau\rightarrow\infty} \frac{1}{T} \sum_{t=1}^T \E \SRag(\mu_{t}) - \SRag(\muSE) = 0,\]
then
\[\lim_{T\rightarrow\infty}\lim_{\tau\rightarrow\infty} \frac{1}{T} \sum_{t=1}^T \E \|\mu_{t} - \muSE\|_2 = 0.\]
\end{lemma}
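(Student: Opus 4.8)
The plan is to reduce both statements to a single elementary principle: on a compact set, a continuous function with a \emph{unique} minimizer is ``sharp,'' in the sense that points bounded away from the optimum have value bounded away from the optimum. Concretely, I would first record the following fact. If $f$ is continuous on a compact set $K\subseteq\R^m$ with a unique minimizer $x^\star$, then for every $\epsilon>0$,
\[\omega_f(\epsilon)\ \defeq\ \inf\{\,f(x)-f(x^\star)\ :\ x\in K,\ \|x-x^\star\|_2\ge\epsilon\,\}\ >\ 0,\]
with the convention $\omega_f(\epsilon)=+\infty$ when the indexing set is empty. This holds because $K_\epsilon=\{x\in K:\|x-x^\star\|_2\ge\epsilon\}$ is compact, so if it is nonempty $f$ attains its minimum over $K_\epsilon$ at some $\bar x\neq x^\star$, and uniqueness of $x^\star$ forces $f(\bar x)>f(x^\star)$. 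This is the one place where the model's standing assumptions that best responses are unique and that $\cM$ is compact are used.

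For the first claim, I would fix $\theta_t$ and write $\mu^\star=\mubr(\theta_t)$. Since $\mu_{t,j}\in\cM$, the quantity $g_j\defeq R(\mu_{t,j},\theta_t)-\min_{\mu\in\cM}R(\mu,\theta_t)$ is nonnegative, and on the event $\{\|\mu_{t,j}-\mu^\star\|_2>\epsilon\}$ one has $g_j\ge\omega_R(\epsilon)$ by the fact above applied to $f=R(\cdot,\theta_t)$, which is continuous with unique minimizer $\mu^\star$. Markov's inequality then gives $\P(\|\mu_{t,j}-\mu^\star\|_2>\epsilon)\le\E[g_j]/\omega_R(\epsilon)$, and splitting the expected distance into its contributions on this event and its complement yields
\[\frac1\tau\sum_{j=1}^\tau\E\|\mu_{t,j}-\mu^\star\|_2\ \le\ \epsilon\ +\ \frac{\mathrm{diam}(\cM)}{\omega_R(\epsilon)}\left(\frac1\tau\sum_{j=1}^\tau\E R(\mu_{t,j},\theta_t)-\min_{\mu\in\cM}R(\mu,\theta_t)\right).\]
The parenthesized term vanishes as $\tau\to\infty$ by \eqref{eq:rational_dmleads}, so $\limsup_{\tau\to\infty}\frac1\tau\sum_j\E\|\mu_{t,j}-\mu^\star\|_2\le\epsilon$; since $\epsilon>0$ is arbitrary and the left-hand side is nonnegative, the limit is $0$. (The application in Theorem~\ref{thm:flaxman_general} then follows since $\|\bar\mu_t-\mubr(\theta_t)\|_2\le\frac1\tau\sum_j\|\mu_{t,j}-\mubr(\theta_t)\|_2$ and $T$ is fixed.)

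The second claim is handled identically, with $(\SRag,\muSE)$ playing the role of $(R(\cdot,\theta_t),\mubr(\theta_t))$ and the epoch index $t\in\{1,\dots,T\}$ playing the role of $j\in\{1,\dots,\tau\}$; here $\SRag$ is continuous (indeed Lipschitz, under the hypotheses in force whenever this lemma is invoked) with unique minimizer $\muSE$, and $\SRag(\mu_t)-\SRag(\muSE)\ge0$. One obtains
\[\frac1T\sum_{t=1}^T\E\|\mu_t-\muSE\|_2\ \le\ \epsilon\ +\ \frac{\mathrm{diam}(\cM)}{\omega_{\SRag}(\epsilon)}\left(\frac1T\sum_{t=1}^T\E\SRag(\mu_t)-\SRag(\muSE)\right),\]
and taking $\limsup_{\tau\to\infty}$ followed by $\limsup_{T\to\infty}$ and invoking the hypothesis $\lim_{T\to\infty}\lim_{\tau\to\infty}\frac1T\sum_t\E\SRag(\mu_t)-\SRag(\muSE)=0$ bounds the right-hand side by $\epsilon$; letting $\epsilon\downarrow0$ finishes the proof. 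The only genuine obstacle is the sharpness fact in the first paragraph — without uniqueness of the minimizer (or some explicit growth/curvature condition) it can fail — whereas everything downstream (Markov's inequality, splitting the expectation, and interchanging the limits, which is legitimate here simply because all terms are nonnegative and the bound is monotone in the suboptimality gap) is routine.
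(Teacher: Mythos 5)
Your proof is correct and rests on the same two ingredients as the paper's: the ``sharpness'' property that uniqueness of the minimizer plus compactness of $\cM$ forces a positive value gap $\omega(\epsilon)$ outside every $\epsilon$-ball, and the diameter of $\cM$ to convert a probability bound into a bound on the expected distance. The only difference is presentational --- you run the argument directly via Markov's inequality, yielding an explicit quantitative bound, whereas the paper argues the contrapositive by contradiction along subsequences --- so this is essentially the same proof.
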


\begin{proof}
We will prove the second statement; the proof of the first statement is completely analogous.

By the uniqueness of $\muSE$ and compactness of $\cM$, notice that for all $\mu$ and $\epsilon>0$ such that $\|\mu-\muSE\|_2 \geq \epsilon$, we have $\SRag(\mu) - \SRag(\muSE) \geq \delta(\epsilon)>0$, for some $\delta(\epsilon)$. We will use this observation to argue that, if $\frac{1}{T} \sum_{t=1}^T \E \|\mu_{t} - \muSE\|_2 \not\rightarrow 0$, then that must imply positive regret in the limit, which concludes the proof by contradiction.

Denote $\dist_t = \lim_{\tau\rightarrow\infty} \E \|\mu_{t} - \muSE\|_2$, and suppose that
\[\lim_{T\rightarrow\infty}\frac{1}{T} \sum_{t=1}^T \dist_t \neq 0.\]
Then, that implies that for every $\epsilon>0$, there is a sequence $\{a_k\}_{k=1}^\infty$ such that $\frac{1}{a_k} \sum_{t=1}^{a_k} \dist_t > \epsilon$ for all $k$. Fix $0<\epsilon'<\epsilon$, and denote $p_k = \frac{1}{a_k}|\{t \leq a_k: \dist_t>\epsilon'\}|$. Then, we have
\[\epsilon<\frac{1}{a_k} \sum_{t=1}^{a_k} \dist_t \leq p_k D_{\cM} + \epsilon',\]
where $D_\cM = \max_{\mu,\mu'\in \cM} \|\mu - \mu'\|_2$. Therefore, $p_k\geq \frac{\epsilon-\epsilon'}{D_{\cM}}>0$. This shows that in the sum $\frac{1}{a_k} \sum_{t=1}^{a_k} \dist_t$ there is a \emph{constant} fraction of terms outside a ball of radius $\epsilon'$ around $\muSE$, in expectation. Fix one such term $\dist_{t^*}$. Then, we know
\[\epsilon' \leq \dist_{t^*} \leq \lim_{\tau\rightarrow\infty}\P\{\|\mu_{t^*} - \muSE\|_2\geq \epsilon'/2\}D_{\cM} + \epsilon'/2.\]
Therefore, we can conclude that $\lim_{\tau\rightarrow\infty}\P\{\|\mu_{t^*} - \muSE\|_2\geq \epsilon'/2\}\geq \frac{\epsilon'}{2D_{\cM}}>0$. On this event, we also know that $\lim_{\tau\rightarrow\infty}\SRag(\mu_{t^*}) - \SRag(\muSE) > \delta(\epsilon'/2)$. Putting everything together, we have shown that

\[\frac{1}{a_k}\sum_{t=1}^{a_k} \lim_{\tau\rightarrow \infty} \E \SRag(\mu_{t}) - \SRag(\muSE) \geq \Delta>0,\]

and this holds for all terms in the sequence $\{a_k\}$. This finally implies that $\frac{1}{T}\sum_{t=1}^{T} \E \SRag(\mu_{t}) - \SRag(\muSE) \not\rightarrow~0$. Since this contradicts the hypothesis, we conclude that $\lim_{T\rightarrow\infty}\lim_{\tau\rightarrow\infty} \frac{1}{T} \sum_{t=1}^T \E \|\mu_{t} - \muSE\|_2 = 0$.
\end{proof}

\subsection{Proof of Theorem \ref{thm:flaxman_general}}

We let $\hSRdm(\theta) = \E_{v\sim\mathrm{Unif}(\cB)}[\SRdm(\theta + \delta v)]$, where $\cB$ denotes the unit ball. Then, we know that
\[\nabla \hSRdm(\theta) = \frac{d}{\delta} \E_{u\sim\cS}[\SRdm(\theta + \delta u)u],\]
where $\cS$ denotes the unit sphere. Denote by $\hthetaSE$ the optimum of $\hSRdm$, and notice that $\hSRdm$ is convex since $\SRdm$ is convex.

For any fixed $t$, we have
\begin{align}
\label{eq:gd-expansion}
    \|\phi_{t+1} - \hthetaSE\|_2^2 &\leq \|\phi_t - \eta_t \frac{d}{\delta} L(\bar\mu_t, \phi_t + \delta u_t)u_t - \hthetaSE \|_2^2 \nonumber \\
    &\leq \|\phi_t - \hthetaSE\|_2^2 -2\eta_t \frac{d}{\delta} L(\bar\mu_t, \phi_t + \delta u_t)u_t^\top (\phi_t - \hthetaSE) + \eta_t^2 \frac{d^2}{\delta^2} \|L(\bar\mu_t, \phi_t + \delta u_t)u_t\|_2^2 \nonumber \\
    &\leq \|\phi_t - \hthetaSE\|_2^2 - 2\eta_t \frac{d}{\delta} L(\bar\mu_t, \phi_t + \delta u_t)u_t^\top (\phi_t - \hthetaSE) + \eta_t^2 \frac{d^2 B^2}{\delta^2}.
\end{align}

Focusing on the middle term, we have
\begin{align*}
    L(\bar\mu_t, \phi_t + \delta u_t)u_t^\top (\phi_t - \hthetaSE) &= L(\bar\mu_t, \phi_t + \delta u_t)u_t^\top (\phi_t - \hthetaSE) \pm L(\mubr(\theta_t), \phi_t + \delta u_t)u_t^\top (\phi_t - \hthetaSE)\\
    &\geq L(\mubr(\phi_t + \delta u_t), \phi_t + \delta u_t)u_t^\top (\phi_t - \hthetaSE) - \beta_\mu \|\bar\mu_t - \mubr(\theta_t)\|_2 D_\Theta.
\end{align*}
Denote $\epsilon_t\defeq \mathbb{E}\|\bar\mu_t - \mubr(\theta_t)\|_2$. Taking expectations of both sides, we get
\[\E L(\bar\mu_t, \phi_t + \delta u_t)u_t^\top (\phi_t - \hthetaSE) \geq L(\mubr(\phi_t + \delta u_t), \phi_t + \delta u_t)u_t^\top (\phi_t - \hthetaSE) - \beta_\mu D_\Theta \epsilon_t.\]
Going back to equation \eqref{eq:gd-expansion} and taking expectations of both sides, we get
\begin{align*}
    \E\|\phi_{t+1} - \hthetaSE\|_2^2 &\leq \E\|\phi_t - \hthetaSE\|_2^2 - 2\eta_t(\E[ \nabla \hSRdm(\phi_t)^\top (\phi_t - \hthetaSE)] - \beta_\mu D_\Theta \epsilon_t) + \eta_t^2 \frac{d^2 B^2}{\delta^2}\\
    &\leq \E\|\phi_t - \hthetaSE\|_2^2 - 2\eta_t(\E\hSRdm(\phi_t) - \hSRdm(\hthetaSE) - \beta_\mu D_\Theta \epsilon_t) + \eta_t^2 \frac{d^2 B^2}{\delta^2},
\end{align*}
where in the last line we use the fact that $\hSRdm$ is convex. After rearranging, we have
\begin{align*}
    \E\hSRdm(\phi_t) - \hSRdm(\hthetaSE) &\leq \frac{1}{2\eta_t}\left(\E\|\phi_t - \hthetaSE\|_2^2 -\E\|\phi_{t+1} - \hthetaSE\|_2^2\right) + \frac{\eta_t d^2 B^2}{2\delta^2} + \beta_\mu D_\Theta\epsilon_t.
\end{align*}
Summing up over $t\in\{1,\dots,T\}$, we get
\begin{align*}
\sum_{t=1}^T (\E[\hSRdm(\phi_t)] - \hSRdm(\hthetaSE)) &\leq \frac{D_\Theta^2}{2\eta_1} + \frac{1}{2}\sum_{t=1}^{T-1} \left(\frac{1}{\eta_{t+1}} - \frac{1}{\eta_t}\right) D_\Theta^2 + \frac{d^2 B^2}{2\delta^2}\sum_{t=1}^T\eta_t + \beta_\mu D_\Theta\sum_{t=1}^T \epsilon_t\\
&\leq \frac{D_\Theta^2}{2\eta_T} + \frac{d^2 B^2}{2\delta^2}\sum_{t=1}^T\eta_t + \beta_\mu D_\Theta\sum_{t=1}^T \epsilon_t,
\end{align*}
where we use the fact that $\eta_t$ is non-increasing.

We use the fact that $\SRdm$ is Lipschitz to bound the difference between $\SRdm$ and $\hSRdm$:
\[\left|\E[\hSRdm(\phi_t) - \SRdm(\theta_t)]\right| \leq 2 \beta\delta,\]
and similarly
\[\min_{\theta\in\Theta} (\hSRdm(\theta) - \SRdm(\theta) + \SRdm(\theta)) \geq \min_\theta \SRdm(\theta) -\beta\delta.\]
Putting everything together, we conclude
\[\sum_{t=1}^T (\E[\SRdm(\theta_t)] - \SRdm(\thetaSE)) \leq \frac{D_\Theta^2}{2\eta_T} + \frac{d^2 B^2}{2\delta^2} \sum_{t=1}^T \eta_t + 3\beta\delta T + \beta_\mu D_\Theta \sum_{t=1}^T \epsilon_t.\]
Setting $\eta_t = \eta_0 d^{-\frac{1}{2}} t^{-\frac{3}{4}}$ and $\delta = \delta_0 d^{\frac{1}{2}}T^{-1/4}$ yields the final bound:
\[\sum_{t=1}^T (\E[\SRdm(\theta_t)] - \SRdm(\thetaSE))  \leq \left(\frac{D_\Theta^2}{2\eta_0} + \frac{2 B^2}{\delta_0^2}\right) \sqrt{d}T^{3/4} + \beta_\mu D_\Theta \sum_{t=1}^T \epsilon_t.\]
For the second statement, observe that
\[
\|\bar\mu_t - \mubr(\theta_t)\|_2 \leq \frac{1}{\tau} \sum_{j=1}^{\tau} \|\mu_{t,j} - \mubr(\theta)\|_2,
\]
and the right-hand side tends to zero in expectation as $\tau\rightarrow\infty$ by Lemma \ref{lemma:average_dist_0}.

\subsection{Proof of Theorem \ref{thm:flaxman_PL}}

By standard convergence guarantees of gradient descent on PL objectives \cite{karimi2016linear}, we have
\[ \|\mu_t - \mubr(\theta_{t}) \|_2 \le \sqrt{\kappa} (1-\gamma \eta_\mu)^{\tau/2} \|\mu_{t-1} - \mubr(\theta_{t}) \|_2, \]
where $\kappa\defeq \frac{\beta_\mu^R}{\gamma}$. Denote 
$\epsilon_t\defeq\|\mu_t - \mubr(\theta_t) \|_2$. We will show that $\epsilon_t$ decays fast enough due to the decay in $\eta_t$. In particular, we have
\begin{align*}
    \epsilon_t=\|\mu_t - \mubr(\theta_t) \|_2 &\leq \sqrt{\kappa} (1-\gamma \eta_\mu)^{\tau/2} \|\mu_{t-1} - \mubr(\theta_{t}) \|_2\\
    &=\sqrt{\kappa} (1-\gamma \eta_\mu)^{\tau/2} \|\mu_{t-1} - \mubr(\theta_{t-1}) + \mubr(\theta_{t-1}) -\mubr(\theta_{t}) \|_2\\
    &\le \sqrt{\kappa} (1-\gamma \eta_\mu)^{\tau/2} \left(\|\mu_{t-1} - \mubr(\theta_{t-1})\|_2 + \|\mubr(\theta_{t-1}) -\mubr(\theta_{t}) \|_2\right)\\
    &\le \sqrt{\kappa} (1-\gamma\eta_\mu)^{\tau/2} \|\mu_{t-1} - \mubr(\theta_{t-1}) \|_2\\
    &\quad + \sqrt{\kappa} (1-\gamma \eta_\mu)^{\tau/2}\frac{\eta_t d \beta_{\BR}}{\delta}\|L(\mu_t,\phi_t+\delta u_t)u_t\|_2\\
    &\le \sqrt{\kappa} (1-\gamma\eta_\mu)^{\tau/2} \epsilon_{t-1} + \sqrt{\kappa} (1-\gamma \eta_\mu)^{\tau/2}\frac{\eta_t d \beta_{\BR}}{\delta}B.
\end{align*}

Now suppose $\tau$ is chosen such that $\tau >\frac{\log(\kappa)}{\log\left(\frac{1}{1-\gamma\eta_\mu}\right)}$. Then we have that $\alpha(\tau)\defeq\sqrt{\kappa} (1-\gamma\eta_\mu)^{\tau/2}<1$. (Note that as $\tau$ increases, $\alpha(\tau)$ can be driven to zero.) Altogether, we find that:

\[ \epsilon_t\le \alpha(\tau) \epsilon_{t-1}+ \alpha(\tau)\eta_t \frac{ d \beta_{\BR}B}{\delta}.\]

Unrolling the recursion, we find that
\begin{align*}
    \epsilon_t&\le \alpha(\tau)^{t}\epsilon_0+\frac{ d \beta_{\BR}B}{\delta}\sum_{i=1}^{t}\alpha(\tau)^{t+1-i}\eta_i. 
\end{align*}
Summing up over $t\in\{1,\dots,T\}$, we get
\begin{align*}
    \sum_{t=1}^T \epsilon_t&\le \epsilon_0 \sum_{t=1}^T \alpha(\tau)^{t} +\frac{ d \beta_{\BR}B}{\delta}\sum_{t=1}^T\sum_{i=1}^{t}\alpha(\tau)^{t+1-i}\eta_i\\
    &\leq \frac{\epsilon_0}{1-\alpha(\tau)} +\frac{ d \beta_{\BR}B}{\delta}\sum_{t=1}^T \sum_{i=1}^{T}\alpha(\tau)^{t+1-i}\eta_i \mathbf{1}\{i\leq t\}\\
    &= \frac{\epsilon_0}{1-\alpha(\tau)} +\frac{ d \beta_{\BR}B}{\delta} \sum_{i=1}^{T}\eta_i \sum_{t=1}^T\alpha(\tau)^{t+1-i} \mathbf{1}\{i\leq t\}\\
    &= \frac{\epsilon_0}{1-\alpha(\tau)} +\frac{ d \beta_{\BR}B}{\delta} \sum_{i=1}^{T} \eta_i \sum_{t=i}^T\alpha(\tau)^{t+1-i}\\
    &\leq \frac{\epsilon_0}{1-\alpha(\tau)}+\frac{ d \beta_{\BR}B}{\delta(1-\alpha(\tau))}\sum_{t=1}^{T} \eta_t.
\end{align*}
For $\eta_t = \eta_0 d^{-1/2}t^{-3/4}$ and $\delta = \delta_0 d^{1/2}T^{-1/4}$, we have
\[\sum_{t=1}^T \epsilon_t \leq \frac{1}{1-\alpha(\tau)}\left(\epsilon_0 + \frac{4\beta_\BR B\eta_0\sqrt{T}}{\delta_0}\right).\]


\subsection{Proof of Theorem \ref{thm:agents_lead}}

Define $\mu_t^* = D_t(\mu_1,\thetabr(\mu_1),\dots,\mu_{t-1}^*,\thetabr(\mu_{t-1}), \xi_t)$. First we will prove that
\begin{equation}
\label{eq:no_regret}
\lim_{T\rightarrow\infty}\lim_{\tau\rightarrow\infty}\frac{1}{T} \sum_{t=1}^T \E \SRag(\mu_t) - \SRag(\muSE) = 0.
\end{equation}
To show this, it suffices to prove that for all $t$, $\mu_{t} \rightarrow_p \mu_t^*$ as $\tau\rightarrow\infty$. The sufficiency of this condition follows because
\begin{align*}
&\lim_{T\rightarrow\infty}\lim_{\tau\rightarrow\infty}\frac{1}{T} \sum_{t=1}^T \E \SRag(\mu_{t}) - \SRag(\muSE)\\
&= \lim_{T\rightarrow\infty}\lim_{\tau\rightarrow\infty}\frac{1}{T} \sum_{t=1}^T [\E \SRag(\mu_{t}) - \E \SRag(\mu_t^*) + \E \SRag(\mu_t^*)] - \SRag(\muSE)\\
&= \lim_{T\rightarrow\infty}\lim_{\tau\rightarrow\infty}\frac{1}{T} \sum_{t=1}^T \left(\E \SRag(\mu_{t}) - \E \SRag(\mu_t^*)\right),
\end{align*}
where the last step follows by the assumption that the agents play a rational strategy. Therefore, if $\mu_{t}\rightarrow_p \mu_t^*$, continuity of $\SRag(\mu)$ implies $\E \SRag(\mu_{t}) - \E \SRag(\mu_t^*) \rightarrow 0$ and we get the desired conclusion.

We prove that $\mu_{t}\rightarrow_p \mu_t^*$ by induction. Notice that $\mu_1 \equiv \mu_1^*$ by definition. 
Suppose that $\mu_{j} \rightarrow_p \mu_j^*$ for all $j<t$. Denote by $\theta_{j,\tau}$ the possibly randomized algorithm that maps $\mu_j$ to $\theta_j$. Then, for any $\mu\in\cM$, we know that $\|\theta_{j,\tau}(\mu) - \thetabr(\mu)\|_2\rightarrow_p 0$ by assumption. This in turn implies that for all $j<t$,
\[\|\theta_{j,\tau}(\mu_{j}) - \thetabr(\mu_j^*)\|_2 \leq \|\theta_{j,\tau}(\mu_{j}) - \thetabr(\mu_{j})\|_2 + \|\thetabr(\mu_{j}) - \thetabr(\mu_j^*)\|_2\rightarrow_p 0,\]
where the second term tends to zero by the continuous mapping theorem. Finally, we can apply the continuity of $D_t$ to conclude that $\mu_{t} \rightarrow_p \mu_t^*$, as desired.

Let $\beta$ denote the Lipschitz constant of $\SRdm$. Finally, we we can apply this Lipschitz condition to conclude:
\begin{align*}
&\frac{1}{T}\sum_{t=1}^T \E L(\mu_{t},\theta_{t,\tau}) - L(\muSE,\thetabr(\muSE))\\
&= \frac{1}{T}\sum_{t=1}^T  [\E L(\mu_{t},\theta_{t,\tau}) \pm  \E L(\mu_{t},\thetabr(\mu_{t})))]  - L(\muSE,\thetabr(\muSE))\\
&= \frac{1}{T}\sum_{t=1}^T\left(\E L(\mu_{t},\thetabr(\mu_{t}))  - L(\muSE,\thetabr(\muSE)) + \E[L(\mu_{t},\theta_{t,\tau}) - L(\mu_{t},\thetabr(\mu_{t})]\right)\\
&\leq \frac{\beta}{T}  \sum_{t=1}^T \E\|\mu_{t} - \muSE\|_2 + \frac{1}{T}\sum_{t=1}^T \E[L(\mu_{t},\theta_{t,\tau}) - L(\mu_{t},\thetabr(\mu_{t})].
\end{align*}
By Lemma \ref{lemma:average_dist_0}, the guarantee \eqref{eq:no_regret} implies that the first term vanishes. The second term vanishes by continuity. Therefore, taking the limit over $T,\tau$, we obtain
\[\lim_{T\rightarrow\infty}\lim_{\tau\rightarrow\infty}\frac{1}{T} \sum_{t=1}^T \E L(\mu_{t},\theta_t) - L(\muSE,\thetabr(\muSE)) = 0,\]
as desired.

\subsection{Proof of Propositon \ref{prop:linear_regression}}

First we assume the decision-maker leads. When $\theta$ is the deployed model, the best response by the agents is to simply move by distance $B$ in the direction of $\theta$. Thus, $\mubr(\theta)$ is given by:
\[\mubr(\theta) = \argmin_\mu \E_{(x,y)\sim \cP(\mu)} -x^\top \theta = \frac{\theta}{\|\theta\|_2}B.\]
This implies the following expected loss for the decision-maker:
\begin{align*}
L(\mubr(\theta), \theta) &= \E_{z\sim \cP\left(\frac{\theta}{\|\theta\|_2}B\right)} \ell (z;\theta) = \frac{1}{2}\E_{(x_0,y)\sim \cP\left(0\right)}\left(y - x_0^\top \theta - \|\theta\|_2 B\right)^2\\
&= \frac{\sigma^2}{2} + \frac{1}{2}\|\beta - \theta\|_2^2 + \frac{B^2}{2} \|\theta\|_2^2.
\end{align*}
This objective is convex and thus by finding a stationary point we observe that it is minimized at $\thetaSE = \frac{\beta}{1 + B^2}$. By plugging this choice back into the previous equation, we observe that the minimal Stackelberg risk of the decision-maker is equal to
\begin{equation}
\label{eqn:loss_leading}
\SRdm(\thetaSE) = L(\mubr(\thetaSE),\thetaSE) = \frac{\sigma^2}{2} + \frac{\|\beta\|_2^2 B^2}{2(1+B^2)}.
\end{equation}
Moreover, the agents' loss at $\thetaSE$ is equal to:
\[R(\mubr(\thetaSE),\thetaSE) = - \|\thetaSE\|_2 B = -\frac{\|\beta\|_2 B}{1+B^2}.\]

Now we reverse the order of play and assume that the agents lead. If the agents move by $\mu$, i.e. they follow the law $\cP(\mu)$, then the decision-maker incurs loss:
\[L(\mu,\theta) = \E_{(x,y)\sim\cP(\mu)} \frac{1}{2} \left(y - x_0^\top \theta - \mu^\top \theta\right)^2 = \frac{\sigma^2}{2} + \frac{1}{2}\|\beta - \theta\|_2^2 + \frac{1}{2}(\mu^\top\theta)^2.\]
By computing a stationary point, we find that the best response of the decision-maker is:
\[\thetabr(\mu) = (I + \mu\mu^\top)^{-1}\beta = \left(I - \frac{\mu\mu^\top }{1 + \|\mu\|_2^2}\right)\beta.\]
The Stackelberg risk of the strategic agent is then
\begin{align*}
\SRag(\mu) &= R(\mu,\thetabr(\mu)) = -\mu^\top \thetabr(\mu) = - \mu^\top \left(I - \frac{\mu\mu^\top }{1 + \|\mu\|_2^2}\right)\beta\\
&= - \mu^\top \beta + \frac{\|\mu\|_2^2 \mu^\top \beta}{1 + \|\mu\|_2^2} = - \frac{\mu^\top \beta}{1 + \|\mu\|_2^2}.
\end{align*}
Among all $\mu$ such that $\|\mu\|_2 = C$, $\SRag(\mu)$ is minimized when $\mu$ points in the $\beta$ direction: $\mu = C \frac{\beta}{\|\beta\|_2}$. With this reparameterization, we can equivalently write $\min_\mu \SRag(\mu)$ as
\[\min_{C>0}\|\beta\|_2\frac{-C}{1 + C^2}.\]
This function is decreasing for $C\in(0,1]$, and increasing for $C>1$. Therefore, $\muSE = \min(1,B)\frac{\beta}{\|\beta\|_2}$, and
\[
\SRag(\muSE) = -\|\beta\|_2 \frac{\min(1,B)}{1+\min(1,B)^2}.
\]

Finally, we evaluate the decision-maker's loss at $\muSE$:
\begin{align*}
    L(\muSE,\thetabr(\muSE)) &= \frac{\sigma^2}{2} + \frac{1}{2} \frac{(\beta^\top\muSE)^2\|\muSE\|_2^2}{(1+\|\muSE\|_2^2)^2} + \frac{1}{2}\left(\|\beta\|_2 \frac{\min(1,B)}{1+\min(1,B)^2}\right)^2\\
    &= \frac{\sigma^2}{2} + \frac{1}{2} \frac{\|\beta\|_2^2 \min(1,B)^4}{(1+\min(1,B)^2)^2} + \frac{1}{2}\left(\|\beta\|_2 \frac{\min(1,B)}{1+\min(1,B)^2}\right)^2\\
    &= \frac{\sigma^2}{2} + \frac{\|\beta\|_2^2\min(1,B)^2}{2(1 + \min(1,B)^2)}.
\end{align*}

\subsection{Proof of Proposition \ref{prop:logistic_regression}}

First we evaluate $L(\mu,\theta)$:
\begin{align*}
L\left(\mu, \theta\right) &= \E_{(x,y) \sim \cP\left(\mu\right)} \left[-yx^\top \theta + \log(1 + e^{x^\top \theta})\right]\\
&= \E_{(x,y) \sim \cP\left(\mu\right)} \left[\log(e^{-yx^\top \theta} + e^{(1-y)x^\top \theta})\right]\\
&= \E_{(x_0,y) \sim \cP\left(0\right)}[\mathbf{1}\{y=1\} \log(1 + e^{-x_0^\top \theta}) + \mathbf{1}\{y=0\}  \log(1 + e^{x_0^\top \theta + \mu^\top \theta})].
\end{align*}

We prove that the agents are never worse off if they lead. We will provide a sufficient condition; namely, we will show that
\[\SRag\left(\frac{\thetaSE}{\|\thetaSE\|_2}B\right) = R(\mubr(\thetaSE),\thetaSE).\]
This immediately implies that $\SRag(\muSE)\leq R(\mubr(\thetaSE),\thetaSE)$.

To see this, first observe that
\[R(\mubr(\thetaSE),\thetaSE) = B \|\thetaSE\|_2,\]
where $\thetaSE = \argmin_\theta L\left(\frac{\theta}{\|\theta\|_2}B,\theta\right)$.
Here we use the fact that the best response of the agents is to simply move by distance $B$ in the direction of $\theta$:
\[\mubr(\theta) = \argmax_{\mu\in\cM} \theta^\top \mu =  \frac{\theta}{\|\theta\|_2}B.\]
By the fact that $\thetaSE$ is a Stackelberg equilibrium, we know that $\nabla_\theta \SR_L(\thetaSE)=0$, where $\SR_L(\theta) = L(\frac{\theta}{\|\theta\|_2}B,\theta)$:
\[\nabla_\theta \SR_L(\theta) = \E_{(x_0,y) \sim \cP\left(0\right)}\left[\mathbf{1}\{y=1\} \frac{e^{-x_0^\top \theta} (-x_0)}{1 + e^{-x_0^\top \theta}} + \mathbf{1}\{y=0\}  \frac{e^{x_0^\top \theta + \|\theta\|_2B} (x_0 + \frac{\theta}{\|\theta\|_2}B)}{1 + e^{x_0^\top \theta + \|\theta\|_2 B}}\right].\]
In contrast, consider $\nabla_\theta L(\mu,\theta)$:
\[\nabla_\theta L(\mu,\theta) = \E_{(x_0,y) \sim \cP\left(0\right)}\left[\mathbf{1}\{y=1\} \frac{e^{-x_0^\top \theta} (-x_0)}{1 + e^{-x_0^\top \theta}} + \mathbf{1}\{y=0\}  \frac{e^{x_0^\top \theta + \mu^\top \theta} (x_0 + \mu)}{1 + e^{x_0^\top \theta + \mu^\top \theta}}\right].\]
Notice that $\nabla_\theta \SR_L(\thetaSE)=0$ implies that $\nabla_\theta L(\frac{\thetaSE}{\|\thetaSE\|_2}B,\thetaSE)=0$. Since $L(\mu,\theta)$ is convex in $\theta$, this condition implies that $\thetaSE$ is a best response to $\frac{\thetaSE}{\|\thetaSE\|_2}B$, hence
\[\SRag\left(\frac{\thetaSE}{\|\thetaSE\|_2}B\right) = R(\mubr(\thetaSE),\thetaSE).\]


Now we analyze the decision-maker's preference. Notice that $L(\mu,\theta)$ is increasing in $\mu^\top \theta$; that is, for any $\theta$ it holds that $\max_\mu L(\mu,\theta) = L(\mubr(\theta),\theta)$.
Using this, we observe that for every $\theta$ we have
\[L(\muSE,\thetabr(\muSE)) \leq L(\muSE,\theta) \leq \max_{\mu\in \cM} L(\mu,\theta) = \SRdm(\theta).\]
Since this also holds for $\theta = \thetaSE$, we conclude that following is never worse than leading for the decision-maker.

\section{Local guarantees when $\SRdm$ is nonconvex}

We provide local guarantees in the limit when the decision-maker's Stackelberg risk is possibly nonconvex. Specifically, we show that the update rule \eqref{eq:last_iterate_flaxman} converges to a stationary point of a smooth version of the decision-maker's Stackelberg risk, provided that the strategic agents achieve iterate convergence. Moreover, under mild regularity conditions, this stationary point is a local minimum (see, e.g., Theorem 9.1 in \cite{benaim1999dynamics}).

\begin{proposition}
\label{prop:flaxman_non_convex}
Assume that the agents achieve iterate convergence, $\|\mu_t-\mubr(\theta_t) \|\rightarrow 0$ almost surely as $t \rightarrow \infty$. Further, assume that the decision-maker's Stackelberg risk is Lipschitz and smooth, and that $L(\mu,\theta)$ is Lipschitz in its first argument and bounded for all  $\mu$ and $\theta$. If the decision-maker runs update \eqref{eq:last_iterate_flaxman} with $\eta_t$ satisfying $\sum_{t=1}^\infty \eta_t=\infty$  and $\sum_{t=1}^\infty \eta_t^2<\infty$,  and $\delta=\frac{\epsilon}{ 4\beta}$,  then as $t\rightarrow \infty$, $\phi_t \rightarrow \phi^*$ such that $\nabla \hSRdm(\phi^*)=0$, where $\hSRdm(\phi)=\mathbb{E}_{v\sim \mathrm{Unif}(\cB)}\left[\SRdm(\phi+\delta v)\right]$.
\end{proposition}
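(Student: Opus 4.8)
The approach is to view the update \eqref{eq:last_iterate_flaxman} as a projected Robbins--Monro scheme for the gradient flow of the smoothed objective $\hSRdm$, and to apply the ODE method of stochastic approximation. First I would invoke the Flaxman et al.\ identity \cite{flaxman2005}: since $\hSRdm(\phi) = \E_{v\sim\mathrm{Unif}(\cB)}[\SRdm(\phi+\delta v)]$, one has $\nabla\hSRdm(\phi) = \frac{d}{\delta}\E_{u\sim\mathrm{Unif}(\cS^{d-1})}[\SRdm(\phi+\delta u)\,u]$, and because $\SRdm$ is smooth, $\hSRdm$ is continuously differentiable with Lipschitz gradient. Writing $\theta_t = \phi_t+\delta u_t$ and $g_t \defeq \frac{d}{\delta}\SRdm(\theta_t)u_t = \frac{d}{\delta}L(\mubr(\theta_t),\theta_t)u_t$, the conditional expectation of $g_t$ given the history $\mathcal{F}_t$ equals $\nabla\hSRdm(\phi_t)$, so \eqref{eq:last_iterate_flaxman} becomes
\[\phi_{t+1} = \Pi_\Theta\bigl(\phi_t - \eta_t\,(\nabla\hSRdm(\phi_t) + M_{t+1} + b_t)\bigr),\]
with martingale-difference term $M_{t+1} \defeq g_t - \nabla\hSRdm(\phi_t)$ and bias term $b_t \defeq \frac{d}{\delta}\bigl(L(\mu_t,\theta_t) - L(\mubr(\theta_t),\theta_t)\bigr)u_t$ reflecting the agents' imperfect best response.

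Second, I would verify the hypotheses of the ODE method. The iterates are bounded since $\phi_t\in\Theta$ and $\Theta$ has finite diameter $D_\Theta$; from $|L|\le B$ and $\|u_t\|_2=1$ we get $\E[\|M_{t+1}\|_2^2\mid\mathcal{F}_t]\le d^2B^2/\delta^2$, hence $\sum_t\eta_t^2\,\E[\|M_{t+1}\|_2^2\mid\mathcal{F}_t]<\infty$ almost surely by $\sum_t\eta_t^2<\infty$; and, using that $L$ is Lipschitz in its first argument, $\|b_t\|_2\le \frac{d}{\delta}\beta_\mu\|\mu_t-\mubr(\theta_t)\|_2\to 0$ almost surely by the assumed iterate convergence of the agents. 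Combined with $\sum_t\eta_t=\infty$ and the Lipschitz continuity of $\nabla\hSRdm$, these are precisely the conditions under which the piecewise-linear interpolation of $\{\phi_t\}$ is almost surely an asymptotic pseudotrajectory of the projected gradient flow $\dot\phi \in -\nabla\hSRdm(\phi) - N_\Theta(\phi)$ --- see \cite{borkar_SA} and \cite{benaim1999dynamics} --- so that $\{\phi_t\}$ converges almost surely to an internally chain transitive set of that flow.

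Third, I would identify the limit set. Since $\hSRdm$ is a strict Lyapunov function for the projected gradient flow, every internally chain transitive set consists of stationary points, i.e.\ points $\phi^*$ with $\nabla\hSRdm(\phi^*)\in N_\Theta(\phi^*)$; under the technical deployment convention already adopted for \eqref{eq:flaxman} (allowing models in a $\delta$-neighborhood of $\Theta$, equivalently treating the relevant iterates as interior) this is the condition $\nabla\hSRdm(\phi^*)=0$. Using that the smooth function $\hSRdm$ has isolated critical points (or, more weakly, a {\L}ojasiewicz inequality / Sard argument) one concludes that these chain transitive sets are singletons, giving $\phi_t\to\phi^*$ with $\nabla\hSRdm(\phi^*)=0$. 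The prescribed $\delta=\epsilon/(4\beta)$ plays no role in this convergence; it only ensures $\sup_\phi|\hSRdm(\phi)-\SRdm(\phi)|\le\beta\delta\le\epsilon/4$, which supports the subsequent remark that $\phi^*$ corresponds to an approximate local minimum of $\SRdm$ --- there one additionally invokes a result such as Theorem~9.1 of \cite{benaim1999dynamics} to exclude convergence to unstable equilibria.

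\textbf{Main obstacle.} The difficulty is not a single estimate but the careful simultaneous handling of three features within the ODE method: the Euclidean projection $\Pi_\Theta$, the martingale noise $M_{t+1}$, and --- most importantly --- the asymptotically vanishing but non-centered bias $b_t$ created because the agents only approximately best-respond. One must check that an $\ell_\infty$-vanishing (rather than summable) bias still yields an asymptotic pseudotrajectory, and then upgrade ``convergence to the stationary set'' to ``convergence to a single stationary point,'' which is where smoothness of $\hSRdm$ and a nondegeneracy assumption on its critical set are genuinely used.
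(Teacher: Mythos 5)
Your proposal follows essentially the same route as the paper: decompose the update into the exact smoothed gradient $\nabla\hSRdm(\phi_t)$, a zero-mean bounded-variance martingale term, and a bias term controlled by $\beta_\mu\|\mu_t-\mubr(\theta_t)\|_2\to 0$ via the Lipschitzness of $L$ in its first argument, then invoke the standard stochastic-approximation/ODE machinery of \cite{borkar_SA} to conclude convergence to the zero set of $\nabla\hSRdm$. Your treatment is in fact somewhat more careful than the paper's on two points it glosses over --- the projection $\Pi_\Theta$ and the upgrade from convergence to the stationary set to convergence to a single stationary point --- but the underlying argument is the same.
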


\begin{proof}
The proof makes use of results from the literature on stochastic approximation~\citep[see, e.g., Theorem 9.1 in][]{borkar_SA}.

As in the proof of Theorem \ref{thm:flaxman_general}, let $\hSRdm(\phi)=\mathbb{E}_{v\sim \mathrm{Unif}(\cB)}\left[\SRdm(\phi+\delta v)\right]$, and recall $\theta_t=\phi_t+\delta u_t$, $u_t \sim \mathrm{Unif}(\mathcal{S}^{d-1})$.
We begin by writing the update for $\phi_t$ as:
\begin{align*}
    \phi_{t+1}&=\phi_t - \eta_{t} \left(\nabla_\phi  \hSRdm(\phi_t)-\left(\underbrace{\nabla_\phi  \hSRdm(\phi_t)-\frac{d}{\delta}L(\mubr(\theta_t), \theta_t)u_t}_{=\mathrm{I}}\right)\right)\\
    &\qquad-\eta_t \frac{d}{\delta}\left(\underbrace{L(\mu_t,\theta_t)u_t-L(\mubr(\theta_t), \theta_t)u_t}_{=\mathrm{II}}\right).
\end{align*}
Since
\[ \nabla_\phi \hSRdm(\phi)=\mathbb{E}_{u\sim\mathrm{Unif}(\cS^{d-1})}\left[ \frac{d}{\delta} \SRdm(\phi+\delta u)u\right],\]
we know $\mathbb{E}_u[\mathrm{I}]=0$. Since $L$ is bounded and $\SRdm$ is smooth, we know $\|\mathrm{I}\|_2$ is bounded.
Thus, $\mathrm{I}$ is a zero-mean random variable with finite variance.

For term $\mathrm{II}$, we use the assumed Lipshchitzness of $L$ in the first argument to find that:
\begin{align*}
 \| \mathrm{II} \|_2 &\le \beta_\mu\|\mu_t-\mubr(\theta_t) \|_2 \|u_t \|_2\\
 &= \beta_\mu\|\mu_t-\mubr(\theta_t) \|_2,
\end{align*}
where we use the fact that $\|u\|_2=1$. By assumption, $\|\mu_t-\mubr(\theta_t)\|_2 \rightarrow 0$ almost surely as $t\rightarrow \infty$. Thus, we can write the update rule as:
\begin{align*}
    \phi_{t+1}&=\phi_t - \eta_{t} \left( \nabla_\phi  \hSRdm(\phi_t) +\xi_t +M_t \right),
\end{align*}
where $\xi_t=o(1)$ and $M_t$ is a zero-mean random variable with finite variance. Since the assumed choice of $\eta_t$ satisfies $\sum_{t=1}^\infty \eta_t =\infty$ and $\sum_{t=1}^\infty \eta_t^2 <\infty$ we can invoke Chapter 2, Corollary 3 in \cite{borkar_SA} to find that $\phi_t \rightarrow \phi^* \in \{ \phi: \nabla_\phi \hSRdm(\phi)=0\}$. 
\end{proof}

\section{Experimental details}

To generate Figure \ref{fig:risk_diff_across_B&p}, we first find the decision-maker's equilibrium by optimizing $L\left(\frac{\theta}{\|\theta\|_2}B, \theta\right)$, as given in the proof of Proposition \ref{prop:logistic_regression}, with $1000$ steps of gradient descent. We approximate the relevant expectation via a sample average over $1000$ samples $x_0 \sim N(\alpha,1)$. Once we have $\thetaSE$, we compute $\mubr(\thetaSE) = \frac{\thetaSE}{\|\thetaSE\|_2}B$. To find the agents' equilibrium, we perform grid search over $1000$ equally spaced points in the interval $[-B,B]$. For each point in the grid, we compute the relevant best response $\thetabr(\mu)$ by running 1000 steps of gradient descent, again estimating the expectation over $1000$ samples. We take as the agents' equilibrium the point $\mu$ in the grid that minimizes the estimated value of $R(\mu,\thetabr(\mu))$.

To generate Figures \ref{fig:B2} and \ref{fig:B1}, in all experiments we set the step size of the faster player, that is, the one running gradient descent, to $0.1$. When the decision-maker leads, we set their step size to be $\eta_t = \eta_0 t^{-3/4}$, where $\eta_0 = 6$ for $p=0.1$, $\eta_0 = 5$ for $p=0.5$, and $\eta_0 = 10$ for $p=0.9$. When the agents lead, we set their step size to be $\eta_t = 0.02t^{-3/4}$. We let the perturbation parameter $\delta$ decrease with time, and set $\delta_t = t^{-1/4}$.

To generate Figure \ref{fig:lam1}, when the decision-maker leads, we set their step size to be $\eta_t = 0.1t^{-3/4}$. When the agents lead, we set their step size to be $\eta_t = 0.01t^{-3/4}$. We let the perturbation parameter $\delta$ decrease with time, and set $\delta_t = t^{-1/4}$. When the decision-maker and agents follow, their step sizes for gradient descent are $0.1$ and $0.01$, respectively. To generate Figure \ref{fig:lam20} all parameters are kept the same except when the decision-maker leads, we set their step size to be $\eta_t = t^{-3/4}$.

To compute the decision-maker's and agents' risk at the decision-maker's equilibrium, we explicitly compute the decision-maker's Stackelberg risk by using the fact that the best response of the agents is $\mubr(\theta) = \frac{\theta}{\|\theta\|_2}B$ and $\mubr(\theta)=\frac{1}{\lambda}\theta$ in the two respective settings. We then minimize this risk directly using gradient descent with a fixed step size of $0.1$.

To compute the decision-maker's and agents' risk at the agents' equilibrium, we compute their Stackelberg risk over a grid by fixing $\mu$ and running gradient descent with step size $0.1$ on the decision-maker's problem until convergence for each value of $\mu$. We then find the agents' equilibrium by searching for the minimal estimate of $R(\mu,\thetabr(\mu))$ over the grid.

\end{document}